\documentclass[a4paper]{article}
\usepackage{algorithmic,algorithm}
\usepackage{amssymb, amsmath, amsfonts, amsthm}
\usepackage{dsfont, enumerate, gensymb}
\usepackage{url}
\usepackage{times}
\usepackage{graphics, graphicx, subfigure}
\usepackage[T1]{fontenc}
\usepackage[frenchb,english]{babel}
\usepackage[a4paper]{geometry}

\renewcommand{\mathbb}[1]{\mathds{#1}}
\newcommand{\mathbbm}[1]{\mathds{#1}}

\newtheorem{theorem}{{Theorem}} 
\newtheorem{lemma}{{Lemma}} 
\newtheorem{prop}{{Proposition}}

\newtheorem{assumption}{{Assumption}}
\newtheorem{remark}{{Remark}}

\newcommand{\EE}[1]{\mathbb{E}\left[{#1}\right]}
\newcommand{\PP}[1]{\mathbb{P}\left[{#1}\right]}
\newcommand{\DP}[2]{\left\langle #1, #2 \right\rangle}
\newcommand{\diag}[1]{\mbox{diag}(#1)}

\begin{document}
\title{Distributed User Profiling via Spectral Methods}
\author{
\parbox{.4\textwidth}
{
\begin{center}
Dan-Cristian Tomozei\\
EPFL, Switzerland \\
{dan-cristian.tomozei@epfl.ch}
\end{center}
}
\parbox{.4\textwidth}
{
\begin{center}
Laurent Massouli\'e\\
INRIA, France \\
{laurent.massoulie@inria.fr}
\end{center}
}
}

\date{}
\maketitle
\begin{abstract}
User profiling is a useful primitive for constructing personalised services, such as content recommendation. In the present paper we investigate the feasibility of user profiling in a distributed setting, with no central authority and only local information exchanges between users. We compute a {\em profile vector} for each user (i.e., a low-dimensional vector that characterises her taste) via spectral transformation of observed user-produced ratings for items. Our two main contributions follow:
\begin{enumerate}[i)]
\item \label{it1} We consider a low-rank probabilistic model of user taste. More specifically, we consider that users and items are partitioned in a constant number of classes, such that users and items within the same class are statistically identical. We prove that without prior knowledge of the compositions of the classes, based solely on few random observed ratings (namely $O(N\log N)$ such ratings for $N$ users), we can predict user preference with high probability for unrated items by running a local vote among users with similar profile vectors. In addition, we provide empirical evaluations characterising the way in which spectral profiling performance depends on the dimension of the profile space. Such evaluations are performed on a data set of real user ratings provided by Netflix.
% analyse a widely used spectral clustering technique, and prove its ability to recover unknown user profiles. In particular, we establish that the recovery is successful with only few measures of affinity between users (namely $O(N\log(N))$ such measures for $N$ users), thereby improving previously known results on spectral clustering. 
\item We develop distributed algorithms which provably achieve an embedding of users into a low-dimensional space, based on spectral transformation. These involve simple message passing among users, and provably converge to the desired embedding. Our method essentially relies on a novel combination of gossiping and the algorithm proposed by Oja and Karhunen. 
%% \item Finally, we consider simple recommendation schemes based on the proposed profiling techniques. We evaluate their performance on traces from the Netflix prize, and observe their ability to provide targeted recommendations based on local, distributed information exchanges.
\end{enumerate}

\paragraph{Keywords} Spectral Decomposition Random Matrix, Message Passing, Distributed Spectral Embedding, Distributed Recommendation System
\end{abstract}

\thispagestyle{empty}

\pagebreak

\section{Introduction}
\label{sec:intro}
Recommendation systems have attracted much interest lately, mostly because of their relevance to core businesses of several major companies (e.g. Amazon, Netflix, Yahoo) who  offer large catalogues of products to a vast user base. While the advertisement of highly popular items is straightforward, a significant portion of business stems from sales of only mildly popular items. The latter cannot be advertised indiscriminately, and must be recommended to the ``right'' users, through targeted recommendations. Such companies dispose of large storage and computational resources which enable a centralised computation of recommendations. 

In this paper we take a different perspective on the problem of recommendation. Namely, we aim to develop strategies suited to {\em distributed} operation, where the burden of recommendation is not offloaded to the server, but is rather shared among the users. More specifically, we propose the following two-stage approach for generating recommendations:
\begin{itemize}
\item In the first stage, distributed algorithms assign coordinates (or {\em profiles}) to the users within a certain {\em profile space}, such that proximity in this space translates to proximity of user taste for content. We say that such algorithms perform {\em user profiling}.
\item In the second stage, recommendations are obtained via simple and distributed algorithms which rely on the primitive of user profiling. We thereby avoid the need for complex machine learning techniques.
\end{itemize}
The performance of such an approach will depend heavily on the properties of the considered embedding of users in the profile space. For this reason, the focus in this paper is on the first stage of the process, i.e., {\em user profiling}. Namely, we argue that {\em spectral} profiling techniques retrieve hidden structure.

%\subsection{Collaborative filtering}
The techniques employed in a centralised setting for generating content recommendation are widely known under the generic name of ``collaborative filtering''. They are typically implemented by a provider who wishes to offer a recommendation service to a large customer base. In such a setting, the information requested from the customers (or users) is typically related both to their identity (via the registration procedure) and to their taste (via the opinions they express regarding the items).

It is not clear to which extent identity information characterises user taste. Moreover, the nature of such information gives rise to privacy concerns. On the contrary, the opinions that users express about items constitute the truly relevant data for solving the problem. For this reason, we advocate a purely agnostic approach to recommending content, which does not use information about the real identity of users, or the nature of content. 

Opinions are expressed in the form of ratings assigned by a user to the items she has already purchased. Ratings characterise the satisfaction of a user with respect to a specific item. They are discrete and range from a lowest to a highest value (e.g., number of stars). In particular, the mere fact that a user has consumed or not a specific item can be regarded as a {\em binary rating}. In this paper we consider the latter form of rating. 

Since the number of items on offer from the provider is overwhelmingly large, the vast majority of users only consume a small fraction of items. Hence, for a typical user, only a small number of ratings are known. In the case of binary ratings, if an item has not been consumed it does not necessarily follow that the user dislikes it. It is possible that the user is simply unaware of the item's existence. Hence, in this case we cannot distinguish between missing ratings and disliked items.%, but we note that the extension to more general rating systems is straightforward.

A recent illustration of the possible machine learning techniques and
of the corresponding performance comes from the Netflix prize
competition~\cite{nfp}. The goal of the competition was to design an
algorithm that, when trained on a data set made publicly available by
Netflix, would manage to improve prediction accuracy by $10\%$
(measured via Root Mean Squared Error) compared to the proprietary
Cinematch algorithm. The designers of such an
algorithm~\cite{netflix-win} were awarded a prize of \$1M three years
after the start of the competition. It is important to note that the
last two years of the three were spent trying to improve the gain in
prediction from $8.42\%$ to $10.04\%$.
%The winning entry had a RMSE of $0.8567$.

The strenuous advancement of the Netflix prize suggests the existence
of an important obstacle in the way of achieving high prediction
accuracy. A possible explanation is given in the study conducted by
Amatriain et al.~\cite{natnoise}, which showed that when presented
with a movie title several times, users provide inconsistent
ratings. Hence, there exists an implicit noise in any collection of
ratings due to the fact that human taste is variable and not easily
quantified. The authors~\cite{natnoise} used the RMSE to characterise
the distance between two sets of ratings assigned by the same users to
the same movies. They found RMSE values between $0.55$ and $0.63$. It
is arguable whether these specific values can be compared to the
target RMSE value $0.8563$ of Netflix (the considered movies in the
conducted study~\cite{natnoise} are indeed a subset of the ones in the
Netflix data set, but their number is significantly smaller, the user
base is different and also much smaller, etc.). However what they
suggest is that there exists a RMSE threshold (of the order of the
Netflix target RMSE) that cannot be overcome by any recommendation
algorithm, seeing as users themselves do not necessarily provide
consistent ratings. Since user preference has a significant random
component, parameters of a probabilistic model are the best suited to
characterize user taste. Throughout this paper we make the following
\begin{assumption}
\label{ass:proba}
The taste of each user is characterised by a certain probability
distribution defined on the set of all possible ratings for the set of
items (which includes the possibility that an item is ``not
rated''). A user's observed ratings are obtained by sampling her
corresponding multi-dimensional distribution.
\end{assumption}

We now restrict ourselves to the binary rating model, where we do not
distingush between an item that was not rated and a ``disliked'' item
(i.e., $0$-rated items). We describe a natural way of representing the
observed purchases. Denote the set of users by $\mathcal U$ and the
set of items by $\mathcal F$.
%A natural representation for the observed binary rating information is as follows.
Consider a rectangular matrix $S\in \{0,1\}^{|\mathcal U|\times
  |\mathcal F|}$ which we call {\em the rating matrix} (in the
literature it is also referred to as the ``term-document''
matrix). Each row corresponds to a user and each column corresponds to
an item. An entry $S_{ui}$ corresponds to the user-item pair $(u,i)\in
\mathcal U\times \mathcal F$. The entry holds $1$ if user $u$ has
purchased item $i$, and $0$ otherwise. As previously stated, most of
the zero entries in the rating matrix correspond to cases in which a
user has not considered purchasing a specific item.

By Assumption~\ref{ass:proba}, each row of the rating matrix is a
realisation of a $\{0,1\}$ random $|\mathcal F|$-dimensional
vector. Relying only on matrix $S$, we need to assign profiles to the
users, such that users with similar taste have similar profiles.
%For zero entries $S_{ui}$, we wish to predict whether the corresponding user $u$ would appreciate the corresponding item $i$.

The instances of the problem we consider are extremely large, it is
not uncommon to have a user base $\mathcal U$ of the order of millions
and a catalogue of items $\mathcal F$ of the order of tens of
thousands. In the case of binary ratings, simply representing the
probability distribution in Assumption~\ref{ass:proba} for a single
user requires an exponential amount of memory $2^{|\mathcal F|}$. We
are thus constrained to consider very simple approximations of such
probability laws, for the sake of computational tractability.

Like most proposed models of user taste found in the literature
(e.g.,~\cite{montanari}), we consider a low-rank model of user taste.
We stress that our model is {\em probabilistic}, as opposed to the
deterministic one of~\cite{montanari}. More specifically, we make the
following
\begin{assumption}
\label{ass:lowr}
Each entry $S_{ui}$ of the rating matrix is given by an independent
Bernoulli random variable of parameter $\bar S_{ui}$. The matrix $\bar
S = (\bar S_{ui})$ has rank $K$, where $K\ll |\mathcal U|$ and $K\ll
|\mathcal F|$.
\end{assumption}

In Section~\ref{sec:theory} we propose a user profiling technique
based on the Singular Value Decomposition of matrix $S$. For a
probabilistic model of user taste satisfying
Assumption~\ref{ass:lowr}, and under further weak statistical
assumptions, we prove that a simple voting scheme among users with
similar profiles manages to produce accurate recommendations for most
of the items with high probability. Furthermore, we use actual movie
ratings to compute the profiles of anonymous users of the Netflix
system. We find that users with similar profiles have indeed similar
taste in movies.

%\subsection{Distributed Profiling}
Motivated by the ability to recover hidden structure of the spectral techniques,
 in the second part of this paper (Section~\ref{sec:drw}), we design a distributed algorithm that computes individual spectral profiles based on local exchanges among users. 
%In Section~\ref{sec:distro} we propose a distributed local stochastic approximation algorithm for performing spectral user profiling.
We prove almost sure convergence of the algorithm and provide evaluations on a synthetic trace. We conclude in Section~\ref{sec:concl}.

\section{Related Work}
\label{sec:rwork}

%The result in Section~\ref{sec:theory} characterises performance of a spectral clustering algorithm when input data is generated according to a probabilistic model. We show that with high probability, the algorithm manages to recover hidden structure contained in the unknown parameters of the model.

Keshavan et al.~\cite{montanari} consider the problem of low rank matrix completion. They show that for a constant rank $r = O(1)$ ``well-behaved'' matrix, it is sufficient to have $\Omega(N \log N)$ revealed entries in order to be able to achieve exact matrix reconstruction. For a square matrix, this corresponds to an average degree of $\Omega(\log N)$, just as in our result. The drawback to a direct application of this result to the user taste prediction problem is the fact that the sought matrix is deterministic. Implicitly, it would mean that user ratings are deterministic, and that rating matrices are low rank. In contrast, the models we consider are probabilistic.

In Section~\ref{sec:theory} we consider a low-rank probabilistic model of user taste. Users are partitioned into classes, such that users within the same class are statistically identical. We show (Theorems~\ref{thm:main} and~\ref{thm:rmain}) that the profile vectors corresponding to each user computed via spectral methods are clustered around distinct points corresponding to the classes. In this respect, our results are related to spectral clustering. There is a vast literature on the topic of spectral clustering, of which we now give a brief overview.

Among the most relevant work,  Ng et al.~\cite{jordan}  propose a clustering algorithm based on spectral decomposition. Our results provide a comprehensive analysis by giving conditions under which the underlying partition into classes is retrieved exactly.

In~\cite{hopcroft}, Dasgupta et al. propose an algorithm based on
iterative splitting of groups into two subgroups. In contrast, we
obtain the desired groups in one go. In~\cite{mcsherry}, McSherry
proposes a different clustering method based on projections onto the
column space of the original matrix. In both~\cite{hopcroft}
and~\cite{mcsherry}, the probabilistic model of the observed
similarity matrix is akin to ours (which extends the classical
``planted partition'' model). However, our model is less expressive in
the sense that we do not consider ``mixtures'' of the distributions
that characterise classes of users, and moreover in our case each user
class accounts for a constant fraction of the total number of
users. That said, the required separability conditions are similar,
and we establish our results under far less stringent conditions on
the average degree of the observed matrix. Namely, we require an
average degree of order $\Omega(\log(N))$ while they require an order
of $\Omega(\log(N)^6)$.

The recent paper~\cite{eigenspaces} by Shi et al. discusses rationales for choosing which eigenvectors to use when performing spectral clustering. This issue is to a large extent complementary to the ones we address in this paper. We could rely on~\cite{eigenspaces} to specify which eigenvectors to keep in our profiling context.%, and we could use our techniques in conjunction with theirs.

In Section~\ref{sec:rect} we extend the previous results in the more
general setting of content recommendation, where a sparse so-called
rating matrix (or ``term-document'' matrix) is available. The
literature in this field is extensive, for brevity we mention only a
few significant works:~\cite{ChaudhuriR08,Dasgupta05spectralclustering,KumarK10}.

In Section~\ref{sec:distro} we propose a method for computing the eigenvectors of the adjacency matrix of a graph in a distributed manner. A variant of the method was briefly described in~\cite{poster}. Eigenvector extraction is the object of Oja's algorithm~\cite{oja}. This basic algorithm was refined by Borkar and Meyn~\cite{bmoja}. None of these approaches is distributed however. Our contribution in Section~\ref{sec:distro} consists precisely in augmenting these methods to make them distributed.

A significant contribution towards computing the top $k$ eigenvectors of a symmetric weighted adjacency matrix in a distributed fashion was brought by Kempe and McSherry~\cite{kempe}. The setting is similar to the one we consider in Section~\ref{sec:distro}. The authors give bounds on the required running time of their algorithm. Due to the fact that we explicitly introduce noise in our iterations, obtaining such bounds is more difficult in our case. Both our algorithm and the distributed gossiping algorithm in~\cite{kempe} perform iterations to converge to the desired eigenvectors. In the latter, at each iteration all participating nodes first perform a coordinate-update step followed by an orthonormalisation step (that lasts for a determined number of rounds ensuring a bounded error). By explicitly separating the two steps in each iteration, synchronised time becomes a necessary assumption. In contrast, our approach facilitates an asynchronous implementation, since the coordinate-update and the orthonormalisation steps are performed simultaneously with different gains (and thus on different time scales). Furthermore, the algorithm we propose in Section~\ref{sec:distro} uses few elementary computations at each node, whereas in~\cite{kempe} at every iteration each node needs to perform a Cholesky factorisation.

A similar approach to the one presented in this paper has been taken in a recent publication~\cite{gpca}. The authors aim to determine the eigenvectors of a deterministic matrix based on random sparse observations. They derive useful bounds on convergence time. However, the gossiping stage in their proposed algorithm is treated as a ``black box''. We explicitly construct an algorithm that incorporates two stages: gossiping, performed on a faster time scale, and Oja's method, performed on a slower time scale. Moreover, we explicitly determine multiple eigenvectors, whereas the authors of ~\cite{gpca} focus on determining a single eigenvector, and argue that the extension can be achieved. Finally, we propose an asynchronous algorithm. We show on synthetic data that the latter determines the desired eigenvectors.

\section{Spectral Recovery of Probabilistic Taste}
\label{sec:theory}
We begin by analysing a simple setting in which the observations consist of measures of similarity between users. We prove that a profiling technique based on the spectral decomposition of the square matrix regrouping the observed similarities between pairs of users successfully recovers hidden structure.

We apply these findings to the case in which ratings of items by users are observed. We show that a simple distributed voting algorithm provides asymptotically accurate predictions for most items.

Finally, we observe the benefits of spectral profiling on a real trace.

We make use of notations described in Table~\ref{tab:not}. Unless otherwise indicated, all vectors are column vectors.

\begin{table}[ht]
\begin{center}
\begin{tabular}{c|p{.6\textwidth}}
$A'$ & The transposition of matrix $A$ \\
$x'$ & The transposition of column vector $x$ \\
$e$ & The all-ones column vector \\
$\diag \alpha$ & The $K\times K$ diagonal matrix having the elements on the main diagonal given by the $K$-dimensional vector $\alpha$ \\
$\|y\|_\alpha$ & The $\alpha$-norm of vector $y$, where $(0 < \alpha_k < 1)_{k=1}^K$ and $y$ are column vectors with the same dimension, $\|y\|_\alpha := \sqrt {\sum_k \alpha_k y_k^2}$ \\
$y(k)$, $y_k$ & The $k$-th element of column vector $y$
\end{tabular}
\end{center}
\caption{General notations}
\label{tab:not}
\end{table}

\subsection{Similarity-based Profiling}
\label{sec:sq}
Denote by $N=|\mathcal U|$ the number of users. Let us consider in this first stage that we are given partial observations of user taste similarity in the form of a symmetric matrix 
$$A \in \{0,1\}^{N \times N}.$$
Namely, for any two users $u, v$ the elements $A_{uv} = A_{vu}$ take value $1$ if users $u$ and $v$ have been evaluated as similar, and value $0$ if the users are deemed dissimilar, or if the similarity between the two users has not been evaluated. By convention $A_{uu} = 0$.

We propose the following {\em spectral} representation of users based on these partial similarity observations. For some fixed dimension $L$, extract the \emph{normalised} eigenvectors $x_1,\ldots,x_L$ corresponding to the $L$ largest magnitude eigenvalues of matrix $A$. We define the {\em profile space} as ${\mathbb{R}}^L$. In the profile space, to each user $u$ there corresponds a scaled row vector $\sqrt N z_u'$, where
$$
z_u'=(x_1(u),\ldots,x_L(u)).
$$
We refer to this vector as the {\em profile of user $u$}. The scaling factor $\sqrt N$ is introduced to compensate for the fact that the eigenvectors of the $N \times N$ matrix $A$ are taken of norm $1$. In what follows, we propose a simple probabilistic user taste model for which this spectral representation of users enables us to retrieve hidden structure.

\subsubsection{Statistical Model}
\label{sec:sqmod}
We assume the following probabilistic model of user taste: The $N$ users are partitioned into $K$ classes $C_1, \dots, C_K$, such that users within the same class are statistically identical. We denote the size of class $C_k$, $1\le k\le K$, by $|C_k| = \alpha_kN$, for fixed $\alpha_k > 0$, which are such that $\sum_k \alpha_k = 1$. For any user $u \in \mathcal U$ we denote by $k(u)$ her unique corresponding class.

Each pair of classes $1\le k,\ell \le K$ is characterised by probabilities $b_{k\ell} = b_{\ell k}$ as follows. User $u\in C_k$ and user $v \in C_\ell$ are similar with probability $b_{k\ell}$ and dissimilar with probability $1-b_{k\ell}$. Moreover, for all pairs of users, their similarity is observed with probability $p = \frac \omega N$, where $\omega$ is a parameter of the model. The similarity of unobserved pairs of users is set to $0$ by default. %----- The observed $0-1$ user similarity matrix represents an instance of a square symmetrical random matrix of low-rank expectation. 

Equivalently, for all ordered pairs of users $u < v$, the observed similarities $A_{uv} = A_{vu}$ are the outcome of independent Bernoulli random variables of parameters $pb_{k(u)k(v)}$. As previously stated, the diagonal elements are all null, $A_{uu} = 0$.

Given the observed similarity matrix $A$, without knowledge of the $K\times K$ profile similarity matrix $B = (b_{k\ell})_{k,\ell}$, we wish to recover the partition of users in the unknown classes $C_k$ using their spectral representation. Below we provide sufficient scaling assumptions for which such recovery is possible.

In Table~\ref{tab:sqndx} we summarise the notations we have introduced so far:

\begin{table}[h]
\begin{center}
\begin{tabular}{c|p{.6\textwidth}}
$u,v$ & Indices referring to users \\
$k, \ell$ & User profile indices \\
$k(u)$ & Index of the profile containing user $u$ : $u \in C_{k(u)}$ \\
$\{C_k\}_{k=1}^K$ & Partition of users into $K$ unknown disjoint profiles \\
$\alpha$ & Vector grouping fractions of users per profile: $|C_k|=\alpha_k N$ \\
$B$ & Constant unknown $K\times K$ matrix of probabilities \\
$p = \frac \omega N$ & Probing probability \\
$L$ & Dimension of profile space
\end{tabular}
\end{center}
\caption{Notations and conventions}
\label{tab:sqndx}
\end{table}

\subsubsection{Scaling Assumptions}
Let us describe the dependence of the various parameters of the model on the number of users.

We have made the Assumption~\ref{ass:lowr}, namely that our model is low-rank. Thus, we consider that the number of classes $K$, as well as the fraction $\alpha_k$ of users in each class, and the similarity probability matrix $B = (b_{k\ell})_{k,\ell}$ are constant as the number of users in the system grows.

However, we assume that the probing probability $p$ vanishes as the number of users grows. Namely, $\omega$ goes to infinity slower than $N$, 
\begin{equation}
\label{eq:omega}
\omega \to_N \infty, \ \omega = o(N).
\end{equation}
Hence, each user probes on average a fraction $p = \frac \omega N \to 0$ of users with whom she evaluates taste similarity. Equivalently, matrix $A$ can be regarded as the adjacency matrix of a random graph on the set of users having average degree of $\Theta(\omega)$.

\subsubsection{Hidden Structure Recovery}
Theorem~\ref{thm:main} below states that under mild conditions, for large $N$, the spectral representations $\sqrt N z_u'$ of users are clustered according to their respective classes $C_k$. %In this sense, they are adequate descriptors of user profiles.
Consider the vector
$$\alpha := (\alpha_{k})_{k=1}^{K}.$$
The $\alpha$-norm of a $K$-dimensional vector $t$ is $\|t\|^2_\alpha = \sum_{k} \alpha_{k} t_{k}^2$ (see Table~\ref{tab:not}). Define the following constant matrix
$$
M:=(b_{k\ell}\alpha_{\ell})_{1\le k,\ell\le K}.
$$
Before stating the theorem, we introduce the following conditions:

\begin{subequations}
\label{C14}
\begin{align}
\label{C1}
\parbox[b]{.85\textwidth}{The dimension of the profile space is upper bounded by $L \le \mbox{rank}(M)$.}\\
\label{C2}
\parbox[b]{.85\textwidth}{The $L$ largest magnitude eigenvalues of $M$ have distinct absolute values.}\\
\label{C3}
\parbox[b]{.85\textwidth}{The corresponding eigenvectors normalised under the $\alpha$-norm, $y_1,\ldots,y_{L}$, satisfy: 
$$
t_k'\ne t_\ell', 1\le k\ne \ell\le K,
$$
where $t_k':=(y_1(k),\ldots,y_L(k))$.} \\
\label{C4}
\parbox[b]{.85\textwidth}{$\omega \ge C\log(N)$, for some absolute constant $C$.}
\end{align}
\end{subequations}

\begin{theorem}
\label{thm:main}
Under assumptions \eqref{C14}, with probability $1-o(1)$, a fraction of $1-o(1)$ users $u$ is such that $||\sqrt{N}z_u' -t_{k(u)}'||=o(1)$, where $k(u)$ denotes the class of $u$. That is to say, most users have their (scaled) profile vector close to a fixed vector characteristic of their class.
\end{theorem}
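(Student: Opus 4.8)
The plan is to compare the spectral decomposition of the observed random matrix $A$ with that of its expectation $\bar A := \EE{A}$, and to show that (i) $\bar A$ has a simple block structure whose top eigenvectors are piecewise constant on the classes $C_k$, with the piecewise-constant values given exactly by the $t_k'$; and (ii) the perturbation $A - \bar A$ is small enough in spectral norm that the eigenvectors of $A$ are close (coordinatewise, for most coordinates) to those of $\bar A$. Concretely, writing $\Pi$ for the $N\times K$ class-membership matrix ($\Pi_{uk} = \mathbf 1\{u\in C_k\}$), one has $\bar A = p\,\Pi B \Pi' - p\,\diag{\ldots}$ on the diagonal correction; up to the negligible diagonal term, $\bar A = p\,\Pi B \Pi'$. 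Its nonzero eigenvalues are those of the $K\times K$ matrix $p\,\Pi'\Pi B = pN\,\diag{\alpha} B$, which is similar (via $\diag{\sqrt\alpha}$) to the symmetric matrix $pN\,\diag{\sqrt\alpha} B \diag{\sqrt\alpha}$; one checks that $\diag{\alpha} B$ has the same spectrum as $M' = (b_{k\ell}\alpha_k)'$, hence the same nonzero eigenvalues as $M$ (scaled by $pN = \omega$). This identifies the eigenvalues of $\bar A$ with $\omega\lambda_i(M)$ and, after unwinding the similarity transform, shows that the eigenvector of $\bar A$ for $\omega\lambda_i$ is $u\mapsto c_i\, y_i(k(u))$ for a normalising scalar $c_i$; the scaling by $\sqrt N$ in the definition of $z_u$ is precisely chosen so that $\sqrt N\, x_i(u) \to y_i(k(u))$, using that $y_i$ is normalised in the $\alpha$-norm ($\sum_k \alpha_k y_i(k)^2 = 1$, equivalently $\sum_u (c_i y_i(k(u)))^2 = 1$ with $c_i = 1/\sqrt N$). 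Conditions \eqref{C1}--\eqref{C3} guarantee respectively that the eigenvalues we keep are nonzero, that they are simple (so the eigenvectors are well defined and Davis--Kahan applies with a genuine spectral gap), and that the resulting limit points $t_k'$ are distinct across classes, which is what makes the clustering meaningful.

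For the perturbation step, the key input is a bound of the form $\|A - \bar A\| = O(\sqrt\omega)$ with probability $1-o(1)$, valid precisely in the regime $\omega \ge C\log N$ of \eqref{C4}: this is a standard concentration result for the spectral norm of a random symmetric matrix with independent, bounded, mean-zero entries of variance $O(p)$ (e.g. via the moment method of Füredi--Komlós, or a Bernstein/$\varepsilon$-net argument, or off-the-shelf results on sparse random matrices) — the logarithmic lower bound on $\omega$ is exactly the threshold below which the maximum degree fluctuations cause the spectral norm to blow up, so \eqref{C4} cannot be dispensed with. Given this, since the nonzero eigenvalues of $\bar A$ are of order $\omega$ and the gaps between the $L$ retained ones are also of order $\omega$ (constants depending on $M$ only), the ratio $\|A-\bar A\|/\mathrm{gap} = O(1/\sqrt\omega) = o(1)$, and the Davis--Kahan $\sin\Theta$ theorem yields $\|x_i - \hat x_i\|_2 = o(1)$ for each $i \le L$, where $\hat x_i$ is the corresponding eigenvector of $\bar A$ (with signs aligned). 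Consequently $\sum_u \big(\sqrt N x_i(u) - \sqrt N \hat x_i(u)\big)^2 = N\|x_i - \hat x_i\|_2^2 = o(N)$, so by Markov's inequality the number of users $u$ with $|\sqrt N x_i(u) - \sqrt N \hat x_i(u)|$ not $o(1)$ is $o(N)$; taking a union bound over the finitely many $i\in\{1,\dots,L\}$ and recalling $\sqrt N\hat x_i(u) = y_i(k(u)) = t_{k(u)}'(i)$ gives that all but an $o(N)$ fraction of users satisfy $\|\sqrt N z_u' - t_{k(u)}'\| = o(1)$, which is the claim.

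A few technical points need care. First, the diagonal correction $\bar A_{uu}=0$ versus $p b_{k(u)k(u)}$ contributes a rank-$\le K$ perturbation of operator norm $O(p) = o(1)$, absorbed harmlessly into the perturbation term. Second, the eigenvalue/eigenvector matching must track signs and, in the (excluded by \eqref{C2}) degenerate case, rotations within eigenspaces — \eqref{C2} is what lets us avoid the latter; the sign ambiguity of each $x_i$ is fixed by choosing the sign that best matches $\hat x_i$, which is legitimate since the profiles are only claimed to be clustered, and a global sign flip of $x_i$ just relabels $t_k'(i)\mapsto -t_k'(i)$ consistently. Third, one should confirm that the $L$-th largest-magnitude eigenvalue of $A$ does indeed track the $L$-th largest-magnitude eigenvalue of $\bar A$ rather than getting confused with a spurious eigenvalue from the noise: this again follows from $\|A-\bar A\| = O(\sqrt\omega) \ll \omega$, so all spurious eigenvalues of $A$ are $O(\sqrt\omega)$ while the genuine ones are $\Theta(\omega)$ and well separated. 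The main obstacle is really the first ingredient of the second paragraph — pinning down the sharp $O(\sqrt\omega)$ spectral-norm bound for the sparse random matrix $A - \bar A$ in the near-critical sparsity regime $\omega \asymp \log N$, since naive moment bounds lose logarithmic factors there; everything downstream (spectral geometry of $\bar A$, Davis--Kahan, the Markov argument) is comparatively routine linear algebra once that estimate is in hand.
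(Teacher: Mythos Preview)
Your proposal is correct and follows essentially the same architecture as the paper: block structure of $\bar A$ giving piecewise-constant eigenvectors with values $y_\ell(k(u))/\sqrt N$ (the paper's Lemma~\ref{lemma:dominant}), an $O(\sqrt\omega)$ spectral-norm bound on $A-\bar A$ in the regime $\omega\ge C\log N$ (the paper's Lemma~\ref{lemma:feige}, which invokes Feige--Ofek rather than F\"uredi--Koml\'os or an $\varepsilon$-net, but this is exactly the ``sharp sparse spectral-norm bound'' you flag as the crux), eigenvector perturbation, and a Markov/averaging step to pass from $\ell^2$-closeness of eigenvectors to closeness of most individual coordinates.

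The one substantive difference is in the eigenvector-perturbation step: you invoke Davis--Kahan directly, obtaining $\|x_\ell-\bar x_\ell\|=O(\omega^{-1/2})$, whereas the paper proves a bespoke inductive lemma (Lemma~\ref{lemma:pert}) that yields only $\sin(\widehat{x_\ell,\bar x_\ell})=O(\omega^{-1/4})$, hence $\|x_\ell-\bar x_\ell\|^2=O(\omega^{-1/2})$. Your route is cleaner and in fact sharper by a factor $\omega^{1/4}$; the paper's hand-crafted argument re-derives a weaker Davis--Kahan-type conclusion from first principles. Either bound suffices for the final Markov step (the paper takes threshold $a=\omega^{-1/6}$, which you leave implicit), so the conclusion is unaffected.
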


Each vector $t'_k$ corresponds to a class $C_k$. If two classes $k,\ell$ were such that $t_k = t_\ell=t$, the theorem ensures that the profiles of users of both classes would be grouped around the constant vector $t$ in the profile space. Hence, it would be impossible to distinguish between the users of the two classes based solely on their profile vectors. Condition \eqref{C3} ensures that
$$
\|t_k'- t_\ell'\| = \Omega(1),
$$
thus guaranteeing the ability to distinguish between distinct classes for a large enough number of users.

We prove that eigenvectors corresponding to non-zero eigenvalues of matrix $A$ can be used to recover the hidden classes. Condition \eqref{C1} ensures selection of a suitable dimension $L$. We impose the technical condition \eqref{C2} for presentation ease.

Condition \eqref{C4} gives a lower bound of $\log N$ on order of the required average user neighbourhood size $\omega$. The theorem states that for $\omega$ growing at least as fast as $\log N$, the initial partition in profiles $C_k$ can be recovered with high probability for almost all users. %Note that this recovery is done based on analysing the adjacency matrix $A$, drawn according to independent Bernoulli random variables with parameters given by the probability matrix $pB$. 

%Comments on the technical conditions, etc.
\begin{remark} 
Concerning condition \eqref{C4}, it is plausible that an even lower requirement for $\omega$ (i.e. constant) suffices. Such a case has been explored in the context of bounding the second eigenvalue of the adjacency matrix of a sparse random graph to $O(\sqrt \omega)$, by removing high degree outlier nodes from the graph (see~\cite{feige,Coja-Oghlan}).
\end{remark}

\begin{remark}Other flavours of matrices than the adjacency matrix $A$ could be considered for spectral analysis (e.g. Laplacian matrix, normalised Laplacian matrix). We do not address either of these scenarios in the present work.
\end{remark}

We now give the main steps in the proof of Theorem~\ref{thm:main}. The auxiliary lemmas are proved in the appendix. Consider the matrix
$$\bar A := (pb_{k(u)k(v)})_{u,v},$$
which, according to our model, is equal to the expectation $\mathds E A$ of the partially observed similarity matrix $A$. We can write $A = \bar{A} + Q$, where $Q:=A-\bar{A}$.

The theorem relies on the fact that the block matrix $\bar A$ imposes the eigenvalues and eigenvectors of $A$, while the perturbation matrix $Q$ has little influence therein, as follows from the lemmas below:
\begin{lemma}
\label{lemma:dominant}
The top $L$ largest magnitude eigenvalues of $\bar A$ have distinct absolute values and are order of $\Theta( \omega )$. The normalised eigenvectors $(\bar x_\ell)_{\ell = 1}^L$ corresponding to these eigenvalues are constant on indices corresponding to each user class. Specifically, using the $y_\ell$ defined in \eqref{C3}, we can write
\begin{equation}
\label{eq:eigcorresp}
\bar x_\ell(u) = \frac {y_\ell(k(u))} {\sqrt N}, \ \forall u \in \mathcal U, \ 1\le \ell \le L.
\end{equation}
\end{lemma}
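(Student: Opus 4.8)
The plan is to exploit the fact that $\bar A$ is a block matrix with only $K$ distinct rows (up to the class partition), so that it factors through the $K\times K$ matrix $M$. Introduce the $N\times K$ class-membership matrix $Z$, with $Z_{uk}$ equal to $1$ if $u\in C_k$ and $0$ otherwise. Since $(\bar A)_{uv}=pb_{k(u)k(v)}$, one has $\bar A = p\,Z B Z'$, and moreover $Z'Z = N\,\diag\alpha$ while $Z$ has full column rank. First I would record that $\mbox{rank}(\bar A)=\mbox{rank}(B)=\mbox{rank}(M)$ (multiplying $B$ on either side by the full-rank matrices $Z,Z'$ preserves rank, and $\diag\alpha$ is invertible), so that $\bar A$ has exactly $\mbox{rank}(M)$ nonzero eigenvalues; by condition \eqref{C1}, its $L$ largest-magnitude eigenvalues are among them.

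Next I would produce these eigenpairs explicitly. For any $t\in\mathbb R^K$,
$$
\bar A\,(Zt) \;=\; p\,Z B Z'Z\,t \;=\; \omega\, Z\,(B\diag\alpha)\,t \;=\; \omega\,Z M t,
$$
using $pN=\omega$ and $M=B\diag\alpha$. Hence, whenever $M y_\ell = \mu_\ell y_\ell$, the vector $Z y_\ell$ is an eigenvector of $\bar A$ with eigenvalue $\omega\mu_\ell$. It remains to normalise: $\|Z y_\ell\|^2 = y_\ell'Z'Z y_\ell = N\,y_\ell'\diag\alpha\,y_\ell = N\|y_\ell\|_\alpha^2 = N$, because the $y_\ell$ of \eqref{C3} are $\alpha$-normalised; so $\bar x_\ell := Z y_\ell/\sqrt N$ is a unit eigenvector, and $\bar x_\ell(u) = y_\ell(k(u))/\sqrt N$, which is precisely \eqref{eq:eigcorresp}. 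I would also check orthogonality of these candidates: $\bar x_k'\bar x_\ell = y_k'\diag\alpha\,y_\ell$, and this vanishes for $k\ne\ell$ because the conjugate $\diag\alpha^{1/2}M\diag\alpha^{-1/2}=\diag\alpha^{1/2}B\diag\alpha^{1/2}$ is symmetric, so its standard-orthogonal eigenvectors $\diag\alpha^{1/2}y_\ell$ correspond to $\alpha$-orthogonal $y_\ell$. The same conjugation shows $M$ has real eigenvalues (which is what makes condition \eqref{C2} meaningful), and that the $\mbox{rank}(M)$ vectors $Z y_\ell$ attached to the nonzero $\mu_\ell$ are linearly independent, hence account for \emph{all} the nonzero eigenvalues of $\bar A$.

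Finally the quantitative statements follow by inspection. Since $K$, $\alpha$ and $B$ are held constant as $N\to\infty$, $M$ is a fixed $K\times K$ matrix, so each of its top $L\le\mbox{rank}(M)$ eigenvalues $\mu_\ell$ is a nonzero constant; therefore the corresponding eigenvalue $\bar\lambda_\ell = \omega\mu_\ell$ of $\bar A$ is $\Theta(\omega)$, and by \eqref{C2} the values $|\bar\lambda_1|,\dots,|\bar\lambda_L|$ are pairwise distinct. Combining, the top $L$ largest-magnitude eigenvalues of $\bar A$ are $\omega\mu_1,\dots,\omega\mu_L$, they have distinct absolute values of order $\Theta(\omega)$, and the associated unit eigenvectors satisfy \eqref{eq:eigcorresp}. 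I do not anticipate a real obstacle here; the only points that need care are the bookkeeping around the asymmetry of $M$ versus the symmetry of $\bar A$ — resolved by the $\diag\alpha^{1/2}$ conjugation — and checking, via the rank identity together with \eqref{C1}, that the eigenvectors of the small matrix $M$ genuinely exhaust the top eigenspace of $\bar A$.
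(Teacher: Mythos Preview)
Your proof is correct and follows essentially the same approach as the paper: both establish the bijection between nonzero eigenpairs of $\bar A$ and eigenpairs of the $K\times K$ matrix $M=B\diag\alpha$, with eigenvalues scaled by $\omega$ and eigenvectors lifted via the class structure (your $Z$), then use the $\alpha$-normalisation to get~\eqref{eq:eigcorresp}. The only cosmetic difference is direction---the paper starts from an arbitrary nonzero eigenvector of $\bar A$ and shows it must be class-constant, whereas you build the eigenvectors of $\bar A$ from those of $M$ and use a rank count to see you have them all; your additional remarks on orthogonality via the $\diag\alpha^{1/2}$ conjugation are a nice touch but not needed for the lemma as stated.
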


Use the following ordering of the eigenvalues of $\bar{A}$ and $A$:
\begin{eqnarray}
\label{eq:eigs}
|\bar \lambda_1| > |\bar \lambda_2| > ... > |\bar \lambda_L|  \\
\nonumber
|\lambda_1| \ge |\lambda_2| \ge  ... \ge  |\lambda_L|.
\end{eqnarray}
We also denote by $\bar x_k$ and $x_k$ the corresponding \emph{normalised} eigenvectors.

To control the influence of the perturbation matrix $Q$ we use the following
\begin{lemma}
\label{lemma:feige}
Consider a square $N\times N$ symmetric $0$-diagonal random matrix $A$
such that its elements $A_{ij} = A_{ji}$ are independent Bernoulli
random variables with parameters $\mathbbm E A_{ij} = p_{ij} = a_{ij}
\omega N^{-1}$, where the $a_{ij}$ are constant and $\omega = \Omega(
\log N )$. Then with high probability the spectral radius of the
matrix $A - \mathbbm E A$ satisfies the upper bound $\rho( A -
\mathbbm E A ) \le O( \sqrt \omega )$.
\end{lemma}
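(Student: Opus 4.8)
The plan is to reduce this to a now-standard fact about the spectral norm of sparse random symmetric matrices with independent entries, combined with a truncation (vertex-deletion) argument to handle the contribution of atypically high-degree vertices. First I would recall that $A - \mathbb{E}A$ is a symmetric matrix whose strictly-upper-triangular entries are independent, centered, and bounded by $1$ in absolute value, with variances $\sigma_{ij}^2 = p_{ij}(1-p_{ij}) \le p_{ij} = a_{ij}\omega/N \le C'\omega/N$ for an absolute constant $C' = \max_{ij} a_{ij}$. If $\omega$ were of order $\log N$ or larger with a \emph{large enough} constant, I would simply invoke a matrix-concentration / trace-moment bound (e.g.\ the Füredi--Komlós-type estimate, or the non-backtracking/$\varepsilon$-net bounds of Vu, or the results of Feige--Ofek~\cite{feige}) to conclude that $\rho(A - \mathbb{E}A) = O(\sqrt{N \cdot \omega/N}) = O(\sqrt\omega)$ with high probability. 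The cleanest route is the $k$-th moment method: bound $\mathbb{E}\,\mathrm{Tr}\big((A-\mathbb{E}A)^{2k}\big)$ by a sum over closed walks of length $2k$, show the dominant contribution is $O\big(N (C'\omega)^{k}\big)$ when $k \asymp \log N$, and apply Markov's inequality to $\rho^{2k} \le \mathrm{Tr}((A-\mathbb{E}A)^{2k})$.

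The subtlety — and the reason the statement is phrased for $\omega = \Omega(\log N)$ rather than an arbitrarily large constant times $\log N$ — is that when $\omega$ is only a constant multiple of $\log N$, the moment method as above does not directly give the bound: the maximal degree of the associated random graph is $\Theta(\log N / \log\log N) \cdot$(something) and can substantially exceed the mean $\Theta(\omega)$, and a single heavy row already forces $\rho$ up by its Euclidean norm. The standard fix, which I would follow, is: (i) let $\mathcal{H}$ be the set of vertices whose degree in the graph underlying $A$ exceeds, say, $2C'\omega$; a Chernoff bound plus the assumption $\omega \ge C\log N$ shows $|\mathcal{H}|$ is small (indeed $o(N)$, and with the right constant $C$ one can even make the \emph{edge} count touching $\mathcal{H}$ negligible) with high probability; (ii) decompose $A - \mathbb{E}A = R + (A - \mathbb{E}A - R)$ where $R$ zeroes out all rows and columns indexed by $\mathcal{H}$; (iii) bound $\rho(R) = O(\sqrt\omega)$ by the trace-moment estimate applied to the degree-bounded matrix (where now every row has $O(\omega)$ nonzero entries, so the combinatorics of closed walks is controlled), and bound the spectral norm of the correction by a direct discrepancy / Grothendieck-type argument à la Feige--Ofek, using that the number of edges incident to $\mathcal{H}$ is $o(N\omega)$. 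Adding the two pieces gives $\rho(A - \mathbb{E}A) \le O(\sqrt\omega)$.

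The main obstacle I anticipate is exactly step (iii) for the heavy part: bounding $\|A - \mathbb{E}A - R\|$ requires more than crude degree control — one needs a bound of the form ``for all unit vectors $x,y$, $\sum_{i\in\mathcal{H}\text{ or }j\in\mathcal{H}} (A_{ij}-\mathbb{E}A_{ij})x_i y_j = O(\sqrt\omega)$'', which is typically proved by splitting the vector coordinates into ``light'' and ``heavy'' ranges (dyadic in magnitude) and using the edge-density concentration of the random graph on each pair of blocks. This bilinear-form / discrepancy estimate is the technically delicate core; everything else (Chernoff for degrees, the walk-counting for $R$, Markov) is routine. Since the paper only needs the conclusion with \emph{some} absolute implied constant and the cited references~\cite{feige,Coja-Oghlan} already establish precisely such bounds for sparse random graphs, I would present the argument at the level of reducing to those results, verifying that our entrywise-variance hypothesis $p_{ij} = a_{ij}\omega/N$ with bounded $a_{ij}$ puts us in their framework, rather than reproving the discrepancy inequality from scratch.
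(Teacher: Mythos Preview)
Your proposal is correct in outline, but it takes a genuinely different route from the paper's proof. You attack the inhomogeneous matrix $A-\mathbb{E}A$ head-on, redoing the Feige--Ofek machinery (degree truncation plus the dyadic discrepancy estimate) in the presence of non-uniform parameters $p_{ij}=a_{ij}\omega/N$. The paper instead \emph{reduces} the inhomogeneous case to the homogeneous Erd\H{o}s--R\'enyi case by a coupling: for each entry $Q_{ij}\in\alpha[-p,1-p]$ it builds, using an auxiliary uniform variable, a random variable $Z_{ij}$ supported on the two endpoints $\{-\alpha p,\alpha(1-p)\}$ with $\mathbb{E}[Z_{ij}\mid Q_{ij}]=Q_{ij}$; the resulting matrix $Z$ equals $\alpha(Y-\mathbb{E}Y)$ for an honest $G(N,p)$ adjacency matrix $Y$, to which the Feige--Ofek bounds apply as a black box, giving $\rho(Z)=O(\sqrt\omega)$ with probability $1-N^{-c_1}$. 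Since the spectral radius is convex, Jensen yields $\rho(Q)=\rho(\mathbb{E}[Z\mid Q])\le\mathbb{E}[\rho(Z)\mid Q]$, and a short tail-transfer argument (using that $\rho(Z)\le aN$ deterministically and $c_1>1$) promotes the high-probability bound on $\rho(Z)$ to one on $\rho(Q)$.

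What each approach buys: the paper's convex-ordering trick is short and completely sidesteps the need to reprove any discrepancy inequality in the non-uniform setting --- all the hard work is outsourced to the cited $G(N,p)$ results --- at the price of the slightly delicate Jensen-plus-tail step at the end. Your approach is more self-contained and avoids that conditional-expectation gymnastics, but commits you to carrying the full $\varepsilon$-net / dyadic-block argument through with entrywise-varying variances, which, as you rightly flag, is the real technical work. Either is acceptable; the paper's is shorter to write once you spot the coupling.
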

We provide a proof of this lemma in Appendix~\ref{sec:proof-feige}
which relies on the work of Feige and Ofek~\cite{feige}.

Denote by $D = \diag{b_{k(u)k(u)} \frac \omega N, 1 \le u \le N}$. By application of Lemma~\ref{lemma:feige} to matrix $A$, we get that the spectral radius of $Q_0 := A - \mathbbm E A$ is upper bounded by $O( \sqrt \omega)$ with high probability. Since $\rho(D) \le O( \frac \omega N )$, we have that the spectral radius of $Q = Q_0 - D$ is also upper bounded by $O( \sqrt \omega)$ with high probability.

The previous two lemmas are instrumental in proving that $\bar A$ and $A$ have the same spectral structure:
\begin{lemma}
\label{lemma:pert}
Using the ordering~(\ref{eq:eigs}) for the eigenvalues of $\bar A$ and $A$, it holds that for all $1 \le k \le K$
\begin{eqnarray}
\label{eq:eigdiff}
\left| \,|\lambda_k| - |\bar \lambda_k| \,\right| \le O(\sqrt \omega) & & \mbox{whp}, \\
\label{eq:vecdiff}
\sin( \widehat{ x_k, \bar x_k } ) \le O( \omega ^ {-1/4} ) & & \mbox{whp}.
\end{eqnarray}
\end{lemma}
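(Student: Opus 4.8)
The plan is to regard $A$ as a small perturbation of the deterministic block matrix $\bar A$, $A = \bar A + Q$, and to run the two standard pillars of symmetric perturbation theory through it: Weyl's inequality for the eigenvalues and the Davis--Kahan $\sin\Theta$ theorem for the eigenvectors. Two ingredients are already in hand. First, $\rho(Q) = O(\sqrt\omega)$ whp, derived above from Lemma~\ref{lemma:feige} applied to $A$ together with $\rho(D) = O(\omega/N) = o(\sqrt\omega)$. Second, Lemma~\ref{lemma:dominant} tells us that the retained eigenvalues $\bar\lambda_1,\dots,\bar\lambda_L$ of $\bar A$ have magnitude $\Theta(\omega)$ and pairwise distinct magnitudes, while every other eigenvalue of $\bar A$ vanishes (as $\bar A$ is block-constant, hence of rank at most $K$). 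Consequently, for each retained index $k\le L$ the spectral gap $\delta_k := \min_{j\ne k}|\bar\lambda_j - \bar\lambda_k|$ separating $\bar\lambda_k$ from the rest of $\mbox{spec}(\bar A)$ is of order $\Theta(\omega)$, whereas $\rho(Q) = O(\sqrt\omega) = o(\omega)$.

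For the eigenvalue bound \eqref{eq:eigdiff}: Weyl's inequality gives $|\lambda^{\downarrow}_k(A) - \lambda^{\downarrow}_k(\bar A)| \le \rho(Q) = O(\sqrt\omega)$ for every index, with eigenvalues listed in decreasing signed order. Since the $L$ largest-magnitude eigenvalues of $\bar A$ are $\Theta(\omega)$-separated from one another and from $0$, and the perturbation moves each signed eigenvalue by only $o(\omega)$, for $N$ (hence $\omega$) large enough the $L$ largest-magnitude eigenvalues of $A$ are exactly the perturbed images of those of $\bar A$, in the same order and with the same signs. Hence $\big|\,|\lambda_k| - |\bar\lambda_k|\,\big| \le |\lambda_k - \bar\lambda_k| \le O(\sqrt\omega)$ whp for each $1\le k\le L$.

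For the eigenvector bound \eqref{eq:vecdiff}: fix $k\le L$. As $\bar\lambda_k$ is isolated in $\mbox{spec}(\bar A)$ with gap $\delta_k = \Theta(\omega) \gg \rho(Q)$, the Davis--Kahan theorem yields $\sin(\widehat{x_k,\bar x_k}) \le C\,\rho(Q)/\delta_k = O(\omega^{-1/2})$, which is actually stronger than the claimed $O(\omega^{-1/4})$ and leaves ample slack for the argument of Theorem~\ref{thm:main}. Equivalently, expanding $x_k = \sum_j c_j\bar x_j$ in the orthonormal eigenbasis of $\bar A$ and using $\bar A x_k = \lambda_k x_k - Qx_k$ gives $c_j(\bar\lambda_j - \lambda_k) = -\langle Qx_k,\bar x_j\rangle$ for $j\ne k$, whence $\sin^2(\widehat{x_k,\bar x_k}) = \sum_{j\ne k}c_j^2 \le \rho(Q)^2 / \min_{j\ne k}(\bar\lambda_j - \lambda_k)^2 = O(\omega^{-1})$.

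The perturbation machinery itself is routine; the real content lies in its hypotheses. The substantive input is $\rho(Q) = O(\sqrt\omega)$, i.e. Lemma~\ref{lemma:feige}, which is also the sole source of the ``whp'' qualifier. The one point to watch is the spectral gap at the boundary index $L$ when $L < \mbox{rank}(M)$: one needs $|\bar\lambda_L| > |\bar\lambda_{L+1}|$, not merely distinctness of magnitudes among $\bar\lambda_1,\dots,\bar\lambda_L$, so that both the magnitude-ordering of $A$'s top eigenvalues is preserved and Davis--Kahan applies to each $\bar x_k$ individually --- this is where conditions \eqref{C1}--\eqref{C2} and Lemma~\ref{lemma:dominant} enter. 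Finally, that $A$ has zero diagonal while $\bar A$ does not contributes only the term $D$ with $\rho(D) = O(\omega/N) = o(\sqrt\omega)$, which is harmlessly absorbed into $Q$ as already observed.
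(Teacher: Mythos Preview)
Your argument is correct and in fact delivers a sharper bound than the paper's own proof. The paper proceeds by an explicit induction on $k$: it establishes $\mathcal P_1$ (the eigenvalue bound for $k=1$) via the variational characterisation, then $\mathcal Q_1$ (the eigenvector bound) by writing $x_1 = a_1\bar x_1 + b_1\bar y_1$ and squeezing $b_1^2$ using Courant--Fischer and the gap $|\bar\lambda_1|-|\bar\lambda_2|=\Theta(\omega)$; an auxiliary lemma then propagates near-orthogonality to the already-treated $\bar x_j$ through the induction step, and the algebra at each step loses a square root, giving only $\sin(\widehat{x_k,\bar x_k})=O(\omega^{-1/4})$. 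Your route---Weyl for the eigenvalues, then Davis--Kahan (or the equivalent one-line expansion $c_j(\bar\lambda_j-\lambda_k)=-\langle Qx_k,\bar x_j\rangle$) for the eigenvectors---is the standard perturbation-theoretic packaging of the same ingredients, avoids the induction entirely, and yields the tighter $O(\omega^{-1/2})$. What the paper's hands-on argument buys is self-containedness: it never cites Weyl or Davis--Kahan by name, only the variational characterisation and Lemma~\ref{lemma:feige}. What your argument buys is brevity, a stronger conclusion, and a cleaner identification of exactly which spectral gap is being used. Your remark about needing $|\bar\lambda_L|>|\bar\lambda_{L+1}|$ at the boundary index is well taken; the paper's inductive proof also divides by $|\bar\lambda_k|-|\bar\lambda_{k+1}|$ and hence relies on the same implicit separation, so this is not a discrepancy between the two approaches but a shared assumption that \eqref{C1}--\eqref{C2} are meant to secure.
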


We conclude by an averaging argument.
Lemma~\ref{lemma:pert} shows that with high probability %of $1 - N^{-c}$, 
we have $\|x_\ell - \bar x_\ell\|^2 \le O(\omega^{-1/2})$, for all $1\le\ell\le L$. Condition~\eqref{C3} guarantees that $\|t_k' - t_l'\| = \Omega(1)$. 

%By Tchebitchev's inequality, for any constant $a>0$, and for a user $U$ chosen uniformly at random, denoting by $k(U)$ its class, it holds that:
The fraction of users that have their profile vectors at a distance
larger than some constant $a>0$ from the vector $t_k$ corresponding to
their class is
\begin{align*}
\frac 1 N \left|\left\{ u: \|z_u'-t_{k(u)}'\|\ge a\right\}\right|& \le\frac{1}{N}\sum_{u=1}^N\frac{\|\sqrt{N} z_u'-t_{k(u)}'\|^2}{a^2} 
\mathop{=}^{\eqref{eq:eigcorresp}} \sum_{\ell = 1} ^ L \frac {\|x_\ell - \bar x_\ell\|^2}{a^2} = O(a^{-2}\omega^{-1/2}).
\end{align*}

Thus we will be able to conclude the result of the theorem if we can find an $a$ such that $a=o(1)$ and $a^{-2}\omega^{-1/2}=o(1)$. It is easy to see that for instance $a=\omega^{-1/6}$ satisfies these conditions. \hfill$\Box$

\subsection{Application: Extension to Content Recommendation}
\label{sec:rect}
Let us now consider the scenario exposed in
Section~\ref{sec:intro}. Denote again by $N = |\mathcal U|$ the number
of users and by $F = |\mathcal F|$ the number of items. Assume without
loss of generality that $N>F$. We can write $F = \gamma N$, with $0 <
\gamma < 1$. We consider the rectangular observed rating matrix
$$S \in \{0,1\}^{N\times F},$$ where $S_{ui} = 1$ if user $u$ has
rated and liked item $i$ and $0$ otherwise. If an entry $S_{ui}$ is
null, it is not necessarily true that user $u$ dislikes item $i$ (the
item might have simply not been rated).

We propose the following representation of users based on this collected information. For some dimension $L$, extract the $L$ \emph{normalised left singular vectors} $x_1, \dots, x_L$ of $S$, corresponding to its $L$ largest singular values. Like in the previous subsection, consider the $L$-dimensional profile space $\mathbbm R^L$, in which we associate a scaled row vector $\sqrt N z_u'$ to user $u$, where
$$
z_u' = (x_1(u), \dots, x_L(u)).
$$
The scaling by $\sqrt N$ is again due to the fact that the singular vectors are normalised.

Recall that the Singular Value Decomposition (SVD) of a rectangular real matrix $S = X \Sigma Y$ is always well defined, where matrices $X \in \mathbb R^{N\times F}$ and $Y \in \mathbb R^{F\times F}$ are unitary and matrix $\Sigma \in \mathbb R^{F\times F}$ is positive diagonal.

Again, we claim that users with similar taste will be mapped to close-by locations in the profile space. More specifically, we show that local voting in the profile space provides users with accurate recommendations for most of the items. In what follows, we give a probabilistic model of user taste and scaling assumptions for which we prove these claims.

\subsubsection{Statistical Model}
We consider a probabilistic model of user taste similar to the one in Section~\ref{sec:sqmod}. The $N$ users are partitioned into $K$ disjoint classes $C_1, \dots, C_{K}$ and the $F$ items are partitioned into $K'$ disjoint classes $D_1, \dots, D_{K'}$. Users and items in the same class are statistically identical. The size of user class $C_k$ is denoted by
$$
|C_k| = \alpha_k N, \quad 1 \le k \le K,
$$
while the size of item class $D_{k'}$ is denoted by
$$
|D_{k'}| = \beta_{k'} F,  \quad 1 \le k' \le K'.
$$
The $(\alpha_k)_k$ and $(\beta_{k'})_{k'}$ are strictly positive and sum to $1$:
$$
\sum_k \alpha_k = \sum_{k'} \beta_{k'} = 1, \quad \alpha_k > 0, \  \beta_{k'} > 0.
$$
For any user $u \in \mathcal U$ we denote by $k(u)$ her unique corresponding user class, and for any item $i \in \mathcal F$ we denote by $k'(i)$ its unique corresponding item class. 

Pairs of user and item classes $(C_k, D_{k'})$, $1\le k \le K, 1\le k' \le K'$, are characterised by probabilities $r_{kk'}$ in the following way: Any user $u \in C_k$ likes any item $i\in D_{k'}$ with probability $r_{kk'}$ and dislikes it with probability $1-r_{kk'}$. For all user-item pairs $(u \in \mathcal U, i \in \mathcal F)$, $u$ decides to rate $i$ with probability $p = \frac \omega N$. Equivalently, any element $S_{ui}$ of the observed rating matrix $S$ is obtained by drawing a Bernoulli random variable of parameter $p\cdot r_{k(u)k'(i)}$.

Given the observed rating matrix $S$, without knowledge of the $K\times K'$ affinity matrix $R := (r_{kk'})_{k,k'}$, we wish to recover the partition of users in the $K$ classes $C_k$ by making use of their spectral representation. We provide sufficient scaling conditions in what follows.

We summarise the notations in Table~\ref{tab:rtndx}.

\begin{table}
\begin{center}
\begin{tabular}{c|p{.6\textwidth}}
$N$ & Number of users \\
$F = \gamma N$ & Number of items \\
%$u,v$ & Indices referring to users \\
%$i,j$ & Indices referring to items \\
%$k, \ell$ & User profile indices \\
%$k', \ell'$ &  Item class indices \\
$k'(i)$ & Index of the class containing item $i$ : $i \in D_{k'(i)}$ \\
$\{C_{k}\}_{k=1}^{K}$ & Partition of users into $K$ disjoint classes \\
$\{D_{k'}\}_{k'=1}^{K'}$ & Partition of items into $K'$ disjoint classes \\
$\alpha$ & Vector grouping fractions of users per class: $|C_{k}| = \alpha_{k}N$ \\
$\beta$ & Vector grouping fractions of items per class: $|D_{k'}| = \beta_{k'}F$ \\
$R$ & Constant unknown $K\times K'$ affinity matrix
\end{tabular}
\end{center}
\caption{Notations}
\label{tab:rtndx}
\end{table}

\subsubsection{Scaling Assumptions}
We make again Assumption~\ref{ass:lowr} and take the number of classes
of users $K$ and of items $K'$ to be constant with $N$. We assume that
the fraction of users in each user class $\alpha_k$, the fraction of
items in each item class $\beta_{k'}$, the ratio between the number of
items and the number of users $\gamma = \frac F N$, as well as the
class affinity matrix $R = (r_{kk'})_{k,k'}$ are constant with respect
to $N$. Thus, the class sizes, as well as the total number of items
grow linearly with $N$.

We assume that the rating probability $p$ vanishes as the number of
users $N$ grows to infinity. Specifically, we again impose condition
\eqref{eq:omega} on parameter $\omega$. Note that the expected number
of items rated by a user is order of $\Theta(\omega)$.

We now formulate an extension of Theorem~\ref{thm:main} that we can
apply in this setting.

\subsubsection{Content Recommendation}
Consider vectors
$$
\alpha = (\alpha_{k})_{k=1}^{K}, \ \beta = (\beta_{k'})_{k'=1}^{K'},
$$
and define the following constant square matrix
$$
G := R \,\diag \beta R' \,\diag \alpha \in \mathds R^{K\times K}.
$$
We impose the following conditions, similar to \eqref{C14}:
\begin{subequations}
\label{D14}
\begin{align}
\label{D1} 
\parbox[b]{.85\textwidth}{The dimension of the profile space is upper bounded by $L \le \mbox{rank} (G)$.} \\
\label{D2}
\parbox[b]{.85\textwidth}{The $L$ largest eigenvalues of matrix $G$ are distinct.} \\
\label{D3}
\parbox[b]{.85\textwidth}{The corresponding eigenvectors $\{g_\ell\}_{\ell=1}^L$ normalised for the $\alpha$-norm satisfy:
$$
\chi_k \ne \chi_\ell, \ 1 \le k < \ell \le K,
$$
where $\chi_k = ( g_1(k), \dots, g_L(k) )$.} \\
\label{D4} 
\parbox[b]{.85\textwidth}{$\omega \ge C \log N$ for some constant $C$.}
\end{align}
\end{subequations}

We can now state the following
\begin{theorem}
\label{thm:rmain}
Under conditions~\eqref{D14}, with probability $1-o(1)$, a fraction of
$1-o(1)$ users $u$ is such that $\|\sqrt{N} z_u'-\chi_{k(u)}'\| = o(1)$.
That is to say, most users have their scaled profile vector close to a
fixed vector corresponding to their class.
\end{theorem}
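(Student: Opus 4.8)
The plan is to mirror the proof of Theorem~\ref{thm:main}, with the adjacency matrix $A$ replaced by the Gram matrix $SS'$, whose normalised eigenvectors are precisely the left singular vectors $x_\ell$ of $S$. Write $\bar S = \EE{S}$; by the model $\bar S = p\,\Phi R\Psi'$, where $\Phi\in\{0,1\}^{N\times K}$ and $\Psi\in\{0,1\}^{F\times K'}$ are the user- and item-class membership matrices. From $\Phi'\Phi = N\diag\alpha$ and $\Psi'\Psi = F\diag\beta$ one obtains $\bar S\bar S' = p^2 F\,\Phi\,(R\diag\beta\, R')\,\Phi'$, so that for every eigenpair $(\mu,w)$ of $G = R\diag\beta\, R'\diag\alpha$ the vector $\Phi w$ is an eigenvector of $\bar S\bar S'$ with eigenvalue $\gamma\omega^2\mu$, and $\|\Phi w\|^2 = N\|w\|_\alpha^2$. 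This yields the analogue of Lemma~\ref{lemma:dominant}: under \eqref{D1} and \eqref{D2} the $L$ largest eigenvalues of $\bar S\bar S'$ are $\gamma\omega^2\mu_1 > \dots > \gamma\omega^2\mu_L = \Theta(\omega^2)$, with all the relevant spectral gaps (between consecutive ones, and between the $L$-th and the remaining spectrum, which lies below $\gamma\omega^2\mu_{L+1}$ or is null) of order $\Theta(\omega^2)$; the associated unit eigenvectors $\bar x_\ell$ satisfy $\bar x_\ell(u) = g_\ell(k(u))/\sqrt N$, with $g_\ell$ the $\alpha$-normalised eigenvectors of $G$ from \eqref{D3}. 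In particular $\sqrt N\,\bar z_u' = \chi_{k(u)}'$ exactly, so it only remains to show that the true left singular vectors stay close to the $\bar x_\ell$.

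I would then bound the perturbation $Q := S - \bar S$ in operator norm. Form the symmetric $0$-diagonal $(N+F)\times(N+F)$ block matrix $\tilde S$ whose two off-diagonal blocks are $S$ and $S'$ and whose diagonal blocks vanish; all of its entries are independent Bernoulli variables --- those of $S$, reflected symmetrically, together with the identically-$0$ diagonal blocks --- and $\omega=\Omega(\log(N+F))$ by \eqref{D4} since $F=\gamma N$, so Lemma~\ref{lemma:feige} applies and gives $\|Q\|_{\mathrm{op}} = \rho(\tilde S - \EE{\tilde S}) = O(\sqrt\omega)$ whp. Transferring this to the Gram matrix through the exact identity $SS' - \bar S\bar S' = Q\bar S' + \bar S Q' + QQ'$, together with $\|\bar S\|_{\mathrm{op}} = \Theta(\omega)$ (its squared top singular value being $\gamma\omega^2\mu_1$), gives $\|SS' - \bar S\bar S'\|_{\mathrm{op}} = O(\omega^{3/2})$ whp. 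Feeding this bound and the $\Theta(\omega^2)$ gaps of the previous step into Weyl's inequality and the Davis--Kahan theorem, exactly as in Lemma~\ref{lemma:pert}, yields --- after fixing the sign of each $x_\ell$ so that $\langle x_\ell,\bar x_\ell\rangle\ge 0$ --- that $\|x_\ell-\bar x_\ell\|^2 = O(\omega^{-1/2}) = o(1)$ whp, for $1\le\ell\le L$.

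Finally I would conclude by the averaging argument of Theorem~\ref{thm:main}. Since $\sqrt N\,\bar z_u' = \chi_{k(u)}'$ and $\sum_u\|\sqrt N z_u' - \chi_{k(u)}'\|^2 = N\sum_{\ell=1}^L\|x_\ell - \bar x_\ell\|^2$, Markov's inequality gives, for any $a>0$,
$$
\frac1N\Bigl|\{\,u:\ \|\sqrt N z_u' - \chi_{k(u)}'\|\ge a\,\}\Bigr|\ \le\ \frac{1}{a^2 N}\sum_u\|\sqrt N z_u' - \chi_{k(u)}'\|^2\ =\ \frac{1}{a^2}\sum_{\ell=1}^L\|x_\ell - \bar x_\ell\|^2\ =\ O\!\left(a^{-2}\omega^{-1/2}\right),
$$
and taking $a = \omega^{-1/6}$ makes $a=o(1)$ and $a^{-2}\omega^{-1/2}=\omega^{-1/6}=o(1)$, which is the claim.

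The step I expect to be delicate is the perturbation transfer: unlike the square case, where $Q$ enters $A$ linearly, the dominant error in $SS'$ comes from the cross terms $\bar S Q' + Q\bar S'$, of size $\|\bar S\|_{\mathrm{op}}\,\|Q\|_{\mathrm{op}} = \Theta(\omega)\cdot O(\sqrt\omega) = O(\omega^{3/2})$, a factor $\sqrt\omega$ above the ``naive'' $O(\sqrt\omega)$; one has to check that this still produces only a vanishing relative perturbation $O(\omega^{3/2}/\omega^2)=O(\omega^{-1/2})$ against the $\Theta(\omega^2)$ signal, so that the sine-theta bound is $o(1)$. A second, more clerical point is to pin down $G$ as the correct effective matrix and verify that the $\alpha$-norm normalisation of its eigenvectors reproduces the target vectors $\chi_k$ with no residual error, so that the entire discrepancy is carried by the quantity $\|x_\ell - \bar x_\ell\|$ controlled above.
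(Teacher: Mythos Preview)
Your argument is correct and complete, but it takes a different route from the paper's. The paper does not pass through the Gram matrix $SS'$; instead it works entirely with the symmetrisation $\tau(S)=\begin{pmatrix}0&S\\S'&0\end{pmatrix}$, whose eigenvalues are exactly $\pm\sigma_k$ and whose eigenvectors split into the left and right singular vectors. Lemma~\ref{lemma:feige} is then applied to $\tau(S)-\tau(\bar S)$ directly, and a slight variant of Lemma~\ref{lemma:pert} (with eigenvalues ordered by value rather than by magnitude, to avoid the degeneracy $|\sigma_k|=|{-\sigma_k}|$) yields $\sin(\widehat{x_\ell,\bar x_\ell})=O(\omega^{-1/4})$; the singular-vector bound is then read off from the block structure of the eigenvectors of $\tau(S)$. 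Your approach trades this for a quadratic perturbation expansion $SS'-\bar S\bar S'=Q\bar S'+\bar SQ'+QQ'$, which costs you a factor $\|\bar S\|_{\mathrm{op}}=\Theta(\omega)$ in the error but gains you a factor $\omega$ in the spectral gaps, so the Davis--Kahan ratio comes out the same $O(\omega^{-1/2})$ (indeed your $\sin\theta$ bound is actually $O(\omega^{-1/2})$, hence $\|x_\ell-\bar x_\ell\|^2=O(\omega^{-1})$, slightly better than you claim and than the paper's inductive argument delivers). The paper's route keeps the perturbation linear and recycles Lemma~\ref{lemma:pert} almost verbatim; yours avoids tracking both $\pm\sigma_k$ and lands directly on the left singular vectors, at the price of the cross-term bookkeeping you flagged. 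Both are perfectly valid.
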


We briefly explain how this seemingly distinct setup can be mapped to
the previous one. Define the following transformation which produces a
square matrix:
$$
\tau : S \mapsto A = \left[ 
\begin{array}{cc} 0 & S \\ 
S' & 0 
\end{array} 
\right]\in \mathbb R^{(N+F)\times(N+F)}, 
$$ 
where $S'$ denotes the transposition of matrix $S$. The spectrum of
such a matrix is symmetrical (i.e. if $\sigma$ is an eigenvalue of
$A$, then so is~$-\sigma$). Furthermore, the absolute values of the
eigenvalues of $A$ are the singular values of $S$ and its singular
components are determined by the eigenvectors of $A$.\footnote{To see this,
consider an eigenvalue $\sigma$ of $A$ and its corresponding
eigenvector $\zeta = \left[ \begin{array}{c} x \\ y \end{array}
  \right]$, with $x\in \mathbb R^{N\times 1}$ and $y\in \mathbb
R^{F\times 1}$. Since $A\zeta = \sigma \zeta$, we can write:
\begin{eqnarray*}
A \left[\begin{array}{c} x \\ y\end{array}\right] = \left[\begin{array}{c} Sy \\ S'x\end{array}\right] = \sigma \left[\begin{array}{c} x \\ y\end{array}\right] \mbox{ and } 
A \left[\begin{array}{c} x \\ -y\end{array}\right] = \left[\begin{array}{c} -Sy \\ S'x\end{array}\right] = -\sigma \left[\begin{array}{c} x \\ -y\end{array}\right],
\end{eqnarray*}
and thus $x'Sy = y'S'x = \sigma \|x\|^2 = \sigma \|y\|^2$. Then, we
have that $x$ is a left singular vector, and that $y$ is a right
singular vector for matrix $S$. For more details see for
instance~\cite{svd}.}

We do not give the proof of Theorem~\ref{thm:rmain}, since follows
closely that of Theorem~\ref{thm:main}. Essentially, we use the
transformation $\tau$ to obtain a square matrix, and subsequently we
apply Lemma~\ref{lemma:feige} and technical lemmas that we reproduce
in Appendix~\ref{app:rectlemmas} that play the role of
Lemmas~\ref{lemma:dominant} and~\ref{lemma:pert} in the proof.

%% We define the spectral clustering Algorithm~\ref{alg:rbcluster}. A direct consequence of  Theorem~\ref{thm:rmain} is the following
%% \begin{corollary}
%% \label{cor:algo}
%% With high probability, Algorithm~\ref{alg:rbcluster} groups all but a vanishing fraction of users into their underlying clusters.
%% \end{corollary}
%% \begin{proof}
%% (Sketch) Consider $\delta = \min_{1\le k<\ell\le K} \|t_k' - t_\ell'\|$. Then, a simple application of ~\eqref{eq:prbnd} for $a = \frac \delta 3$ gives
%% $$
%% \mathbb P(\|\sqrt{N} z_U-\chi_{k(U)}'\|\ge \frac \delta 3) \le O(\omega^{-1/2}),
%% $$
%% where $U$ is a user taken uniformly at random among all users. Denote by $Y$ the number users that have their scaled profile vector at distance larger than $\frac \delta 3$ from the vector $\chi_{k}$ corresponding to their class. Then it follows that,
%% $$
%% \mathbb E Y \le O(\frac N {\sqrt \omega}) = o(N).
%% $$
%% Thus, running Algorithm~\ref{alg:rbcluster} with the considered value of $\delta$ allows recovery of the original classes with high probability.
%% \end{proof}

\subsubsection{Characterising Performance of a Simple Voting Algorithm}

Let us now analyse a simple recommendation algorithm that relies on
local voting in the profile space. We have defined for each user $u$ a
scaled $L$-dimensional profile vector which we denoted by $\sqrt N
z_u'$. In Section~\ref{sec:drw} we propose a method for computing such
profile vectors in a distributed fashion based solely on local
information exchange.

Say we wish to characterise the taste of user $u$ for item
$f$. Consider a fixed constant $d > 0$. We define the {\em
  $d$-vicinity} of $u$ as the set of users that have profile vectors
at a Euclidean distance of at most $d$ from $\sqrt N z_u'$ in the
profile space. More formally,
$$
B(u,d) := \{v \in \mathcal U : \| z_u' - z_v'\|_2 \le \frac d {\sqrt N} \}.
$$

Votes for item $f$ are collected among users in $B(u,d)$ reflecting
their appreciation for item $f$. We denote the number of collected
votes by
\begin{equation}
\label{eq:estpr}
V_u(f) := \sum_{v \in B(u,d)} S_{vf}.
\end{equation}

The following proposition guarantees that, for well chosen $d$ and as
the number of users grows to infinity, from the perspective of most
users $u$, the quantities $V_u(f)$ provide an accurate ranking for
items with high probability.

Recall that by~\eqref{D4}, $\omega \ge C\log N$.

\begin{prop}
If $\rho_0 < \frac 1 6 \inf_{k,k'}\|\chi_k - \chi_{k'}\|$ and for a
large enough constant $C$, for all but a vanishing fraction of users
$u$, the following property is true: Consider a sample of items ${\cal
  I}\subset {\cal F}$ of constant size picked uniformly at
random. Then with high probability the item ranking given by the
$(V_u(f))_{f\in{\cal I}}$ coincides with the item ranking given by the
$(r_{k(u)k'(f)})_{f\in{\cal I}}$.
\end{prop}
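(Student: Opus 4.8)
The plan is to combine the clustering guarantee of Theorem~\ref{thm:rmain} with a concentration argument for the vote counts $V_u(f)$. First I would fix a ``good'' user $u$, meaning one of the $1-o(1)$ fraction whose scaled profile vector satisfies $\|\sqrt N z_u' - \chi_{k(u)}'\| = o(1)$; call this set $\mathcal G$. By Theorem~\ref{thm:rmain}, $|\mathcal G| = (1-o(1))N$, and moreover within each class $C_k$ the fraction of good users is $1-o(1)$ (this follows because the averaging bound in the proof of Theorem~\ref{thm:main}/\ref{thm:rmain} controls the total number of bad users, and each class has size $\alpha_k N$ with $\alpha_k$ constant). The first key step is therefore to show that, for a good user $u$, the $d$-vicinity $B(u,d)$ consists — up to a $o(N)$ fraction of ``impostors'' — exactly of the good users in the same class $C_{k(u)}$. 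Indeed if $v \in \mathcal G \cap C_{k(u)}$ then $\|z_u' - z_v'\|_2 \le (\|\sqrt N z_u' - \chi_{k(u)}'\| + \|\sqrt N z_v' - \chi_{k(u)}'\|)/\sqrt N = o(1)/\sqrt N \le d/\sqrt N$ for large $N$; conversely, if $v \in \mathcal G$ lies in a different class $C_\ell$, then $\|\sqrt N z_u' - \sqrt N z_v'\| \ge \|\chi_{k(u)}' - \chi_\ell'\| - o(1) \ge \inf_{k \ne k'}\|\chi_k - \chi_{k'}\| - o(1)$, which by the hypothesis $\rho_0 < \tfrac16 \inf_{k,k'}\|\chi_k - \chi_{k'}\|$ and an appropriate choice of $d$ (something like $d = 2\rho_0$, or more simply any constant strictly between $o(1)$ and $\inf_{k\ne k'}\|\chi_k-\chi_{k'}\|$) exceeds $d$ for large $N$. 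Hence $B(u,d) = (\mathcal G \cap C_{k(u)}) \cup E_u$, where $E_u \subseteq \mathcal U \setminus \mathcal G$ so $|E_u| = o(N)$, and $|\mathcal G \cap C_{k(u)}| = (1-o(1))\alpha_{k(u)} N$.

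The second key step is a concentration bound on $V_u(f) = \sum_{v\in B(u,d)} S_{vf}$. Write $n_k := |\mathcal G \cap C_k|$. The honest contribution $\sum_{v \in \mathcal G \cap C_{k(u)}} S_{vf}$ is a sum of independent Bernoulli$(p\,r_{k(u)k'(f)})$ variables, with mean $n_{k(u)}\, p\, r_{k(u)k'(f)} = \Theta(\omega)$ (since $n_{k(u)} = \Theta(N)$, $p = \omega/N$, and $r_{kk'} = \Theta(1)$ whenever it is nonzero — classes with $r_{k(u)k'}=0$ can be handled separately since then the true rank puts $f$ at the bottom and $V_u(f)$ is tiny). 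A Chernoff bound gives that this sum is $(1 \pm \varepsilon_N) n_{k(u)}\, p\, r_{k(u)k'(f)}$ with probability $1 - \exp(-\Omega(\varepsilon_N^2 \omega))$; taking $\varepsilon_N \to 0$ slowly (e.g.\ $\varepsilon_N = (\log N)^{-1/4}$) and using $\omega \ge C\log N$ with $C$ large, a union bound over the constantly-many items in $\mathcal I$ (and, if one wants the ``fraction of users'' statement, over all $N$ users) keeps the total failure probability $o(1)$. The impostor contribution $\sum_{v \in E_u} S_{vf} \le |E_u| = o(N)$ deterministically; but we need it to be $o(\omega)$ relative to the honest term, which is only $\Theta(\omega) = \Theta(\log N)$, so the crude bound $|E_u| = o(N)$ is \emph{not} enough. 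This is the main obstacle.

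To overcome it I would sharpen the control on the bad set. The averaging argument in the proof of Theorem~\ref{thm:main} actually shows $|\{u : \|\sqrt N z_u' - \chi_{k(u)}'\| \ge a\}| = O(N\, a^{-2}\,\omega^{-1/2})$ for every constant $a$, and this persists for slowly-vanishing $a$; choosing $a = a_N \to 0$ with $a_N^{-2}\omega^{-1/2} \to 0$ gives a bad set of size $O(N\, a_N^{-2}\,\omega^{-1/2}) = o(N)$, but we can push further: the number of users within distance $a$ but at least $a/2$ is $O(N a^{-2}\omega^{-1/2})$, so actually $|E_u| = O(N \omega^{-1/2+\delta})$ for a suitable threshold, which is $o(N/\log N) \cdot$(something). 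More robustly, I would instead bound the impostor mass in expectation: $\mathbb E\big[\sum_{v \in E_u} S_{vf}\big] \le |E_u| \cdot p \cdot \max_{k} r_{kk'(f)} = O(|E_u|\, \omega/N)$, and since $|E_u| = o(N)$ this is $o(\omega)$, which is exactly what is needed; a further Markov/Chernoff step shows $\sum_{v\in E_u} S_{vf} = o(\omega)$ with high probability. (One subtlety: $E_u$ is itself random and correlated with the $S_{vf}$ through the profile vectors, which depend on the whole matrix; this is handled by noting $E_u \subseteq$ the bad set, which is determined by a high-probability event from Theorem~\ref{thm:rmain}, and conditioning on that event — or by a fresh-randomness/decoupling argument isolating the columns in $\mathcal I$.) Combining the two steps: on the good event, $V_u(f) = n_{k(u)}\, p\, r_{k(u)k'(f)}\,(1 + o(1)) + o(\omega) = \alpha_{k(u)}\,\omega\, r_{k(u)k'(f)}\,(1+o(1))$, so $V_u(f)$ is an order-preserving (up to $1+o(1)$ multiplicative error) estimate of $r_{k(u)k'(f)}$, and as long as the distinct values among $\{r_{k(u)k'}\}_{k'}$ are separated by a constant — which we may assume, or else items with equal affinity are genuinely interchangeable and ``ranking'' is defined up to that ambiguity — the ranking induced by $(V_u(f))_{f\in\mathcal I}$ coincides with that induced by $(r_{k(u)k'(f)})_{f\in\mathcal I}$ with probability $1-o(1)$, for all but a vanishing fraction of users $u$. $\hfill\Box$
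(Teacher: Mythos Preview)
Your decomposition of $B(u,d)$ into $(\mathcal G\cap C_{k(u)})\cup E_u$ and the Chernoff concentration of the honest part are both fine and parallel the paper. The genuine gap is the impostor step. You correctly flag that the bad set $\mathcal U\setminus\mathcal G$ (and hence $E_u$) is a function of the entire matrix $S$, so once you condition on $E_u$ the variables $(S_{vf})_{v\in E_u}$ are \emph{not} independent Bernoulli; the expectation identity $\mathbb E\big[\sum_{v\in E_u}S_{vf}\big]\le |E_u|\cdot p\cdot\max_{k}r_{kk'(f)}$ is therefore not justified as written, and neither is a subsequent Markov or Chernoff bound. Your proposed remedies (``condition on the high-probability event'' or ``decouple the columns in $\mathcal I$'') would require a nontrivial leave-columns-out stability argument for the spectral profiles, which you do not carry out. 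This is the crux of the proof, not a technicality.

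The paper avoids the correlation issue entirely by a different device. Rather than controlling the impostor contribution item by item, it first shows (Chernoff plus a union bound over all $N$ users, using $\omega\ge C\log N$ with $C$ large) that \emph{every} user casts at most $(\bar m+\delta)\omega$ votes in total across all items. Writing $W_u(f):=V_u(f)-V_k(f)$ with $V_k(f)=\sum_{v\in C_k}S_{vf}$, one then has the deterministic bound on the good event
\[
\sum_{f\in\mathcal F}|W_u(f)|\;\le\; N_{out}\cdot(\bar m+\delta)\,\omega,\qquad N_{out}:=\big|\{v:\sqrt N z_v\notin B(\chi_{k(v)},\rho_0)\}\big|=o(N).
\]
An averaging (pigeonhole) step then gives that the number of items $f$ with $|W_u(f)|\ge\tfrac12\alpha_k\varepsilon_k\omega$ is $O(N_{out})=o(N)$. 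Since $F=\Theta(N)$, a \emph{uniformly random} constant-size sample $\mathcal I$ avoids these $o(N)$ misclassified items with probability $1-o(1)$, and on the remaining items the ranking by $V_u(f)$ coincides with the ranking by $r_{k(u)k'(f)}$. This is precisely why the proposition insists on $\mathcal I$ being random: the argument does not, and need not, control $W_u(f)$ for every item individually---only for all but an $o(1)$ fraction of them. Your approach, if it could be completed, would in fact prove something stronger (correct ranking for any fixed $\mathcal I$), which is not what is claimed and may well be false at this sparsity.
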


\begin{proof}
  We consider a fixed user class $C_k$. For a specific item $f \in
  D_\ell$, define $V_k(f)$ to be the total number of votes it received
  from users in class $C_k$. It is distributed as a binomial:
  $\textrm{Bin}(\alpha_kN, r_{k\ell}\omega/N)$. We want $V_k(f)$ to be
  contained in the interval
  \begin{equation}
    \label{eq:vf}
    V_k(f) \in [\alpha_k r_{k\ell}\omega (1-\varepsilon_k/r_{k\ell}), \alpha_k
      r_{k\ell}\omega (1+\varepsilon_k/r_{k\ell})],
  \end{equation}
  for a given $\varepsilon_k$. A standard application of a Chernoff
  bound gives:
  $$ \PP {\left|\textrm{Bin}(\alpha_kN, r_{k\ell}\omega/N) -
    \alpha_k r_{k\ell}\omega\right| \ge \varepsilon_k \alpha_k \omega} \le
  2 e^{- \alpha_k r_{k\ell} \omega h(\varepsilon_k / r_{k\ell})},
  $$
  where $h(\cdot)$ is a convex increasing function such that
  $h(0)=h'(0)=0$.

  Let us generalise and define the following events:
  $$
  {\cal E}_{k,f} = \{| V_k(f)-\alpha_kr_{k\ell}\omega| < \varepsilon_k \alpha_k \omega\}, 
  $$
  where $\varepsilon_k$ are chosen such that
  $
  0 < \varepsilon_k = \inf_{\ell, \ell':|r_{k\ell} - r_{k\ell'}| \ne 0} \frac {|r_{k\ell} - r_{k\ell'}|} 3.
  $ 
  Then by definition, it must be that on ${\cal E}_k := \bigcap_f {\cal
    E}_{k,f}$ the following desirable ``separability'' property holds:
  for two items $f \in D_\ell$ and $f' \in D_{\ell'}$, if $r_{k\ell} >
  r_{k\ell'}$, then $V_k(f) > V_k(f')$. In other words, the votes of
  users of class $C_k$ preserve the inherent ranking of items.

  Let us bound the probability of the event ${\cal E}_k$:
  $$
  \mathds P({\cal E}_k) \ge 1 - \sum_\ell |D_\ell| 2 e^{- \alpha_k r_{k\ell} \omega h(\varepsilon_k / r_{k\ell})}.
  $$

  We impose the following condition on the constant $C$ of~\eqref{D4}:
  $$
  C \ge F_1( \{\alpha_k\}, \{r_{k\ell}\} ),
  $$
  where
  $$
  F_1( \{\alpha_k\}, \{r_{k\ell}\} ) = \sup_{k,\ell} \frac 1 {\alpha_k r_{k\ell}h(\varepsilon_k/r_{k\ell})}.
  $$
  Then, since $|D_\ell| = \Theta(N)$, it follows that all the events
  ${\cal E}_k$ occur with high probability (i.e., with probability
  converging to $1$ as the number of users $N$ goes to infinity).
  We have proved the following
  \begin{lemma}
    If $\omega \ge C \log N$ and $C \ge F_1( \{\alpha_k\}, \{r_{k\ell}\} )$,
    then with high probability for any class $C_k$ and any two items $f
    \in D_\ell$ and $f' \in D_{\ell'}$ such that $r_{k\ell} > r_{k\ell'}$,
    we have that $V_k(f) > V_k(f') + C''\omega$ for some positive constant
    $C''$.
  \end{lemma}

  Denote the total number of votes of user $u \in C_k$ by $V_u$. It is
  distributed as a sum of binomials: $\sum_\ell \textrm{Bin}(|D_\ell|,
  \frac \omega N r_{k,\ell} )$. The expected number of votes is thus
  $\mathds E (V_u) = \sum_\ell |D_\ell| \frac \omega N r_{k\ell} =
  \Theta(\omega) = m_k\omega$, where $m_k = \sum_\ell \beta_\ell
  r_{k\ell}$.

  By a similar Chernoff bound argument for a given $\delta > 0$ we
  find
  $$
  \mathds P( V_u \ge m_k \omega + \delta \omega ) \le
  e^{-m_k\omega h(\frac \delta {m_k + \delta})}.
  $$
  Hence,
  $$
  \mathds P( \exists u : V_u \ge m_{k(u)} \omega + \delta \omega ) \le
  \sum_k |C_k| e^{-m_k\omega h(\frac \delta {m_k + \delta})}.
  $$

  Define
  $$
  F_2( \{m_k\} ) = \sup_{k} \frac 1 {m_k h(\frac \delta {m_k + \delta})}.
  $$

  The above probability goes to $0$ as $N$ goes to infinity under the
  additional conditions on the constant $C$ stated in the following
  \begin{lemma}
    Denote $\bar m = \max_k m_k$. If $C \ge F_2( \{m_k\} )$, all users
    $u$ verify with high probability
    $$
    V_u \le (m_{k(u)} + \delta)\omega \le (\bar m + \delta)\omega.
    $$
  \end{lemma}

  Let us now characterise the performance of the distributed voting
  scheme. By Assumption~\eqref{D3}, the constant vectors corresponding
  to the user classes $(\chi_k)_k$ are such that $\|\chi_k -
  \chi_{k'}\| = \Omega(1)$, for any $1\le k \ne k' \le K$.

  By hypothesis we have that $\rho_0 < \frac 1 6 \inf_{k,k'}\|\chi_k -
  \chi_{k'}\|$. A user~$u$ of class~$C_k$ queries other users having
  profile vectors within the ball $B(\sqrt N z_u, 2\rho_0)$ about some
  item~$f$. Thus, for any $u$ such that $\|\sqrt N z_u-\chi_{k}\| \le
  \rho_0$, all users in $B(\chi_{k}, \rho_0)$ are necessarily
  queried. {\em Henceforth we only take interest in such users}, as
  they constitute a fraction of the total number of users that goes to
  $1$ as $N$ goes to infinity.

  We denote by $V_u(f)$ the number of votes collected by $u$ for
  $f$. Then the difference $W_u(f) := V_u(f) - V_k(f)$ is the error that
  $u$ makes when estimating $V_k(f)$. It can be seen as the difference
  between the unwanted votes of users in various classes $C_{k'}$ whose
  profile vectors fall within $B(\sqrt N z_u, 2\rho_0)$ (and hence fall
  outside the ball of radius $\rho_0$ around the constant vector
  corresponding to their own class $B(\chi_{k'}, \rho_0)$) and the votes of
  users in $C_k$ who fall outside $B(\sqrt N z_u, 2\rho_0)$ (and implicitly
  outside of $B(\chi_{k}, \rho_0)$).

  We can bound $\sum_f |W_u(f)|$ as follows:
  \begin{align*}
    \sum_f |W_u(f)| &\le \mathop{\sum_{u'\in C_k:}}_{\sqrt N z_{u'} \not \in B(\chi_k,\rho_0)} V_{u'} + \mathop{\sum_{u'\not \in C_k:}}_{\sqrt N z_{u'} \not \in B(\chi_{k(u')},\rho_0)} V_{u'} \\
    &\le N_{out} \cdot (\bar m + \delta) \omega,
  \end{align*}
  where $N_{out} = |\{ u' : \sqrt N z_{u'} \not \in B(\chi_{k(u')},\rho_0)
  \}| = o(N)$, as shown in Theorem~\ref{thm:rmain}.

  %% Items $f \in D_\ell$ and $f' \in D_{\ell'}$ ($\ell \ne \ell'$) are
  %% said to be {\em inverted} in the view of user $u\in C_k$ if $
  %% r_{k\ell} > r_{k\ell'} \mbox{ and } V_u(f) \le V_u(f')$. In ${\cal
  %%   E}_k = \bigcap_f {\cal E}_{k,f}$, in order to have this inversion
  %% ($V_u(f) \le V_u(f')$) a necessary condition is
  %% $$
  %% |W_u(f)| + |W_u(f')| \ge \varepsilon_k \alpha_k \omega. 
  %% $$

  We call an item $f$ ``misclassified'' by $u$ if 
  $$ 
  V_u(f) = V_k(f) + W_u(f) \not \in [\alpha_k \omega
    (r_{k\ell}-\varepsilon_k-\varepsilon_k/2), \alpha_k \omega
    (r_{k\ell} + \varepsilon_k+\varepsilon_k/2)].
  $$

  If this is the case in ${\cal E}_k = \bigcap_f {\cal E}_{k,f}$, then
  necessarily $|W_u| \ge \alpha_k \omega \frac {\varepsilon_k} 2$, and
  the number of misclassified items is upper bounded as
  $$
  |\{f \mbox{ misclassified by $u$}\}| \le \frac {\sum_f |W_u(f)|}
  {\alpha_k \omega \frac {\varepsilon_k} 2} \le
  \frac {\bar m + \delta} {\alpha_k \frac {\varepsilon_k} 2} N_{out} = o(N).
  $$

  Thus, a constant sample of items chosen at random are well ranked
  (i.e., the order of the $V_u(f)$ is consistent with the order of the
  $r_{k\ell}$) with high probability for all users $u$ (within the
  same class of items the ordering is irrelevant).
\end{proof}

\subsection{Spectral Profiling in Practice}
\label{sec:speval}

In this section we evaluate the benefits of spectral techniques on a real trace provided by Netflix. The Netflix data set contains about $10^8$ user ratings for $17,770$ movies by $480,000$ users. The ratings are given in the form of an integer number of stars, ranging from $1$ to $5$.

%Algorithm~\ref{alg:rbcluster} recovers a set of $K$ clusters of users based on the rating matrix. Specifically, it associates $L$-dimensional coordinates to each of the users based on the spectral transformation, and subsequently determines a partition of the set of users into $K$ clusters.

We evaluate taste similarity between users that are assigned close-by profiles in the spectral embedding. We do this as follows: We select a set of $2000$ users and a set of $2000$ movies from the Netflix data set, such that the selected users have given roughly the same number of ratings within the selected movie set. The presence of a rating is viewed in this setting as a sign that the user has viewed that particular movie, and is therefore considered as a binary form of appreciation (the lack of a rating denoting a potential lack of interest for that content). Subsequently, for each user we hide the rating of one content at random. Using the remaining observed ratings, we build a sparse observed $0-1$ rating matrix $S$, and we compute the spectral profiles $\sqrt N z_u'$ of the users. For each user, an ordered list of neighbours is implicitly defined, from the closest to the farthest one in the profile space. We compute over the set of users the average frequency of the occurence of the following event: ``a user at distance $k$ has rated the hidden content''. The average is taken over the set of users. This ``frequency of agreement'' reflects the taste proximity of users.

\begin{figure}
\centering
\subfigure[Average frequency of agreement with nearest neighbour]{
\includegraphics[width=0.46\textwidth]{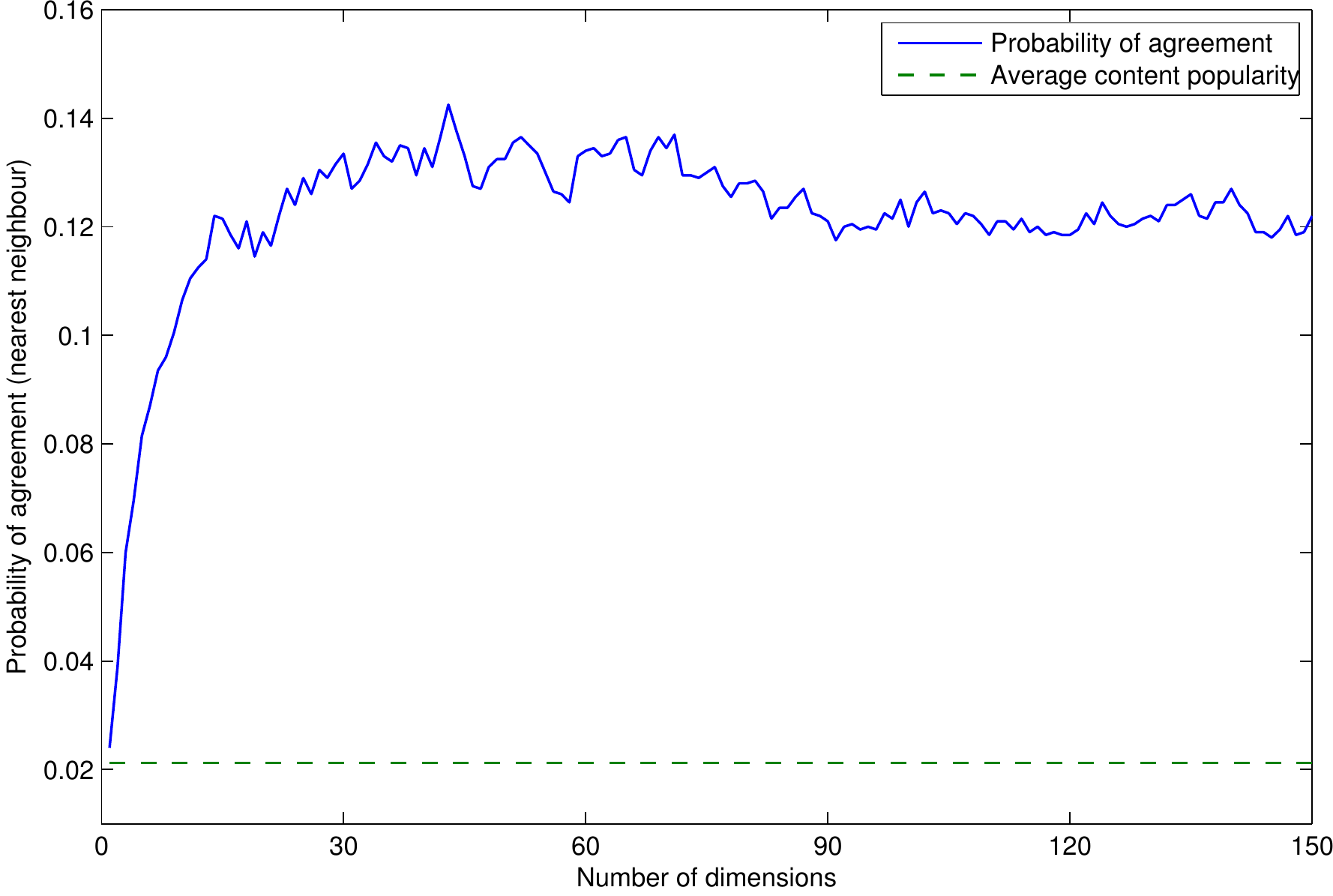}
\label{fig:dim-d1}
}
\subfigure[Average frequency of agreement with neighbours]{
\includegraphics[width=0.46\textwidth]{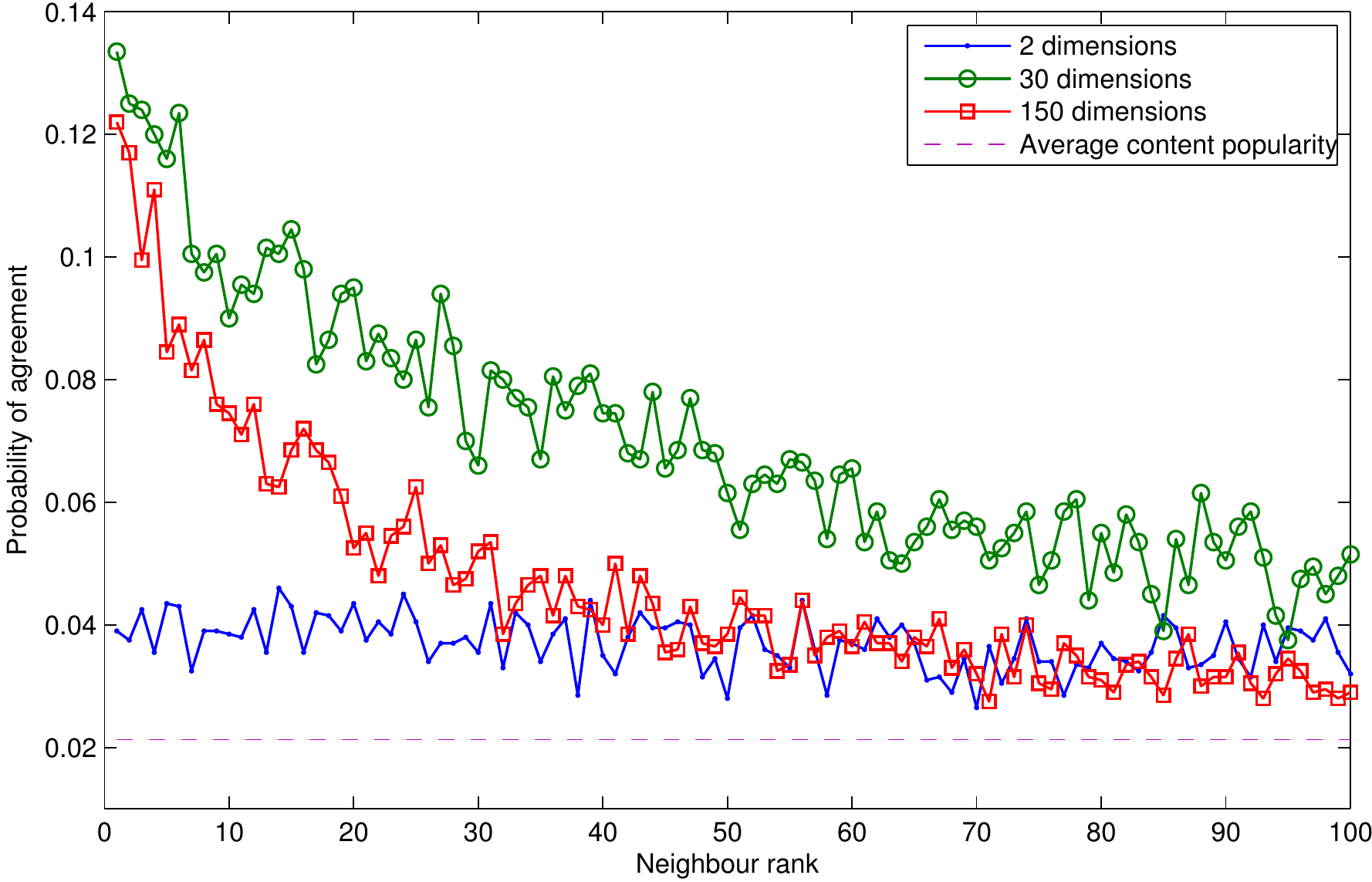}
\label{fig:rank}
}
\caption{Profiling Proximity}
\label{fig:embed}
\end{figure}

In Figure~\ref{fig:embed} we plot this ``frequency of agreement'' for different values of the dimensionality $L$ of the embedding. Content popularity (i.e. the fraction of users having rated it) ranges from $0.45\%$ to $5.3\%$, and the average popularity of content is $2.13\%$. We show this value on our plots for comparison. In Figure~\ref{fig:dim-d1} we vary the dimension from $2$ to $150$ and plot the average frequency of agreement with the nearest neighbour. We notice that there is a peak in this average frequency around roughly $30$ dimensions. Subsequently the plotted frequency decays slowly. In Figure~\ref{fig:rank} we plot the average frequency of agreement for $2$, $30$ and $150$ dimensions for the $100$ closest neighbours in the profile space. We conclude that an embedding of rank $30$ is appropriate to characterise user taste for the selected users in the trace.

It is important to observe that the frequency decays with distance in the profile space. This indicates that spectral profiling manages to capture user taste proximity.

\section{Oja's Algorithm and Beyond}
\label{sec:drw}
We now propose a method for extracting the eigenvectors of the
adjacency matrix of a graph in a distributed manner. Eigenvector
extraction is the object of Oja's algorithm~\cite{oja}, the basic
version of which is refined by Borkar and Meyn in~\cite{bmoja}.

Consider a sequence of symmetric square random matrices $(A_k\in
\mathbb R^{N\times N})_k$ of common finite mean $A\in \mathbb
R^{N\times N}$. Oja and Karhunen~\cite{oja} proposed the following
stochastic approximation algorithm for determining the $s$ top
eigenvectors of $A$:
\begin{align}
\tilde X_k &= X_{k-1} + A_k X_{k-1} \Gamma_k, \label{eq:oja-1} \\
X_k &= \tilde X_k R_k^{-1}, \label{eq:oja-2}
\end{align}
where $X_k$ is an $N\times L$ matrix, $\Gamma_k$ is a diagonal matrix
of gains and $R_k^{-1}$ is a matrix achieving the orthonormalisation
of the columns of $\tilde X_k$. They prove almost sure convergence
under typical assumptions on the sequence of gains, assuming unit
multiplicity of the top $s$ eigenvalues, probability density uniformly
bounded away from $0$ for each of the $A_k$, and almost sure
boundedness and symmetry of the $A_k$, as well as statistical
independence.

A simpler single-step form of the algorithm was also proposed by Oja
and Karhunen; the latter produces only an orthonormal basis of the
subspace spanned by the top $s$ eigenvectors. The convergence of a
slightly modified version thereof when $A$ is positive definite
was showed by Borkar and Meyn~\cite{bmoja}, who introduced the
additional factor $\frac 1 {1+\mbox{Tr}(X_kX_k')}$ and added the
additional i.i.d. $\mathcal N(0,I)$ noise sequence $\xi_k$:
$$
X_k-X_{k-1} = \frac {a_k} {1+\mbox{Tr}(X_kX_k')} [(I - X_{k-1}X_{k-1}')A_{k-1}X_{k-1} + \xi_k].
$$
Here $(a_k)$ is an almost typical gain sequence: 
$$
\sum_k a_k = \infty, \quad \sum_k a_k^2 < \infty, \quad \sup_k \frac{\sum_{n\ge k}a_n^2}{a_k} < \infty.
$$

None of these approaches are distributed however. Our contribution
consists precisely in augmenting these methods to make them
distributed.

\subsection{A Method for Distributed Spectral Profiling}
\label{sec:distro}
In Section~\ref{sec:theory} we have demonstrated the benefits of
spectral profiling. We have seen that, given a set of $N$ users, it is
sufficient that each of them contact on average only $\Omega(\log(N))$
other users at random and determine similarity with them to
essentially manage to characterise the profiles of everyone, by
application of the spectral transformation.

In this section we consider a (sparse) graph obtained via such a
probing process. We develop message-passing algorithms that enable all
users to individually compute their spectral profile, while allowing
communication only between pairs of neighbours. The algorithms we
propose in this section only require the connectivity property of the
graph (and hence no specific bound on neighbourhood sizes). However,
the scenario of a sparse graph is most appealing, as it fits very well
with the model we introduced in the previous section.

Let us thus consider a network described by an undirected graph
$\mathcal G = (\mathcal U, \mathcal E)$, where $\mathcal U$ is the set
of $N$ nodes (also designated interchangeably by ``users''), and
$\mathcal E$ is the set of edges connecting these nodes. We denote by
${\mathcal N}_u$ the set of neighbours of $u$, and also write $u\sim
v$ to indicate that two nodes $u$, $v$ are neighbours. Nodes that
share a link compute their ``similarity'' value, i.e., a real number
$A_{uv}$ which reflects pairwise taste similarity (in
Section~\ref{sec:theory} we were only considering binary values). The
computed values define a weighted (undirected) graph with adjacency
matrix $A$.

In this section we show that nodes are able to compute individually
via message passing their coordinates in an $L$-dimensional profile
space. These coordinates form a collection of $L$ linearly independent
vectors which span the vector space generated by the $L$ eigenvectors
corresponding to the top $L$ largest magnitude eigenvalues of $A$.
For binary similarity values, if we interpret the aforementioned
computed coordinates as user profiles, Theorem~\ref{thm:main} and its
corollaries still apply under the same assumptions~\eqref{C14}; thus,
for the considered class-based model of user taste, clusters
corresponding to the different classes still emerge. We show this in
Appendix~\ref{app:port}.

Let us now describe the proposed method for distributed user profile
computation.

By definition $A_{uu} = 0$ (and thus $\mbox{Tr}(A) = 0$), thus the
matrix is not positive semidefinite (it has necessarily negative
eigenvalues). We need to alter the matrix $A$ to guarantee positive
semidefiniteness, without changing its eigenvectors. To achieve this,
pick for example either one of the two solutions below:
\begin{itemize}
  \item[-] Compute the value $\Delta := \max_u \sum_{v}|A_{uv}|$
    (e.g., via a distributed voting scheme), and subsequently set the
    diagonal values of $A$ to a value larger than $\Delta$, e.g.,
    $A_{uu} := \Delta + \epsilon$ for some $\epsilon>0$. This
    procedure simply adds $\Delta + \epsilon$ to the eigenvalues of
    $A$, thus rendering them positive ($\Delta$ corresponds to the
    maximum degree of the graph in the binary case, and it is known
    that $|\lambda_1| < |\Delta|$).
  \item[-] Use matrix $A^2$, which has the same eigenvectors as $A$
    and eigenvalues equal to the square of those of $A$. The advantage
    of this second method is that the ranking of the magnitudes of the
    eigenvalues is preserved. However, to avoid direct communication
    with distance-$2$ neighbours, local gossiping should be used (as
    described in Section~\ref{sec:disteval}).
\end{itemize}

In the rest of this section we consider that one of the solutions
above is used and thus that matrix $A$ is rendered positive
semidefinite.

Given some fixed number $L$ of target eigenvectors to be extracted,
each user $u$ maintains at all time $t$ three sets of variables:
\begin{enumerate}
\item An $L$-dimensional row vector $X_u(t)$, the sought-for eigenvector coordinates;
\item An $L\times L$ matrix $\Phi_u(t)$ and a  scalar $\Psi_u(t)$, both playing an auxiliary role in the calculation.
\end{enumerate}
For ease of presentation, we assume slotted time $t=0,1,2,\ldots$, and synchronous updates at all peers. Asynchronous versions will be described and tested in the next section.

Our algorithm then takes the following form:
\begin{equation}
 X_u(t+1) - X_u(t)=\frac{ a(t)}{Y_u(t)}\left[\sum_{v\sim u}A_{vu}X_v(t) - N X_u(t) \Phi_u(t) + \xi_u(t+1)\right].
\label{dist1}
\end{equation}
In the above, $a(t)$ is a gain parameter to be specified,  $\xi_u(t+1)$ is a noise term deliberately introduced by user $u$, and the denominator $Y_u(t)$ is taken equal to 
\begin{equation}\label{dist2}
Y_u(t)=\max\left(1, |\Psi_u(t)|,\frac{1}{N L^2}\sum_{k,\ell=1}^L |(\Phi_u(t))_{k\ell}|\right).
\end{equation}
It is readily seen that the update (\ref{dist1}) can be computed locally at $u$, solely relying on variables local to node $u$ and inputs $X_v(t)$ from $u$'s neighbours $v\in {\mathcal  N}_u$. The same is also true for the updates of variables $\Phi_u$ and $\Psi_u$, which take the forms:
\begin{equation}\label{dist3}
\Phi_u(t+1)=\Phi_u(t)+b(t)\sum_{v\sim u}\left(\Phi_v(t)-\Phi_u(t)\right)+f_u(t+1)-f_u(t),
\end{equation}
where $b(t)$ is a gain parameter, $f_u(t)$ is a $L\times L$ matrix, specified by
\begin{equation}\label{dist4}
f_u(t)=X'_u(t) \sum_{v\sim u} A_{uv} X_v(t),
\end{equation} 
and 
\begin{equation}\label{dist5}
\Psi_u(t+1)=\Psi_u(t)+b(t)\sum_{v\sim u}\left(\Psi_v(t)-\Psi_u(t)\right)+g_u(t+1)-g_u(t),
\end{equation}
where $g_u(t)$ is a scalar, specified by
\begin{equation}\label{dist6}
g_u(t)=X_u(t) X'_u(t).
\end{equation} 
Before stating the main result of this section, we introduce the technical conditions that will be required from the gain sequences $a(t)$, $b(t)$:
\begin{subequations}
\label{cond1}
\begin{align}
\label{i} & a(t),\; b(t)\in[0,1],\; t>0,\\
\label{ii} & \sum_{t> 0}a(t)=\sum_{t> 0} b(t)=+\infty,\\
\label{iii} & \sum_{t> 0}a(t)^2<+\infty,\; \sum_{t> 0}b(t)^2<+\infty,\\
\label{iv} & \lim_{t\to\infty}\frac{a(t)}{b(t)}e^{K\sum_{s=1}^{t}a(s)}=0,\; K>0
\end{align}
\end{subequations}
Note that these conditions are satisfied for instance upon taking $a(t)=1/(t\log(t))$, and $b(t)=t^{-2/3}$.
Indeed, with this choice for $a(t)$, it is readily seen that 
$$
\sum_{s=1}^t a(s)\sim \log(\log(t)),
$$
and~\eqref{ii} follows. In addition, the quantity in~\eqref{iv} then reads
$$
\frac{1}{t^{1/3} \log(t)}e^{K(1+o(1))\log(\log(t))}\le \frac{(\log(t))^{2K-1}}{ t^{1/3}},
$$
where we have used the upper bound of 1 on the term $o(1)$, and property~\eqref{iv} readily follows.

We are now  in a position to state this section's main result:

\begin{theorem}\label{thm11}
Assume that the gains $a(t)$, $b(t)$ verify the
conditions~\eqref{cond1}. Assume further that the noise terms
$\xi_u(t)$ are i.i.d, white Gaussian noise. Assume finally that the
overlay graph over which peers communicate is connected, and that
matrix $A$ is positive semidefinite (see discussion above). Then the
distributed updating algorithm (\ref{dist1}--\ref{dist6}) verifies the
following property: With probability 1, the columns of
$X(t):=(X_u(t))_{u\in{\mathcal U}}$ converge to a collection of $L$
orthonormal vectors spanning the vector space associated with the $L$
largest eigenvalues of $A$.
\end{theorem}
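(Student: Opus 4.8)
The strategy is to cast the coupled updates (\ref{dist1}--\ref{dist6}) as a two-time-scale stochastic approximation, with $a(t)$ governing the ``slow'' Oja-type dynamics for $X$ and $b(t)$ governing the ``fast'' gossip averaging for the auxiliary quantities $\Phi_u,\Psi_u$, and then to invoke the Borkar--Meyn convergence theorem for the slow dynamics once the fast dynamics have been shown to track their quasi-stationary targets. Condition~\eqref{iv} is precisely what makes the two-time-scale argument go through: it ensures $b(t)$ is fast relative to $a(t)$ even after accounting for the possible exponential growth of the $X$-iterates over the transient phase.

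First I would analyze the fast dynamics. Stacking the $\Phi_u(t)$ into a single vector and writing $\mathcal L$ for the graph Laplacian of the (connected) overlay, equations (\ref{dist3}),(\ref{dist5}) have the form of a consensus iteration $\Phi(t+1) = \Phi(t) - b(t)\mathcal L\,\Phi(t) + \big(f(t+1)-f(t)\big)$, driven by the increments of $f_u(t)=X'_u(t)\sum_{v\sim u}A_{uv}X_v(t)$. Since the graph is connected, $\mathcal L$ has a one-dimensional kernel (the consensus direction) and is strictly contracting on its orthogonal complement; with $\sum b(t)=\infty$, $\sum b(t)^2<\infty$, standard results on distributed averaging of a slowly-varying input show that $\Phi_u(t)$ converges to the network average $\frac1N\sum_v f_v(t)$ up to an error that is $o(1)$ relative to the $a(t)$ time-scale. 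The key computation here is that $\frac1N\sum_v f_v(t) = \frac1N X'(t)AX(t)$, and similarly $\frac1N\sum_v g_v(t)=\frac1N\mathrm{Tr}(X(t)X'(t))$, so that on the slow time-scale the bracket in (\ref{dist1}) reads $AX - X\cdot\frac1N X'AX$ (after stacking), i.e.\ exactly the Oja vector field $(I - \tfrac1N XX'?)$—more precisely, the $u$-th row is $\sum_{v\sim u}A_{vu}X_v - X_u\,(X'AX)$, which upon stacking is $AX - X(X'AX)$. One must also check that the normalizer $Y_u(t)$ in (\ref{dist2}) converges to $\max(1,|\Psi|,\frac{1}{NL^2}\|\Phi\|_1)$ evaluated at the consensus values, so it becomes a common (node-independent), bounded, and bounded-away-from-zero scalar on the slow time-scale, and hence merely a harmless time-change of the gain $a(t)$ that preserves conditions~\eqref{ii},\eqref{iii}.

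Next I would treat the slow dynamics. With the fast variables replaced by their limiting targets, (\ref{dist1}) becomes, after stacking, a perturbed version of the Borkar--Meyn recursion
$$
X(t+1)-X(t) = \tilde a(t)\big[(I - \tfrac1N X(t)X(t)')\,A\,X(t) + \xi(t+1)\big] + \text{(vanishing error)},
$$
where $\tilde a(t)=a(t)/Y(t)$ still satisfies the gain conditions and $\xi(t+1)$ is the i.i.d.\ white Gaussian noise. Here I would appeal directly to the convergence result of Borkar and Meyn~\cite{bmoja} cited in the excerpt: since $A$ has been rendered positive semidefinite, their analysis (ODE method, with the added noise $\xi$ ensuring the iterates avoid the unstable manifolds and the extra factor controlling boundedness) guarantees that $X(t)$ converges almost surely to a set of $L$ orthonormal vectors spanning the top-$L$ eigenspace of $A$. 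The extra ingredients needed beyond a verbatim citation are: (a) a Lyapunov/stability argument showing the $X$-iterates stay a.s.\ bounded despite the two-time-scale coupling—this is where the transient exponential factor $e^{K\sum a(s)}$ in~\eqref{iv} is used, to dominate the worst-case growth rate $K$ of $\|X(t)\|$ coming from $\|A\|$ and absorb the fast-variable tracking error; and (b) verifying that the asymptotically negligible perturbation from the imperfect fast tracking satisfies the summability/decay hypotheses of the two-time-scale theorem (e.g.\ Borkar's two-time-scale stochastic approximation framework).

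The main obstacle I expect is establishing a.s.\ boundedness of $(X_u(t))$ and controlling the feedback loop between the two time-scales: the target of the fast recursion, $\frac1N X'(t)AX(t)$, itself depends on the (a priori unbounded) slow iterates, so the usual ``freeze the slow variable'' heuristic must be justified quantitatively. This is exactly why the normalizer $Y_u(t)$ is built from $|\Psi_u|$ and $\|\Phi_u\|_1$ (which scale like $\|X\|^2$): it provides an implicit, distributedly-computable normalization that keeps the effective increment of $X$ bounded, playing the role of the $1/(1+\mathrm{Tr}(XX'))$ factor of Borkar--Meyn but with quantities that are only locally available. Making this rigorous—showing $Y_u(t)$ is eventually a good proxy for the global normalizer and that the resulting recursion has a.s.\ bounded iterates—is the technical heart of the proof; the remainder is an application of known stochastic-approximation machinery.
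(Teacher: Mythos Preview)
Your overall architecture matches the paper's: two time scales, fast gossip tracking for $\Phi_u,\Psi_u$, then a reduction to a perturbed Borkar--Meyn recursion on which~\cite{bmoja} is invoked. You have correctly identified the role of condition~\eqref{iv} and of the normaliser $Y_u$.

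There is, however, one point where your plan diverges from what the paper actually does, and it matters. You frame the ``technical heart'' as establishing that the $X$-iterates are \emph{a.s.\ bounded} and that $Y_u(t)$ is a good proxy for the global normaliser, and you seem to want these before proving the fast tracking. The paper does \emph{not} prove boundedness of $X(t)$ prior to the tracking lemma. Instead it proves only a crude \emph{growth} bound: because $Y_u(t)$ dominates both $|\Psi_u|$ and $\frac{1}{NL^2}\sum_{k,\ell}|(\Phi_u)_{k\ell}|$, the drift $F(X,\Phi,\Psi)$ in (\ref{dist1}) satisfies $\|F\|\le K_1\|X\|$ \emph{regardless of how far $\Phi_u,\Psi_u$ are from their targets}. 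This linear-in-$\|X\|$ estimate yields $\|X(t+1)\|\le e^{K\sum_{s\le t}a(s)}(\|X(1)\|+M)$ for an a.s.\ finite $M$, which may well diverge. The point of condition~\eqref{iv} is then that $\frac{a(t)}{b(t)}e^{K\sum_{s\le t}a(s)}\to 0$, so the forcing term $f(t{+}1)-f(t)$ (resp.\ $g(t{+}1)-g(t)$) in the gossip recursion, which scales like $a(t)\|X(t)\|^2$, is still $o(b(t))$ on the non-consensus eigenspace of the Laplacian; this is what makes the tracking go through. Only \emph{after} tracking is established does (\ref{dist1}) reduce to the perturbed Borkar--Meyn form, and boundedness is inherited from the analysis in~\cite{bmoja} rather than proved separately. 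In short: the order is (i) crude growth bound from the design of $Y_u$, (ii) tracking via~\eqref{iv}, (iii) Borkar--Meyn; your plan appears to interleave (i) and (ii) in a way that risks circularity.
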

The proof of the theorem is given in Section~\ref{proof:thm11}. In what follows, we provide some background and intuition for it.

\index{Algorithm!Oja's}
Consider first the main equation, (\ref{dist1}). If we ignore the denominator $Y_u(t)$, the noise term $\xi_u(t+1)$, and replace the term $N X_u(t) \Phi_u(t)$ by $\sum_v X_u(t)f_v(t)$, where $f(t)$ is as given in (\ref{dist4}), this equation reads, written in matrix form:
$$
X(t+1)-X(t)=a(t)\left[ A X(t)- X(t)X'(t)A X(t)\right].
$$
This is in fact the celebrated Oja algorithm, proposed by Oja and Karhunen~\cite{oja} to extract precisely the eigenvectors of the largest eigenvalues of $A$. Oja's algorithm is subject to some stability issues, that Borkar and Meyn~\cite{bmoja} proposed to fix by scaling down the right-hand side of the previous equation by some factor $Z(t)=1+\sum_{u,k}X_{u,k}^2(t)$, and by adding an extra noise term $\xi(t+1)$ in the bracket in the right-hand side. Thus, the update rule they considered reads:
\begin{equation}
\label{borkar-meyn}
X(t+1)-X(t)=\frac{a(t)}{Z(t)}\left[ A X(t)- X(t)X'(t)A X(t)+\xi(t+1)\right], 
\end{equation}
and is proved in (\cite{bmoja}) to converge with probability 1 to the desired eigenvectors, under assumptions~\eqref{ii},\eqref{iii} on the gains $a(t)$, and similar conditions on the noise $\xi(t)$ as in our theorem.

However, algorithm~(\ref{borkar-meyn}) does not lend itself to a distributed implementation, since neither of the two terms $X'(t)AX(t)$ or $Z(t)$ can be computed locally by the users. 

To solve this issue, we introduce the auxiliary local variables $\Phi_u$, $\Psi_u$. The dynamics (\ref{dist3}--\ref{dist5}) according to which they evolve is best understood by setting to zero the input terms $f_u(t+1)-f_u(t)$ and $g_u(t+1)-g_u(t)$ in the right-hand side. It then becomes apparent that these dynamics perform local averaging (also known as gossiping in \cite{boyd}). Thus these eventually converge to a state where all variables $\Phi_u(t)$ coincide with the average $(1/N)\sum_v \Phi_v(0)$ of the original entries.

We can now provide a heuristic argument for the theorem. On a fast time scale, characterised by the gain parameters $b(t)$, the gossiping dynamics converge to almost constant vectors, with
$$
\begin{array}{ll}
\Phi_u(t)\equiv \frac{1}{N}\sum_{v}f_v(t), &u\in {\mathcal U},\\
\Psi_u(t)\equiv \frac{1}{N} \sum_{v}g_v(t) &u\in {\mathcal U}.
\end{array}
$$
Then on a slower time scale dictated by the gain parameters $a(t)$, the variables of interest $X_u(t)$ follow dynamics very close to (\ref{borkar-meyn}). Indeed, the auxiliary parameters $\Phi_u$, $\Psi_u$ track accurately the desired terms $X'(t) A X(t) $ and $Z(t)$ respectively. 

A  couple of remarks are in order. The stabilisation by the scaling factor $Z(t)$ in (\ref{borkar-meyn}) seems insufficient in the presence of the additional dynamics (\ref{dist3},\ref{dist5}). This leads us to introduce our alternative stabilisation via $Y_u(t)$ in (\ref{dist2}). Also, in problems with dynamics at two time scales a common assumption on the gain parameters is that $a(t)/b(t)\to 0$. In the present case, a stronger form of time scale separation (namely, condition~\eqref{iv}) is needed, to prevent reinforcing instabilities between the two dynamics.

\subsection{Evaluation of an Asynchronous Version}
\label{sec:disteval}
In this section we present numerical evaluations on synthetic data. We exhibit convergence of an asynchronous version of the distributed coordinate assignment scheme presented in the previous section on synthetic data generated according to the model presented in Section~\ref{sec:sq}.

%\subsection{Asynchronous distributed scheme}
In Section~\ref{sec:distro} we showed that the distributed algorithm~(\ref{dist1}--\ref{dist3}) converges almost surely towards $L$ linearly independent vectors spanning the vector space generated by the eigenvectors corresponding to the top $L$ magnitude eigenvalues of the adjacency matrix $A$.

In the following, we evaluate the asynchronous version of the algorithm. In this setting, each node $u$ keeps track of its own coordinates $X_u$ as well as the gossiped variables $\Phi_u$ and $\Psi_u$. However, instead of explicitly imposing a timescale separation via gains $a(t)$ and $b(t)$ while enforcing a synchronised evolution of all the quantities, we impose distinct rates at which the updates are performed. Namely, the coordinate updates~(\ref{dist1}) are performed independently according to Poisson processes of rate $\lambda$, while gossiping~(\ref{dist2}-\ref{dist3}) is performed independently ``pairwise'' according to Poisson processes of rate $\mu \gg \lambda$. By pairwise we mean that a pair of nodes $(u,v) \in \mathcal E$ will exchange and update their values for $\Phi$ and $\Psi$ at rate $\mu$ similarly to the randomised gossiping technique from~\cite{boyd}.

Furthermore, we replace the adjacency matrix by its square $A^2$. We choose to do so, since the latter is positive semidefinite, has the same eigenvectors as $A$, and a spectrum composed of the squared eigenvalues of $A$. Implicitly, the eigenvectors are ordered according to the magnitude of the eigenvalues of $A$, instead of their actual values. In turn, this modification alters the function $f_u$ from~(\ref{dist4}), which becomes:
\begin{equation}
\label{eq:fmod}
f^{(2)}_u = \left(\sum_{v\sim u}A_{uv} X_v\right)' \left(\sum_{v\sim u}A_{uv} X_v \right).
\end{equation}
The algorithm executed at each node is summarised in Algorithm~\ref{alg:distro_mod}.

\index{Algorithm!Distributed spectral profiling}
\begin{algorithm}[ht]
  \caption{Distributed profiling algorithm}
  \label{alg:distro_mod}

  \begin{minipage}[b]{.47\textwidth} 
  {\small \textsc{Node($u$)::Update-Local}()} at rate $\lambda$ \\
  {\small Local Variables: $X_u, \Pi_u, w_u, X^0_u, \Pi^0_u, \Psi_u, \Phi_u$}\\[-.5cm]
  \begin{algorithmic}[1]
    \IF{$w_u = 1$}
    \STATE Retrieve partial product vectors $\Pi_v$ from $v\in \mathcal N_u$
    \STATE $X_u := X_u + \gamma \frac{\sum_{v\sim u} A_{uv}\Pi_v - N X_u \Phi_u}{Y_u(\Psi_u, \Phi_u)}$ 
\label{algst:upd}
    \ELSE
    \STATE Retrieve vectors $X_v$ from $v\in \mathcal N_u$
    \STATE $\Pi_u := \sum_{v\sim u} A_{uv}X_v$ \label{algst:pi}
    \ENDIF
    \STATE $w_u := 1 - w_u$
  \end{algorithmic}
  \end{minipage}
  \hfill
  \begin{minipage}[b]{.47\textwidth} 
  {\small \textsc{Link($u,v$)::Gossip}()} at rate $\mu$ \\
  \vspace{-13pt}
  \begin{algorithmic}[1]
  	\STATE Retrieve local variables at $u$ and $v$
  	\FOR{$(H,h)$ in $\{(\Phi, f^{(2)}),(\Psi,g)\}$}
  		\STATE $\alpha := \frac {H_u + H_v} 2$
  		\FOR{$i$ in $\{u, v\}$}
  			\STATE $\delta_i := h_i( X_{i}, \Pi_i ) - h_i( X^0_{i}, \Pi^0_i )$
  			\STATE $H_i := \alpha + \delta_i$
  		\ENDFOR
  	\ENDFOR
  	\STATE $X^0_{u} := X_{u}$, $\Pi^0_{u} := \Pi_{u}$, $X^0_{v} := X_{v}$, $\Pi^0_{u} := \Pi_{u}$\\
  \end{algorithmic}
  \end{minipage}
\end{algorithm}

Since sparsity is not preserved by taking the square of $A$, we cannot simply use $A^2$ as a new adjacency matrix. Thus, we need to compute products $A^2X$ specifically: For some node $u$, every other call to the {\small \textsc{Update-Local}()} procedure computes the partial products $\Pi_u = \sum_{v\sim u} A_{uv}X_v$ (State~\ref{algst:pi} in the Algorithm). Subsequently, the neighbours' partial product vectors $\Pi_v$ are used in the coordinate update procedure~(\ref{dist1}) at State~\ref{algst:upd} and for the gossiping of $f^{(2)}_u = \Pi'_u \Pi_u$. Each node additionally stores its previous vectors $X^0_u$ and $\Pi^0_u$ for use in the {\small \textsc{Gossip}()} procedure.

The only piece of global information required is the number of users in the system (or an approximation thereof). In the {\small \textsc{Update-Local}()} procedure, we used a fixed gain $\gamma$. The noise component $\xi$ is omitted in the algorithm. Even so, noise is intrinsic to the algorithm as it is introduced by both the gossip averaging and by the fact that exchanges are asynchronous.
\begin{figure}
\centering
\includegraphics[width=.6\textwidth]{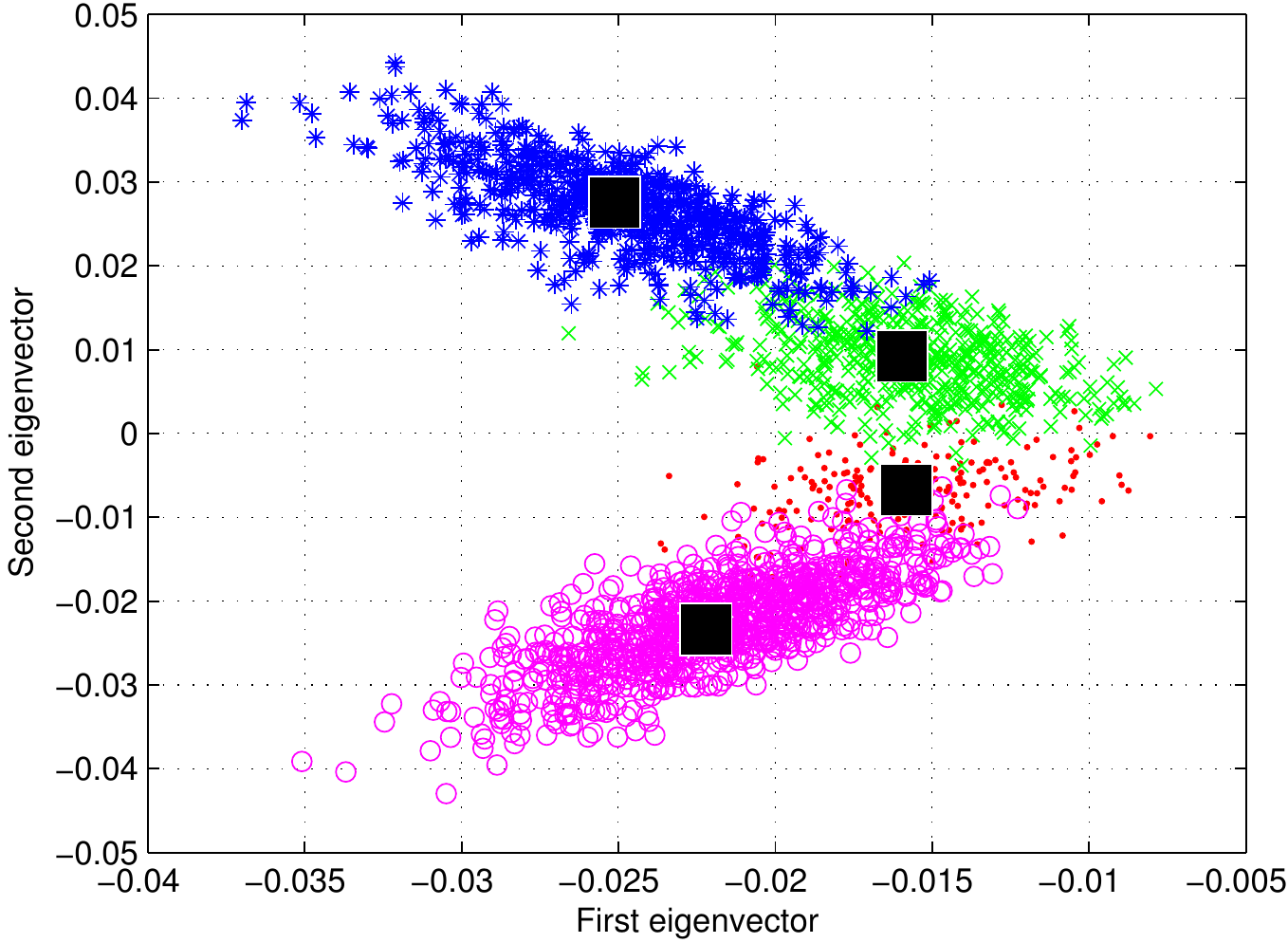}
\caption{Pure and noisy user profiles}
\label{fig:clx}
\end{figure}

For evaluation of convergence, we used a synthetic data trace generated according to our model from Section~\ref{sec:sq}. This trace considers $2200$ users clustered in $4$ classes of sizes $200, 500, 600$ and $900$. The probability matrix $pB$ characterising the classes is
$$
pB = \frac 1 {100} \left(\begin{array}{cccc}
  0.5 & 1.5 & 2.0 & 1.0 \\
  1.5 & 0.55 & 1.0 & 2.0 \\
  2.0 & 1.0 & 0.45 & 4.0 \\
  1.0 & 2.0 & 4.0 & 0.5
\end{array}\right).
$$
We consider an adjacency matrix $A$ generated using the aforementioned parameters. This matrix is sparse (the average node degree is $40$). For visualisation ease, we consider the case $L = 2$ (i.e. the profile space is a 2-dimensional plane). In Figure~\ref{fig:clx} we plot the first two eigenvectors of the expected adjacency matrix $\bar A$. They are constant on each of the $4$ classes, hence the plot is constituted of $4$ points, which are depicted as the four black squares. They represent the pure user profiles which characterise each of the four classes. Additionally, we plot the first two eigenvectors of the ``noisy'' adjacency matrix $A$, with elements belonging to a specific class marked with the same symbol. We have the visual confirmation that despite the sparsity of matrix $A$, the noisy profiles are grouped around the pure profiles. 

We choose a gain $\gamma = 0.001$, an update rate $\lambda = 0.2$ and a gossip rate $\mu = 10$. We initialise the algorithm at a random state. In Figure~\ref{fig:traj} we plot the time evolution of the proportion of the mass of the two coordinate vectors $X_{\cdot1}$ and $X_{\cdot2}$ (aggregated across users) that falls on the space orthogonal to the 2-dimensional eigenspace generated by the first two eigenvectors of matrix~$A^2$. Additionally, we plot the scalar product of the two coordinate vectors. After roughly $400$ time units, we observe convergence towards orthogonal vectors spanning the desired eigenspace.

\begin{figure}
\centering
\includegraphics[width=.6\textwidth]{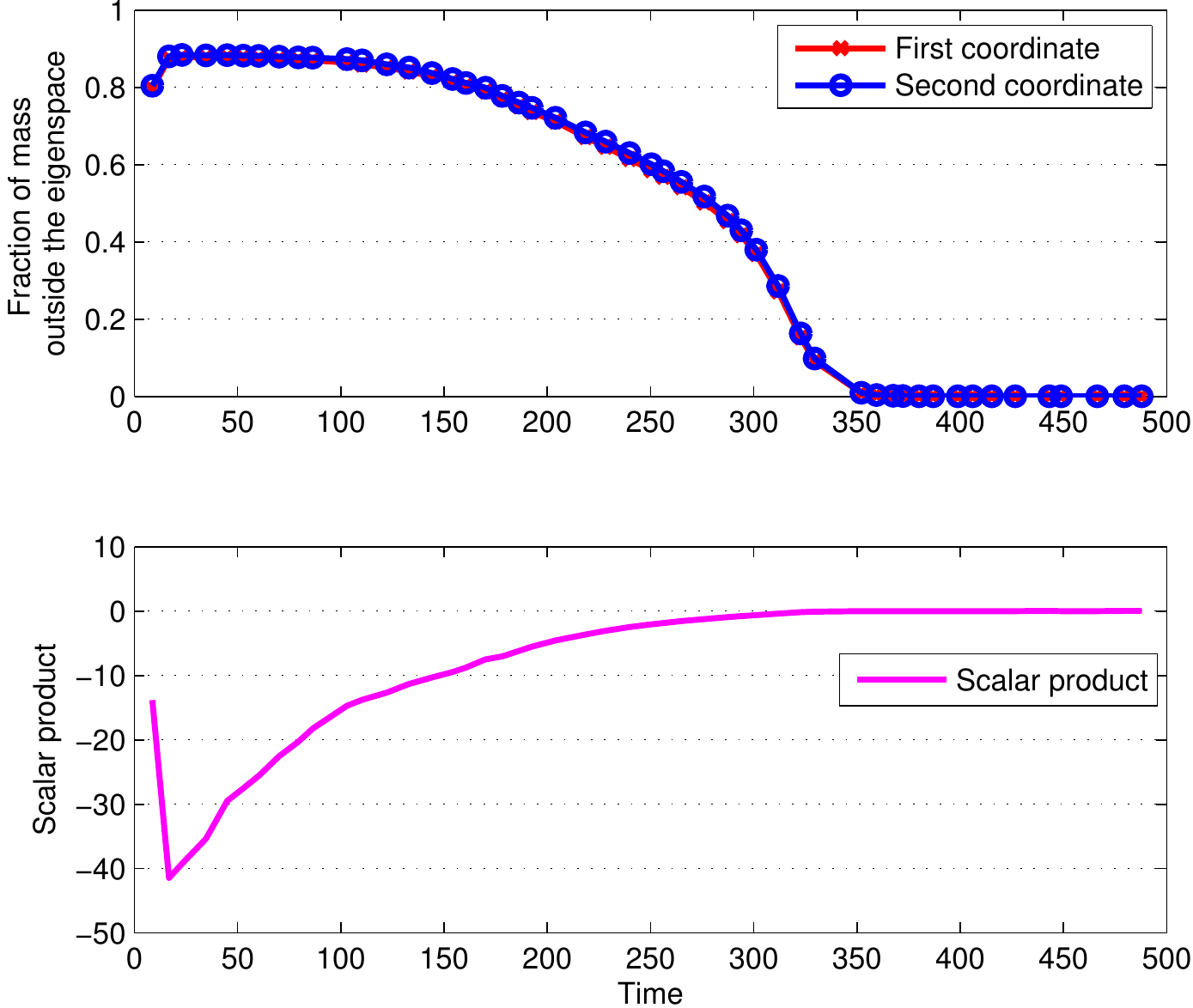}
\label{fig:traj}
\caption{Convergence of the asynchronous distributed algorithm}
\end{figure}

\section{Conclusions}
\label{sec:concl}
In this paper we addressed the problem of distributed user profiling and recommendation.

We first showed that spectral techniques constitute an appealing approach, and obtained novel results on their efficiency, thereby improving upon previous literature on the subject of spectral clustering. We showed that a for a low-rank probabilistic model of user taste, a simple distributed algorithm based on local votes in the profile space asymptotically achieves accurate prediction of user preference.

We developed techniques for computing eigenvectors in a distributed manner. Our solution combines ideas from Oja's algorithm with gossiping algorithms. From a theoretical standpoint, it essentially relies on a special form of multiple time scale stochastic approximation. The resulting technique may have other applications besides user profiling. 

Finally we evaluated our proposed methods on synthetic and actual data traces. We thereby validated our analysis in observing convergence to the desired eigenvectors. We could further show, based on the Netflix prize dataset, that accurate recommendations could be made at limited communication cost based on our spectral embedding. 

Several research directions can be envisioned to take this work further. One intriguing problem concerns privacy. While our methods do not rely on direct exchange of sensitive private information, they may nevertheless lead to private information leakage. A distributed solution avoiding the issue is yet unavailable.

Other directions concern the fine tuning of the methods. The issue of analytical selection of the number of eigenvectors has not been addressed here. The recent work of Shi et al. \cite{eigenspaces} could be an appealing solution.

\bibliographystyle{plain}
\bibliography{refs}

\appendix
\section {Proof of Lemma~\lowercase{\ref{lemma:dominant}}}
Consider an eigenvalue $\bar \lambda\ne 0$ of $\bar A$ and a corresponding normalised eigenvector $\bar x$, $\|\bar x\|_2=1$. For $1 \le u \le N$,
\begin{equation*}
(\bar A \bar x)(u) = \frac \omega N \sum_{v=1}^N b_{k(u)k(v)}\bar x(v) 
= \frac \omega N \sum_{\ell=1}^K b_{k(u)\ell} \sum_{v : k(v) = \ell} \bar x(v) = \bar \lambda \bar x(u).
\end{equation*}
For a large enough $N$, we have that $|C_k| > 1$, for all $k$. This is true, since the size of each class grows linearly with $N$. Then for all $k$ and for all $u, u' \in C_k$, with $u' \ne u$ it follows that $\bar x(u') = \bar x(u)$.

Denote the value of $\bar x(u)$ by $\hat y(k(u))$.
Then, for all $1 \le \ell \le K$, 
\begin{equation*}
\sum_{\ell'=1}^K \alpha_{\ell'} b_{\ell\ell'} \hat y(\ell') = \frac {\bar \lambda} \omega \hat y(\ell).
\end{equation*}

Thus $\hat y$ is an eigenvector of the $K \times K$ matrix \mbox{$M = B\diag\alpha$} corresponding to its eigenvalue~\mbox{$\frac {\bar \lambda} \omega$}. Since $M$ is a constant matrix, its eigenvalues are also constants. Hence it must be that there exists a constant $c$ such that $\bar \lambda = c \omega = \Theta( \omega )$. By Condition $\eqref{C2}$ we conclude that the top $L$ magnitude eigenvalues of $\bar A$ have distinct absolute values.

Finally, since $\bar x(u) = \hat y(k(u))$ we have that 
$$
1 = \|\bar x\|_2 = \sqrt N \|\hat y\|_{\alpha} \iff \|\hat y\|_{\alpha} = \frac 1 {\sqrt N}.
$$
Since $y = \frac {\hat y} {\|\hat y\|_{\alpha}}$, we must have that $x(u) = \hat y(k(u)) =  \frac {y(k(u))} {\sqrt N}$, which proves \eqref{eq:eigcorresp}.
\hfill$\Box$

Note that a consequence of this result is that $\bar A$ can have a number of non-zero eigenvalues that is at least $L$ (by Condition \eqref{C1}) and at most $K$ (i.e. the maximum number of eigenvalues of matrix $M$).

\section{Proof of Lemma~\lowercase{\ref{lemma:feige}}}
\label{sec:proof-feige}
We start by establishing a simple result of stochastic dominance.

By definition a random variable $X$ is dominated for the convex ordering by another random variable $Y$ (written as $X \le_{cx} Y$) if for any convex function $f:[c,d] \to \mathbbm R$ such that $\mathbbm Ef(X)$ and $\mathbbm Ef(Y)$ exist, we have $\mathbbm Ef(X) \le \mathbbm Ef(Y)$.

It is known that (see for instance~\cite{stoyan}, Theorem~1.5.20): If $X \le_{cx} Y$, then there exist $\hat X$ and $\hat Y$ with the same distributions as the original variables, but which are such that $\mathbbm E (\hat Y|\hat X) = \hat X$.

Another result found in~\cite{stoyan} states that for some closed interval $[c,d]$, if $X:\Omega \to [c,d]$ and $Y:\Omega \to \{c,d\}$ are two random variables such that $\mathbbm EX = \mathbbm EY$, then $X \le_{cx} Y$.

In this latter setting, we wish to establish a variant of the former result. Namely, for some closed interval $[c,d]$, if $X:\Omega \to [c,d]$, we wish to construct a random variable $\hat Y:\Omega \to \{c,d\}$ supported on the extremities $\{c, d\}$ of the interval, such that $\mathbbm EX = \mathbbm E\hat Y$ and $\mathbbm E (\hat Y|X) = X$. To achieve this, pick a uniformly distributed random variable $U \sim \mathcal U [0,1]$ independent of $X$. We define the random variable $\hat Y:\Omega \to \{c,d\}$ as $\hat Y = F( X, U )$, where
$$
F(x,u) = d - \mathbbm 1_{\{x \le u (d-c) + c\}} ( d - c ).
$$
Then 
$$
\mathbbm E (\hat Y|X) = \mathbbm E ( F( X, U ) |X) = 
d - \frac {d-X}{d-c} (d-c) = X.
$$
Since $\hat Y$ can possibly take only two values, we can compute the corresponding probabilities:
\begin{equation}
\label{eq:prb}
\mathbb P( \hat Y = d ) = \int_c^d P_X(dx) \int_0^{\frac {x-c}{d-c}} du = \frac {\mathbbm E X - c}{d-c} \mbox{ and } \mathbb P( \hat Y = c ) = \frac {d - \mathbbm E X}{d-c},
\end{equation}
and hence $\mathbbm E X = \mathbbm E \hat Y$.

Let us now proceed with the proof of the Lemma.

Denote by $Q := A - \mathbbm E A$. Denote further by $p_{max} = \max_{ij} p_{ij}$ and by $p_{min} = \min_{ij} p_{ij}$. Then the elements of $Q$ all belong to the interval $[-p_{max}, 1 - p_{min}]$. For a large enough $N$ such that $p_{max} < 1$, there exist $\alpha = \frac {1 - p_{min}} {1-p_{max}} > 1$ and $p = p_{max}$, such that $[-p_{max}, 1 - p_{min}] \subset \alpha [-p, 1-p]$.

Consider a symmetric matrix $U$ of independent uniformly distributed random variables $\{U_{ij} = U_{ji} \sim \mathcal U[0,1]\}_{i<j}$. Then for each entry $Q_{ij}$ of $Q$, such that $i<j$, which we regard as a random variable with values in the interval $\alpha[-p,1-p]$, we construct the random variable $Z_{ij} = F( Q_{ij}, U_{ij} )$ which has the desired property written in matrix form
\begin{equation}
\label{eq:cexp}
\mathbbm E (Z|Q) = Q.
\end{equation}
Note that the entries of $Z$ are mutually independent by construction. Furthermore, the random variables defined as $\{Y_{ij} = \alpha^{-1} Z_{ij} + p, \ Y_{ji} = Y_{ij}\}_{i<j}$ are mutually independent Bernoulli random variables of parameter $p$ and hence form the adjacency matrix $Y$ of an Erdos-Renyi graph of parameters $(N,p)$. Denote $\bar Y := \mathbbm E Y = p(ee' - I)$, where $e$ is the all-ones column vector and $e'$ denotes transposition. 

Let us now prove that the spectral radius of $Z = \alpha(Y - \bar Y)$ is upper bounded by $O(\sqrt \omega)$ with high probability.

Since $\omega = \Omega( \log N )$, and the $\{Y_{ij}\}_{i<j}$ are mutually independent, we can apply the results from~\cite{feige}. Let $y$ be any vector of norm $1$ and denote $u := \frac  1 {\sqrt N} e$. We can decompose $y$ as follows: $y = ax + bu$, where $a^2 + b^2 = 1$, and $x$ is a vector of norm $1$ orthogonal to $u$, $x \perp u$.
\begin{equation*}
|y'(Y - \bar Y)y| \le 2\underbrace{|ab x'(Y-\bar Y)u|}_{T_1} + a^2\underbrace{|x'( Y - \bar Y )x|}_{T_2}
+ b^2\underbrace{|u'( Y - \bar Y ) u|}_{T_3}.
\end{equation*}
Denote by $\delta_i$ the degree of node $i$ and by $\bar \delta := \frac {\sum_i \delta_i} N$ the average degree. According to Lemma 2.2 from~\cite{feige} and taking into account the fact that $e$ is an eigenvector of $\bar Y$ we have
\begin{equation*}
T_1 = |x'Yu| \le 2 \sqrt{ \bar \delta },\mbox{ with probability $1 - e^{-\Omega((N\omega)^{1/3})}$}.
\end{equation*}

By Theorem 2.5 and Claim 2.4 from~\cite{feige}, we have that for every constant $c_1 > 0$, there exists another constant $c_2 > 0$ such that:
\begin{equation}
\label{eq:fo25}
|x'Yx| \le c_2 \sqrt \omega,\mbox{ with probability $1 - N^{-c_1}$.}
\end{equation}
We will restrict ourselves to constants $c_1 > 1$ for reasons that will become apparent later in the proof. Thus, we can bound the second term with probability $1 - N^{-c_1}$:
\begin{eqnarray*}
T_2 & \le & |x'Yx| + |x'\bar Yx| \stackrel{\mbox{\scriptsize(\ref{eq:fo25})}}{\le} 
O(\sqrt \omega) + |\sum_i x_i \sum_{j:i \ne j} p x_j| \\
& = & O(\sqrt \omega) + |- p \sum_i x_i^2 | = O( \sqrt \omega ) + \Theta (\frac \omega N).
\end{eqnarray*}
Finally, using a Chernoff bound we find
\begin{equation*}
T_3 = |\bar \delta - \omega| = O( \sqrt \omega ),\mbox{ with prob. $1 - e^{ -\Omega( N ) }$}.
\end{equation*}

Thus, $\rho( Z ) = O( \sqrt \omega )$, with probability $1 - N^{-c_1}$. Furthermore, there exists a constant $a>0$ such that $\rho( Z ) < aN$.

Let us now finally characterise the spectral radius of $Q$.

Using the fact that the spectral radius is a convex function and by Jensen's inequality, we get
$$
\rho( Q ) = \rho( \EE{ Z | Q } ) \le \EE{ \rho( Z ) | Q } 
$$
We have a random variable $R := \rho( Z )$ supported on $[0,aN]$, such that $\mathbbm P(R>t)\le O(N^{-c_1})$ for $t=O(\sqrt{\omega})$ and we wish to deduce that the conditional expectation $S:=\mathbbm E(R | Q)$ is also upper bounded by $O(\sqrt \omega)$ with high probability.

Since $R$ and $S$ have countable state spaces, it makes sense to consider
$$
\beta(s) := \mathbbm P ( R>t | S=s ).
$$
Since $R : \Omega \to [0, aN]$, and since $\mathbbm E( R | S ) = S$,
we have that
$
\beta(s)(aN)+(1-\beta(s))t \ge s,
$
and thus, 
$
\beta(s)\ge (s-t)/(aN-t).
$
Denoting $\gamma :=\mathbbm P(S> t+1)$, we have 
$$
\mathbbm P(R>t)=E(\beta(S))\ge E(\beta(S) \mathbbm 1_{\{S>t+1\}} ) \ge \frac {t+1-t} {aN-t} \gamma = \frac \gamma {aN-t}.
$$
Hence 
$$
\gamma=\mathbbm P(S>t+1)\le (aN-t) \mathbbm P(R>t)= (aN-t) O(N^{-c_1})=o(1),
$$
since we considered $c_1>1$.
\hfill$\Box$

\section {Proof of Lemma~\lowercase{\ref{lemma:pert}}}
We will show the two claims~(\ref{eq:eigdiff}) and~(\ref{eq:vecdiff}) by induction. Denote~(\ref{eq:eigdiff}) by $\mathcal P_k$ and~(\ref{eq:vecdiff}) by $\mathcal Q_k$.

We will begin by proving $\mathcal P_1$ and $\mathcal Q_1$. Since we make extensive use of Lemma~\ref{lemma:feige}, it is implied that all inequalities in this proof hold with high probability in the sense of Lemma~\ref{lemma:feige} (that is with probability $1 - e^{-\Omega(N)}$). All vectors are column vectors and we denote by $x'$ the transpose of vector $x$. Furthermore, for two vectors $x$ and $y$ by $x \perp y$ we mean that their scalar product $x'y$ equals zero.

Using the variational characterisation of eigenvalues and Lemma~\ref{lemma:feige}, we get:
\begin{equation*}
\left| |\lambda_1| - |\bar \lambda_1| \right| \stackrel{\mbox{\scriptsize wlog}}{=} |\lambda_1| - |\bar \lambda_1| 
\le |\lambda_1| - |x_1'\bar A x_1| \le | x_1' (A - \bar A) x_1 | \le O( \sqrt \omega ),
\end{equation*}
which proves $\mathcal P_1$.

We denote the first eigenvector of $A$ by $x_1 = a_1 \bar x_1 + b_1 \bar y_1$, where $a_1^2 + b_1^2 = 1$, $\bar x_1$ is the first eigenvector of $\bar A$ and $\bar x_1 \perp \bar y_1$. Then, by making use of $\mathcal P_1$, there exist positive constants $\theta_1$ and $\theta_2$ such that
\begin{equation*}
|\bar \lambda_1| - \theta_1 \sqrt \omega \le |\lambda_1| \le a_1^2|\bar x_1'\bar A \bar x_1| + b_1^2|\bar y_1'\bar A \bar y_1| + \theta_2 \sqrt \omega.
\end{equation*}
We took into account the symmetry of $\bar A$ and the fact that $\bar A \bar x_1 = \bar \lambda_1 \bar x_1$. By the Courant-Fischer theorem, we get the following inequality:
\begin{equation}
\label{eq:pert}
|\bar \lambda_1| - \theta \sqrt \omega \le |\bar \lambda_1 | - b_1^2( |\bar \lambda_1| - |\bar \lambda_2| ), \ \theta > 0
\end{equation}
and since the top $L$ eigenvalues of $\bar A$ are distinct (by Condition~\eqref{C2}), it holds that $0 < |\bar \lambda_1| - |\bar \lambda_2| = \Theta( \omega )$. We get that \mbox{$b_1^2 \le O(\frac 1 {\sqrt \omega})$}, thus proving $\mathcal Q_1$.

In order to generalise this result, we make use of the following simple lemma:
\begin{lemma}
\label{lemma:perp}
Let $x_1$ and $\bar x_1$ be two non-orthogonal vectors of norm $1$ such that
\begin{equation}
\label{eq:obv}
1 - (x_1' \bar x_1)^2 \le O(\frac 1 {\sqrt \omega}).
\end{equation}
Then for all vectors $x_2$ of norm $1$ such that $x_2 \perp x_1$, $(x_2' \bar x_1)^2 \le O(\frac 1 {\sqrt \omega})$.
\end{lemma}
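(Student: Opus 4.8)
The plan is to exploit the orthogonal decomposition of $\bar x_1$ relative to the direction $x_1$. Write
\begin{equation*}
\bar x_1 = (x_1'\bar x_1)\, x_1 + w, \qquad w := \bar x_1 - (x_1'\bar x_1)\, x_1,
\end{equation*}
so that by construction $w \perp x_1$. Taking norms and using $\|x_1\|=\|\bar x_1\|=1$ together with Pythagoras gives $\|w\|^2 = \|\bar x_1\|^2 - (x_1'\bar x_1)^2 = 1 - (x_1'\bar x_1)^2$, which by hypothesis \eqref{eq:obv} is $O(1/\sqrt\omega)$.

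Next I would use the assumption $x_2 \perp x_1$ to eliminate the component of $\bar x_1$ along $x_1$: since $x_2'x_1 = 0$, taking the inner product of $x_2$ with the displayed decomposition yields $x_2'\bar x_1 = x_2'w$. Then Cauchy--Schwarz together with $\|x_2\|=1$ gives
\begin{equation*}
(x_2'\bar x_1)^2 = (x_2'w)^2 \le \|x_2\|^2\,\|w\|^2 = \|w\|^2 = 1 - (x_1'\bar x_1)^2 \le O(1/\sqrt\omega),
\end{equation*}
which is exactly the claim.

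There is essentially no hard step here — the lemma is a two-line linear-algebra fact, and the non-orthogonality hypothesis on $x_1,\bar x_1$ is not even needed for the argument as written (it only matters for the surrounding induction, where one wants $x_1'\bar x_1$ bounded away from $0$). The only point requiring the slightest care is making sure the decomposition is taken with respect to $x_1$ (the vector orthogonal to $x_2$), not with respect to $\bar x_1$; decomposing the wrong way would not let us drop the large component. I would then note that the same argument applies verbatim with $\bar x_1$ replaced by any fixed unit vector and $x_1,x_2$ any orthonormal pair, which is the form in which Lemma~\ref{lemma:perp} gets used to push the induction in Lemma~\ref{lemma:pert} from index $k$ to $k+1$.
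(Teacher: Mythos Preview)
Your proof is correct and in fact slightly cleaner than the paper's. The paper makes the opposite decomposition: it writes $x_1 = a\bar x_1 + b\bar y_1$ with $\bar y_1 \perp \bar x_1$ and $a^2+b^2=1$, so that the hypothesis reads $b^2 \le O(1/\sqrt\omega)$; then from $x_2'x_1 = a\,x_2'\bar x_1 + b\,x_2'\bar y_1 = 0$ it solves for $x_2'\bar x_1$, obtaining $(x_2'\bar x_1)^2 = \tfrac{b^2}{1-b^2}(x_2'\bar y_1)^2 \le \tfrac{b^2}{1-b^2}$. This is where the non-orthogonality assumption is actually used, since one must divide by $a\ne 0$. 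Your decomposition of $\bar x_1$ along $x_1$ instead lets the component along $x_1$ drop out for free when pairing with $x_2$, so Cauchy--Schwarz gives the bound $(x_2'\bar x_1)^2 \le 1-(x_1'\bar x_1)^2$ directly. This is both a tighter inequality (by a factor $1/a^2$) and, as you observe, dispenses with the non-orthogonality hypothesis altogether; the paper's route buys nothing extra here.
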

\begin{proof}
We have $x_1 = a\bar x_1 + b\bar y_1$, where $\bar x_1 \perp \bar y_1$, $\| \bar y_1\| = 1$ and $a^2 + b^2 = 1$. By hypothesis~(\ref{eq:obv}), $b^2 \le O( \frac 1 {\sqrt \omega})$. Thus,
\begin{equation*}
x_2' x_1 = a x_2' \bar x_1 + b x_2' \bar y_1 = 0, 
\end{equation*}
and thus, since $a \ne 0$,  
\begin{equation*}
|x_2' \bar x_1|^2 = \frac {b^2} {1 - b^2} |x_2' \bar y_1|^2 \le \theta \frac 1 {\sqrt \omega} \frac {\sqrt \omega} {\sqrt \omega - \theta}, 
\end{equation*}
where $\theta > 0$ is a constant.
\end{proof}

We proceed by complete induction. We showed $\mathcal P_1$ and $\mathcal Q_1$. Now assume $\mathcal P_\ell$ and $\mathcal Q_\ell$ are true for all $1 \le \ell < k$. We wish to show $\mathcal P_k$ and $\mathcal Q_k$.

Let us write $x_k = \alpha \bar x_k + \beta \bar y + \gamma \bar z$, where $\bar y \in \mbox{Span}\{\bar x_{k+1}, \bar x_{k+2}, \dots\}$ and $\bar z \in \mbox{Span}\{\bar x_1, \dots, \bar x_{k-1}\}$ and $\alpha^2 + \beta^2 + \gamma^2 = 1$. Lemma~\ref{lemma:perp} and the induction hypothesis show that $\gamma^2 \le O( \frac 1 {\sqrt\omega} )$.
Since
\begin{equation*}
|x_k'\bar A x_k| \le ( 1 - \gamma^2 ) |\bar \lambda_k| + \gamma^2 |\bar \lambda_1| \le |\bar \lambda_k| + O(\sqrt \omega),
\end{equation*}
we can conclude that
\begin{equation*}
\left| |\lambda_k| - |\bar \lambda_k| \right| \stackrel{\mbox{\scriptsize wlog}}{=} |\lambda_k| - |\bar \lambda_k|
\le |\lambda_k| - |x_k'\bar A x_k| + O( \sqrt \omega ) \le O( \sqrt \omega ),
\end{equation*}
thus proving $\mathcal P_k$.

We have that,
\begin{eqnarray*}
|\lambda_k| & \le & |x_k'\bar A x_k| + |x_k'(A - \bar A) x_k|
\le \alpha^2 |\bar \lambda_k| + \beta^2 |\bar \lambda_{k+1}| + \gamma^2 |\bar \lambda_1| + O( \sqrt \omega )\\
 & \le & |\bar \lambda_k| + \frac a {\sqrt \omega} ( |\bar \lambda_1| - |\bar \lambda_k| )
 - \beta^2 ( |\bar \lambda_k| - |\bar \lambda_{k+1}| ) + \theta \sqrt \omega,
\end{eqnarray*}
where $a$ and $\theta$ are positive constants. Without loss of generality we now assume that $\left| |\lambda_k| - |\bar \lambda_k| \right| = |\bar \lambda_k| - |\lambda_k|$ and using $\mathcal P_k$ we get
\begin{eqnarray*}
\beta^2 \le \frac {\varphi \sqrt \omega + \frac a {\sqrt \omega} ( |\bar \lambda_1| - |\bar \lambda_k| )} {|\bar \lambda_k| - |\bar \lambda_{k+1}|} \le O( \frac 1 {\sqrt \omega}), \ \varphi > 0
\end{eqnarray*}
thus proving $\mathcal Q_k$.
\hfill$\Box$

\section{Technical Lemmas for Proving Theorem~\ref{thm:rmain}}
\label{app:rectlemmas}
We make the following notation:
$$
\bar S := (pr_{k(u)k'(i)})_{u,i} = \mathbb E S.
$$
The following Lemmas characterise the structure of the singular decomposition of $\bar S$ and $S$. They show that the two matrices have the same spectral structure. 
\begin{lemma}
\label{lemma:rdominant}
For $L\le K'$, the top $L$ largest singular values of $\bar S$ are distinct and of order $\Theta(\omega)$. The normalised left-singular vectors $(\bar x_\ell)_{\ell = 1}^L$ corresponding to these singular values are constant on indices corresponding to each user class. Specifically, using the $g_\ell$ defined in \eqref{D3}, we can write
\begin{equation}
\label{eq:svcorresp}
\bar x_\ell(u) = \frac {g_\ell(k(u))} {\sqrt N}, \ \forall u \in \mathcal U, \ 1\le \ell \le L.
\end{equation}
\end{lemma}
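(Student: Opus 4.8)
The plan is to mirror the proof of Lemma~\ref{lemma:dominant}, transferring everything from the square matrix $\bar A$ to the symmetric positive semidefinite matrix $\bar S\bar S'$, whose eigenvectors are exactly the left-singular vectors of $\bar S$ and whose eigenvalues are the squared singular values. First I would compute $\bar S\bar S'$ explicitly: since $\bar S_{ui}=p\,r_{k(u)k'(i)}$, grouping items by class gives
$$
(\bar S\bar S')_{uv}=p^2\sum_{i\in\mathcal F}r_{k(u)k'(i)}r_{k(v)k'(i)}=p^2\sum_{k'=1}^{K'}|D_{k'}|\,r_{k(u)k'}r_{k(v)k'}=p^2F\,(R\,\diag\beta\,R')_{k(u)k(v)},
$$
so $\bar S\bar S'$ is block-constant on the user classes, just as $\bar A$ was. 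Then, for any eigenvector $\bar x$ of $\bar S\bar S'$ with nonzero eigenvalue $\mu$, the same argument as in Lemma~\ref{lemma:dominant} (using $|C_k|>1$ for $N$ large, since class sizes grow linearly) forces $\bar x$ to be constant on each $C_k$; write $\bar x(u)=\hat g(k(u))$. Substituting $\sum_{v:k(v)=\ell}\bar x(v)=\alpha_\ell N\,\hat g(\ell)$ collapses the eigenequation to $p^2FN\,(R\,\diag\beta\,R'\,\diag\alpha)\hat g=\mu\hat g$, i.e. $G\hat g=\tfrac{\mu}{\gamma\omega^2}\hat g$ after substituting $p=\omega/N$ and $F=\gamma N$. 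Hence $\hat g$ is an eigenvector of $G$ with eigenvalue $\nu=\mu/(\gamma\omega^2)$, and the corresponding singular value of $\bar S$ is $\sigma=\sqrt\mu=\sqrt{\gamma\nu}\,\omega=\Theta(\omega)$, since $G$ and therefore $\nu$ are constant in $N$.

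For distinctness and for the explicit correspondence, I would note that because $R\,\diag\beta\,R'$ is positive semidefinite and $\diag\alpha$ positive definite, $G$ is self-adjoint for the inner product $\langle x,y\rangle_\alpha=x'\diag\alpha\,y$ (which is exactly why Condition~\eqref{D3} is phrased in the $\alpha$-norm), so $G$ has nonnegative real eigenvalues and $\alpha$-orthogonal eigenvectors. Monotonicity of the square root on $[0,\infty)$, together with $\gamma,\omega>0$, shows that ordering the singular values of $\bar S$ is the same as ordering the eigenvalues of $G$; Condition~\eqref{D2} then gives that the top $L$ singular values are distinct, and Condition~\eqref{D1} ($L\le\mathrm{rank}(G)$) ensures they are nonzero, so the construction above applies to each of them. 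For such an $\ell$ one has $\|\bar x_\ell\|_2^2=\sum_u\hat g(k(u))^2=N\sum_k\alpha_k\hat g(k)^2=N\|\hat g\|_\alpha^2$, so imposing $\|\bar x_\ell\|_2=1$ forces $\|\hat g\|_\alpha=1/\sqrt N$; since $g_\ell$ is the $\alpha$-normalised eigenvector of $G$ for the $\ell$-th largest eigenvalue, $\hat g=g_\ell/\sqrt N$, which is precisely \eqref{eq:svcorresp}. Finally, $\alpha$-orthogonality of the $g_\ell$ translates into ordinary orthogonality of the $\bar x_\ell$ (indeed $\bar x_k'\bar x_\ell=N\langle g_k,g_\ell\rangle_\alpha/N$ up to the common scaling), confirming that the $\bar x_\ell$ form the normalised left-singular system.

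I do not anticipate a genuine obstacle: the argument is essentially a transcription of Lemma~\ref{lemma:dominant}. The one point needing care is the passage through $\bar S\bar S'$ — one must check that eigenvectors of $\bar S\bar S'$ for the top $L$ (necessarily nonzero) eigenvalues really are the left-singular vectors one wants, and that sorting by singular value coincides with sorting $G$'s spectrum, which is where positivity of $R\,\diag\beta\,R'$ and Conditions~\eqref{D1}--\eqref{D2} enter. An equivalent route would be to run the whole analysis on the symmetrised matrix $\tau(\bar S)=\left[\begin{smallmatrix}0&\bar S\\ \bar S'&0\end{smallmatrix}\right]$ via its $(K+K')$-block structure, but working directly with $\bar S\bar S'$ keeps the statement about left-singular vectors most transparent.
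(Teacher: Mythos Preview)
Your proof is correct and follows essentially the same route as the paper: both reduce the left-singular vectors of $\bar S$ to eigenvectors of the $K\times K$ matrix $G=R\,\diag\beta\,R'\,\diag\alpha$ via the block structure, obtain $\sigma^2=\gamma\omega^2\nu$ with $\nu$ a constant eigenvalue of $G$, and then use the $\alpha$-norm normalisation to produce the $1/\sqrt N$ factor in~\eqref{eq:svcorresp}. The only cosmetic difference is that you compute $\bar S\bar S'$ explicitly and invoke block-constancy, whereas the paper writes out the pair of relations $\bar S'\bar x=\sigma\bar y$, $\bar S\bar y=\sigma\bar x$ before passing to the product; your extra remarks on positivity of $G$ and $\alpha$-orthogonality are more careful than the paper's treatment but not a different argument.
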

\begin{proof}
Consider a non-zero singular value of $\bar S$, $\sigma > 0$ and corresponding left and normalised right singular vectors $\bar x$ and $\bar y$. Then we can write:
$$
(\bar S' \bar x)(i) = \sum_{u=1}^N \bar S_{ui} \bar x(u) = \sigma \bar y(i) \stackrel{\mbox{\scriptsize not.}}{=} \sigma h(k'(i)), \ 
(\bar S \bar y)(i) = \sum_{i=1}^F \bar S_{ui} \bar y(i) = \sigma \bar x(u) \stackrel{\mbox{\scriptsize not.}}{=} \sigma \hat g(k(u)),
$$
since $\bar S_{ui} = p r_{k(u)k'(i)}$ depends only on the class of $u$, $k(u)$, and the class of $i$, $k'(i)$. Since $\bar y$ is a right-singular vector of $\bar S$, it is also an eigenvector of $\bar S' \bar S$ corresponding to eigenvalue $\sigma^2$, and thus after some simplification we have that
$$
\frac {\sigma^2}{\gamma \omega^2} \hat g = R \diag \beta R' \diag \alpha \hat g,
$$
that is the $K$-dimensional vector $\hat g$ is an eigenvector of
the constant matrix $R \diag \beta R' \diag \alpha$ corresponding to
the eigenvalue $\frac{\sigma^2}{\gamma \omega^2}$. Since the
eigenvalues of a constant matrix are constant, it follows that there
exists a constant $\lambda$ such that $\sigma = \lambda \omega =
\Theta( \omega )$. By Condition~\eqref{D2} it follows that the largest
$L$ singular values of $\bar S$ are distinct. Moreover, matrix $R
\diag \beta R' \diag \alpha$ can have at most $K$ distinct non-zero
eigenvalues and hence the same holds for the singular values of $\bar
S$.

Finally, we have that 
$$
1 = \|\bar x\|_2 = \sqrt N \|\hat g\|_{\alpha}.
$$
Thus, if we consider a vector $g = \frac {\hat g} {\|\hat
  g\|_\alpha}$ normalised under the $\alpha$-norm, Equation~\eqref{eq:eigcorresp} holds.
\end{proof}

We denote the singular values of $\bar S$ and $S$ by
\begin{equation}
\label{eq:svs}
\begin{array}{l}
\bar \sigma_1 > \bar \sigma_2 > ... > \bar \sigma_L > 0  \\
\sigma_1 \ge \sigma_2 \ge ... \ge \sigma_L > 0.
\end{array}
\end{equation}
and the corresponding left and right normalised singular vectors by $\bar x_k$, $\bar y_{k'}$, $x_k$, and $y_{k'}$.
\begin{lemma}
\label{lemma:rpert}
For all $1 \le k \le K$
\begin{eqnarray}
\label{eq:reigdiff}
\left| \sigma_k - \bar \sigma_k \right| \le O(\sqrt \omega) & & \mbox{whp}, \\
\label{eq:rvecdiff}
\sin( \widehat{ x_k, \bar x_k } ) \le O( \omega ^ {-1/4} ) & & \mbox{whp}.
\end{eqnarray}
\end{lemma}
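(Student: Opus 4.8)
The plan is to mirror the proof of Lemma~\ref{lemma:pert}, using the symmetrisation $\tau$ and the Gram matrix $SS'$ as the bridge to the square case already treated. Write $S=\bar S+Q$ with $Q:=S-\bar S$. By the model the off--diagonal--block entries of $\tau(S)$ are independent Bernoulli variables with parameters $p\,r_{k(u)k'(i)}=a_{ui}\,\tilde\omega/(N+F)$, where $a_{ui}=r_{k(u)k'(i)}$ is constant and $\tilde\omega=(1+\gamma)\omega$, while the diagonal blocks are identically $0$; since $\omega\ge C\log N$ by \eqref{D4} and $\log(N+F)=\Theta(\log N)$, the matrix $\tau(S)$ satisfies the hypotheses of Lemma~\ref{lemma:feige} (no diagonal correction $D$ is needed here, in contrast with the square case, because $\tau(\bar S)$ already has zero diagonal). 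Since $\mathbb E\,\tau(S)=\tau(\bar S)$ and $\rho(\tau(M))=\sigma_{\max}(M)$ for any matrix $M$, Lemma~\ref{lemma:feige} applied to $\tau(S)$ gives $\sigma_{\max}(Q)=\rho(\tau(S)-\tau(\bar S))\le O(\sqrt\omega)$ whp.

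The bound \eqref{eq:reigdiff} is then immediate from Weyl's inequality for singular values (singular values are $1$-Lipschitz for the spectral norm): $|\sigma_k-\bar\sigma_k|\le\sigma_{\max}(S-\bar S)=\sigma_{\max}(Q)\le O(\sqrt\omega)$ whp, for every $k$. In particular $\sigma_1=\|S\|\le\bar\sigma_1+O(\sqrt\omega)=\Theta(\omega)$ whp, using $\bar\sigma_1=\Theta(\omega)$ from Lemma~\ref{lemma:rdominant}.

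For the left singular vectors in \eqref{eq:rvecdiff} I would not work with $\tau(\bar S)$ directly, since its largest-magnitude eigenvalues occur in coincident pairs $\pm\bar\sigma_\ell$ and thus violate the ``distinct magnitudes'' property that drives the induction of Lemma~\ref{lemma:pert}. Instead, pass to the Gram matrices $M:=SS'$ and $\bar M:=\bar S\bar S'$, both $N\times N$ symmetric positive semidefinite, whose eigenvectors for the nonzero eigenvalues are exactly the left singular vectors $x_k,\bar x_k$ of $S,\bar S$ and whose eigenvalues are $\sigma_k^2,\bar\sigma_k^2$. By Lemma~\ref{lemma:rdominant} the $\bar\sigma_k$ are fixed multiples of $\omega$ with distinct multipliers (condition \eqref{D2}), so the top $L$ values are separated by $\Theta(\omega)$ and hence the top eigenvalues of $\bar M$ satisfy $\bar\sigma_k^2-\bar\sigma_{k+1}^2=\Theta(\omega^2)$. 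On the other hand $M-\bar M=QS'+\bar S Q'$, whence $\rho(M-\bar M)\le\sigma_{\max}(Q)\bigl(\|S\|+\|\bar S\|\bigr)\le O(\sqrt\omega)\cdot\Theta(\omega)=O(\omega^{3/2})$ whp. One may now either invoke the Davis--Kahan $\sin\theta$ theorem for $M,\bar M$, or re-run verbatim the induction of Lemma~\ref{lemma:pert} (decomposing $x_k=\alpha\bar x_k+\beta\bar y+\gamma\bar z$ with $\bar z\in\mathrm{Span}\{\bar x_1,\dots,\bar x_{k-1}\}$, using the generic Lemma~\ref{lemma:perp} and the induction hypothesis to get $\gamma^2\le O(\omega^{-1/2})$, and the Courant--Fischer estimate for $\beta$), now with perturbation $O(\omega^{3/2})$ and gap $\Theta(\omega^2)$ replacing $O(\sqrt\omega)$ and $\Theta(\omega)$. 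This yields $\sin^2(\widehat{x_k,\bar x_k})\le O(\omega^{3/2}/\omega^2)=O(\omega^{-1/2})$, i.e.\ $\sin(\widehat{x_k,\bar x_k})\le O(\omega^{-1/4})$ whp, which is \eqref{eq:rvecdiff}.

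The only genuinely new step over Lemma~\ref{lemma:pert} is the passage to the Gram matrix: one must check that the operator-norm control of $Q$ bootstraps (via \eqref{eq:reigdiff}) into control of $\|S\|$ and hence of $\rho(SS'-\bar S\bar S')$, and that squaring the spectrum turns the $\Theta(\omega)$ gap into a $\Theta(\omega^2)$ gap which still beats the perturbation by the same $\Theta(\omega^{-1/2})$ margin, so that the recycled induction closes with the same exponents as in the square case. Verifying that $\tau(S)$ genuinely fits the hypotheses of Lemma~\ref{lemma:feige} (independence of entries, zero diagonal, and the $\Omega(\log(N+F))$ expected-degree condition) is routine but should be spelled out, as it is what replaces the explicit $Q=Q_0-D$ decomposition of the square case. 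Theorem~\ref{thm:rmain} then follows from Lemmas~\ref{lemma:rdominant} and~\ref{lemma:rpert} exactly as Theorem~\ref{thm:main} follows from Lemmas~\ref{lemma:dominant} and~\ref{lemma:pert}.
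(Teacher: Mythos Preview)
Your proof is correct but takes a genuinely different route from the paper's for the eigenvector bound \eqref{eq:rvecdiff}. The paper \emph{does} work with $\tau(S)$ directly: it handles the $\pm\bar\sigma_\ell$ degeneracy you flag by ordering the eigenvalues of $\tau(S)$ and $\tau(\bar S)$ decreasingly by \emph{value} rather than by magnitude, so that the positive $\bar\sigma_1>\cdots>\bar\sigma_L$ sit at the top, well separated from one another and from the negative branch, and the induction of Lemma~\ref{lemma:pert} goes through verbatim. It then extracts the bound on $\sin(\widehat{x_k,\bar x_k})$ from the bound on $\sin(\widehat{\zeta_k^+,\bar\zeta_k^+})$, where $\zeta_k^\pm=\tfrac{1}{\sqrt2}\bigl[\begin{smallmatrix}x_k\\ \pm y_k\end{smallmatrix}\bigr]$, via the identity $1-\langle\zeta_k^+,\bar\zeta_k^+\rangle^2=1-\tfrac14(\langle x_k,\bar x_k\rangle+\langle y_k,\bar y_k\rangle)^2$ combined with the trivial inequality $-\langle\zeta_k^+,\bar\zeta_k^-\rangle^2\le 0$. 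Your Gram-matrix detour through $SS'$ is equally valid and arguably more standard: Weyl and Davis--Kahan come off the shelf, you work in dimension $N$ rather than $N+F$, and you avoid the final extraction step; the price is the bootstrap through \eqref{eq:reigdiff} to control $\|S\|$ and hence $\rho(SS'-\bar S\bar S')$, and you recover only the left singular vectors, whereas the paper's route yields the right singular vectors $y_k$ for free from the same $\zeta_k^+$ calculation.
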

\begin{proof}
For $\bar A = \tau \bar S$ we denote by $\bar \zeta_k^+$ the
normalised eigenvector corresponding to the eigenvalue $\bar \sigma_k$
and by $\bar \zeta_k^-$ the normalised eigenvector corresponding to
the eigenvalue $-\bar \sigma_k$. We introduce similar notation for the
eigenvectors of $A = \tau S$, namely $\zeta_k^+$ and $\zeta_k^-$. Then
it holds that
$$
\bar \zeta_k^+ = \frac 1 {\sqrt 2} \left[\begin{array}{c} \bar x_k \\ \bar y_k\end{array}\right], \ 
\bar \zeta_k^- = \frac 1 {\sqrt 2} \left[\begin{array}{c} \bar x_k \\ -\bar y_k\end{array}\right], \ 
\zeta_k^+ = \frac 1 {\sqrt 2} \left[\begin{array}{c} x_k \\ y_k\end{array}\right], \ 
\zeta_k^- = \frac 1 {\sqrt 2} \left[\begin{array}{c} x_k \\ -y_k\end{array}\right].
$$

By Condition~\eqref{D2}, we can apply a slightly modified Lemma~\ref{lemma:pert} to matrix $A$. The only modification we need to make is to change the considered ordering of the eigenvalues -- instead ordering them by largest magnitude, we order them decreasingly by value. Since we can apply Lemma~\ref{lemma:feige} to $A-\bar A$, it is straightforward to see that the proof also holds in this setting.

We have thus that~(\ref{eq:reigdiff}) holds and that with high probability
$$
\sin( \widehat{ \zeta_k^+, \bar \zeta_k^+ } ) \le O( \omega ^ {-1/4} ).
$$
To see that~(\ref{eq:rvecdiff}) holds as well, we write
\begin{eqnarray*}
1 - \DP{\zeta_k^+}{\bar \zeta_k^+}^2  = 
1 - \frac 1 4 ( \DP{x_k}{\bar x_k} + \DP{y_k}{\bar y_k} )^2 & \le & O( \omega ^ {-1/2} ), \\
- \DP{\zeta_k^+}{\bar \zeta_k^-}^2 = 
- \frac 1 4 ( \DP{x_k}{\bar x_k} - \DP{y_k}{\bar y_k} )^2 & \le & 0.
\end{eqnarray*}
By summing the two expressions we get that: $ \frac 1 2 (1 -
\DP{x_k}{\bar x_k}^2) + \frac 1 2 (1 - \DP{y_k}{\bar y_k}^2) \le O(
\omega ^ {-1/2} ).  $
%hence the conclusion.
\end{proof}

\section{Applicability of Theorem~\lowercase{\ref{thm:main}} in the setting of Section~\ref{sec:drw}}
\label{app:port}
In Section~\ref{sec:sq} we had defined the profile of user $u$ as
$\sqrt N z_u'$, a scaled row vector containing the \mbox{$u$-th}
coordinate of each of the top $L$ eigenvectors of matrix $A$ (i.e.,
the user profiles are scaled rows of the $N \times L$ matrix $X=(x_1,
\dots, x_L)$ of the top $L$ normalised eigenvectors of $A$). The
algorithms we investigate in Section~\ref{sec:drw} produce slightly
different coordinates for the users: these coordinates form a
collection of $L$ linearly independent vectors which span the vector
space generated by the $L$ eigenvectors corresponding to the top $L$
largest magnitude eigenvalues of $A$.

Specifically, for some unknown full rank $L\times L$ matrix $W = (w_1,
\dots, w_L)$ of linear coefficients, we redefine the profile of user
$u$ as a scaled row vector $\sqrt N \hat z'_u$ containing the $u$-th
coordinates of an orthonormal basis of the space spanned by the top
$L$ eigenvectors (i.e., the rows of the matrix $XW$): $ \hat z_u =
((Xw_1)(u), \dots, (Xw_L)(u))$. Even in this setting,
Theorem~\ref{thm:main} and its corrolaries still apply under the same
assumptions~\eqref{C14}. Hence, for a large number of users, clusters
will emerge, and there is no need to know the matrix of linear
coefficients $W$.

To give an intuition as to why this result still holds, recall
Lemma~\ref{lemma:dominant} which shows that the top $L$ largest
eigenvectors $\bar X$ of the block matrix $\bar A$ are constant on
indices within the same user class. Hence, a linear combination
thereof has the same property. Thus, the redefined user profiles
concentrate around constant vectors $(\hat t'_k)_k$ corresponding to
the user classes: $\hat t_k':=((Yw_1)(k),\ldots, (Yw_L)(k))$, where $Y
= (y_1,\ldots,y_{L})$ is the matrix of eigenvectors of $M = B \diag
\alpha$ normalised under the $\alpha$-norm. As previously stated, the
following condition needs to hold in order to distinguish users of
separate classes:
\begin{align*}
\parbox[b]{.85\textwidth}{The normalized eigenvectors $Y$ under the $\alpha$-norm are such that $\hat t_k'\ne \hat t_\ell', \ k\ne \ell$. } & \mbox{(\ref{C3}$'$)}.
\end{align*}
Due to the fact that matrix $W$ is full rank, one can show that
condition~(\ref{C3}$'$) is equivalent to~\eqref{C3}. Moreover, the
averaging argument in the proof of the theorem yields:
\begin{align*}
\frac 1 N \left|\left\{ u: \|\hat z_u'-\hat t_{k(u)}'\|\ge a\right\}\right|&\le\frac{1}{N}\sum_{u=1}^N\frac{\|\sqrt{N} \hat z_u'-\hat t_{k(u)}'\|^2}{a^2}
= a^{-2} \sum_{u = 1} ^ N \sum_{\ell = 1} ^ L \left[ \sum_{i = 1} ^ L w_\ell(i) [x_i(u) - \bar x_i(u)] \right]^2 \\
& \mathop{\le}^{\text{Cauchy-Schwartz}} \|W\|_F^2 \sum_{\ell = 1} ^ L \frac {\|x_\ell - \bar x_\ell\|^2}{a^2} = O(a^{-2}\omega^{-1/2}),
\end{align*}
since the Frobenius norm $\|W\|_F$ of matrix $W$ is constant (all of its elements are subunitary).

\section{Proof of Theorem~\lowercase{\ref{thm11}}}
\label{proof:thm11}

The main ingredient in the proof is the following
\begin{lemma}
  \label{lemme_dist1} 
  The update equation (\ref{dist1}) is such that for all $t>0$, 
  \begin{equation}\label{eq_lemme_dist1} 
    \|X(t+1)\| \le e^{K\sum_{s=1}^{t}a(s)}\left(\|X(1)\|+M\right)
  \end{equation}
  for some positive constant $K$ (i.e., that does not depend on $t$)
  and some almost surely finite random variable $M$.
\end{lemma}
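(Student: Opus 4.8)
\emph{Proof plan.} I would derive a pathwise growth bound for $\|X(t)\|$ directly from the update (\ref{dist1}), exploiting the fact that the denominator $Y_u(t)$ of (\ref{dist2}) is designed precisely so that the a priori unbounded drift term $NX_u(t)\Phi_u(t)$ contributes only $O(\|X(t)\|)$ to each increment. Throughout, $\|\cdot\|$ denotes the Frobenius norm; note $\|X_u(t)\|\le\|X(t)\|$ for each row $u$, and write $\|\xi(t+1)\|$ for the Frobenius norm of the noise matrix $(\xi_u(t+1))_{u\in\mathcal U}$. The substance of the argument is concentrated in the first step below.

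First, for each $u$ apply the triangle inequality to (\ref{dist1}). Since $\sum_{v\sim u}|A_{vu}|\le\Delta:=\max_u\sum_v|A_{uv}|$ and $Y_u(t)\ge1$, the term $\tfrac{a(t)}{Y_u(t)}\sum_{v\sim u}A_{vu}X_v(t)$ has norm at most $a(t)\Delta\|X(t)\|$. For the drift term I would use the bound $\|X_u(t)\Phi_u(t)\|\le\|X_u(t)\|\sum_{k,\ell}|(\Phi_u(t))_{k\ell}|\le NL^2\,Y_u(t)\,\|X_u(t)\|$, which holds by the very definition of $Y_u(t)$ in (\ref{dist2}); after dividing by $Y_u(t)$ its contribution is therefore at most $a(t)N^2L^2\|X(t)\|$, and the noise contributes at most $a(t)\|\xi_u(t+1)\|$ since $Y_u(t)\ge1$. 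Squaring, summing over $u$ and applying Minkowski's inequality yields, with $K:=\sqrt N\,(\Delta+N^2L^2)$ a constant not depending on $t$,
\[
\|X(t+1)\|\le\bigl(1+Ka(t)\bigr)\|X(t)\|+a(t)\|\xi(t+1)\|.
\]

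Next, since $a(t)\in[0,1]$ we have $1+Ka(t)\le e^{Ka(t)}$; set $E_t:=\exp\bigl(K\sum_{s=1}^{t}a(s)\bigr)$ with $E_0=1$, so $E_t=e^{Ka(t)}E_{t-1}$. A straightforward induction on the displayed inequality gives
\[
\|X(t+1)\|\le E_t\Bigl(\|X(1)\|+\sum_{s=1}^{t}\frac{a(s)\|\xi(s+1)\|}{E_s}\Bigr),
\]
which is exactly (\ref{eq_lemme_dist1}) with $M:=\sum_{s\ge1}a(s)\|\xi(s+1)\|/E_s$, provided this random series converges a.s. To verify convergence I would first establish the deterministic bound $\sum_{s\ge1}a(s)/E_s\le 1/K$ by telescoping: from $e^x-1\ge x$ and $a(s)\le1$,
\[
e^{-K\sum_{r=1}^{s-1}a(r)}-e^{-K\sum_{r=1}^{s}a(r)}=e^{-K\sum_{r=1}^{s-1}a(r)}\bigl(1-e^{-Ka(s)}\bigr)\ge e^{-K\sum_{r=1}^{s}a(r)}Ka(s)=\frac{Ka(s)}{E_s},
\]
and summing over $s$ telescopes to at most $1$. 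Since the $\xi_u(t)$ are i.i.d.\ Gaussian, $\mathbb E\|\xi(s+1)\|$ equals a finite constant $c$ independent of $s$, hence $\mathbb E M\le c\sum_{s\ge1}a(s)/E_s\le c/K<\infty$, so $M<\infty$ almost surely, which completes the argument.

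The one genuine insight is in the first step: the term $NX_u(t)\Phi_u(t)$, whose magnitude is not otherwise under control (recall that $\Phi_u$ is driven by the unbounded quadratic input $f_u$), is exactly absorbed on dividing by $Y_u(t)$, so that this lemma needs no separate a priori bound on the $\Phi_u,\Psi_u$ dynamics. The remaining steps are routine Gronwall-type bookkeeping; the only subtlety is that $\sum_s a(s)$ diverges, so the summability of $a(s)/E_s$ must be extracted from the telescoping identity above rather than from $\sum_s a(s)^2<\infty$.
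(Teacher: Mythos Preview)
Your proof is correct and shares the paper's key insight: the denominator $Y_u(t)$ in (\ref{dist2}) is designed so that the drift is Lipschitz in $\|X\|$, which gives the same one-step recursion $\|X(t+1)\|\le(1+K_1a(t))\|X(t)\|+a(t)\|\xi(t+1)\|$ that the paper derives. The two arguments diverge after this point. The paper applies a loose Gronwall bound, pulling the full product $\prod_{s\le t}(1+K_1a(s))$ outside so that the noise enters as the undamped sum $\sum_{s\le t}a(s)\|\xi(s+1)\|$; since $\sum_s a(s)=\infty$ this sum diverges, and the paper compensates by splitting off the mean $\bar\eta\sum_s a(s)$ (absorbing it into the exponent, whence their $K=K_1+\bar\eta$) and controlling the centred remainder as an $L^2$-bounded martingale (via $\sum_s a(s)^2<\infty$) with almost surely finite supremum. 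You instead retain the sharp Gronwall estimate with the damping weights $1/E_s$ and then use the telescoping identity to get $\sum_s a(s)/E_s\le 1/K$, so that $M$ has finite expectation outright. Your route is a little cleaner: it avoids the martingale argument, does not invoke condition~\eqref{iii}, and yields the slightly stronger conclusion $\mathbb E M<\infty$; the paper's route is the more standard stochastic-approximation manoeuvre and makes transparent why the constant $K$ in the statement may exceed the Lipschitz constant of the drift.
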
 
\begin{proof}
  Rewrite Equation~(\ref{dist1}) in matrix form as 
  $$
  X(t+1)-X(t)=a(t)\left[F((X,\Phi,\Psi)(t))+ D^{-1}(t)\xi(t+1)\right]
  $$
  for some suitable function $F$, and where $D(t)$ denotes the $N\times N$ diagonal matrix with diagonal entries $Y_u(t)$. By the specific form of the terms $Y_u(t)$, and their role in the function $F$, it is readily seen that the latter verifies
  $$
  \|F(X,\Phi,\Psi)\|\le K_1 \| X\|
  $$
  for some suitable constant $K_1$. This readily implies that
  $$
  \|X(t+1)\|\le(1+ K_1 a(t))\|X(t)\|+a(t)\eta(t),
  $$
  where the $\eta(t)= \|\xi(t+1)\|$ are iid. By induction, one then establishes that
  $$
  \|X(t+1)\|\le \prod_{s=1}^{t}(1+K_1 a(s))\left[\|X(1)\| + \sum_{s=1}^t a(s)\eta(s)\right].
  $$
  Denote now $\bar{\eta}$ the expectation of $\eta(s)$, and let $M(t):=\sum_{s=1}^t a(s)[\eta(s)-\bar{\eta}]$. 
  It is readily seen that $M(t)$ is a uniformly integrable martingale, and hence the supremum $\sup_{s>0}M(s)$ is almost surely finite; denote it by $\hat{M}$. It then follows from the above equation that
  $$
  \|X(t+1)\|\le \prod_{s=1}^{t}(1+K_1 a(s))\left[\|X(1)\| + \sum_{s=1}^t \bar{\eta}a(s)+\hat{M}\right].
  $$
  Using the elementary inequality $1+x\le e^x$, one deduces:
  $$
  \|X(t+1)\|\le e^{\sum_{s=1}^t K_1 a(s)}\left[\|X(1)\|+e^{\sum_{s=1}^t\bar{\eta}a(s)}+\hat{M}\right].
  $$
  The result~(\ref{eq_lemme_dist1}) then follows by setting $K=K_1+\bar{\eta}$, and $M=1+\max(0,\hat{M})$.
\end{proof}

An additional result that is used is the following
\begin{lemma}
  \label{lemma_cv0}
  For a given sequence $\epsilon(t)$ with
  $\lim_{t\to\infty}\epsilon(t) = 0$, the sequence $z(t)$ defined as
  $$
  z(t+1) = (1-\lambda b(t))z(t) + b(t)\epsilon(t+1)
  $$
  converges to $0$ as $t$ goes to $\infty$.
\end{lemma}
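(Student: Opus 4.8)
The statement is a standard ``contracting recursion driven by a vanishing perturbation'' lemma; here $\lambda>0$ is the (positive) constant appearing in the recursion. The plan is to treat $b(t)\epsilon(t+1)$ as negligible noise and exploit the fact that the multiplicative factor $1-\lambda b(t)$ contracts in a way that, summed up, kills any fixed initial error.

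First I would record two preliminaries. Since $\sum_t b(t)^2<\infty$ we have $b(t)\to 0$, so there is an index $T_0$ with $\lambda b(t)\le 1$ for all $t\ge T_0$; hence $0\le 1-\lambda b(t)\le 1$ for $t\ge T_0$, and taking absolute values in the recursion gives, for $t\ge T_0$,
$$
|z(t+1)|\le(1-\lambda b(t))|z(t)|+b(t)\,|\epsilon(t+1)|.
$$
Second, using $1-x\le e^{-x}$ together with $\sum_t b(t)=\infty$, for every fixed $T\ge T_0$ one has $\prod_{s=T}^{t}(1-\lambda b(s))\le e^{-\lambda\sum_{s=T}^{t}b(s)}\to 0$ as $t\to\infty$.

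Now fix $\delta>0$ and choose $T\ge T_0$ so that $|\epsilon(t+1)|\le\delta$ for all $t\ge T$. Introduce the shifted quantity $v(t):=|z(t)|-\delta/\lambda$. A one-line computation from the displayed inequality shows that for $t\ge T$
$$
v(t+1)\le(1-\lambda b(t))\,v(t),
$$
since the inhomogeneous term $b(t)\delta$ cancels exactly the offset produced by the constant $\delta/\lambda$. Iterating, and using that $1-\lambda b(s)\in[0,1]$ keeps a non-positive $v(T)$ non-positive, we get $v(t)\le v(T)^{+}\prod_{s=T}^{t-1}(1-\lambda b(s))$ with $x^{+}:=\max(x,0)$, and the right-hand side tends to $0$ by the second preliminary. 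Therefore $\limsup_{t\to\infty}|z(t)|\le\delta/\lambda$, and since $\delta>0$ was arbitrary, $\limsup_t|z(t)|=0$, i.e.\ $z(t)\to 0$.

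An equivalent route, if one prefers an explicit formula, is to write the variation-of-constants solution $z(t+1)=\big(\prod_{s=T}^{t}(1-\lambda b(s))\big)z(T)+\sum_{s=T}^{t}b(s)\epsilon(s+1)\prod_{r=s+1}^{t}(1-\lambda b(r))$, bound $|\epsilon|$ by $\delta$ on the tail, and invoke the telescoping identity $\lambda\sum_{s=T}^{t}b(s)\prod_{r=s+1}^{t}(1-\lambda b(r))=1-\prod_{s=T}^{t}(1-\lambda b(s))\le 1$ to control the sum by $\delta/\lambda$. There is no genuine obstacle in this lemma; the only point needing a little care is that $1-\lambda b(t)$ may fail to be non-negative for small $t$, which is why all estimates are started past the index $T_0$ beyond which $b(t)$ is small --- the finitely many earlier terms do not affect the limit.
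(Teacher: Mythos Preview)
Your proof is correct. The ``equivalent route'' you sketch at the end---the variation-of-constants formula together with the telescoping identity $\lambda\sum_{s}b(s)\prod_{r>s}(1-\lambda b(r))=1-\prod_s(1-\lambda b(s))$---is exactly the argument the paper gives: it writes the explicit solution, splits the sum at the index $t_0$ past which $|\epsilon|$ is small, shows the initial-condition term and the finitely many early summands vanish via $\prod(1-\lambda b(s))\le e^{-\lambda\sum b(s)}\to 0$, and bounds the tail sum by the telescoping identity.

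Your primary argument via the shifted variable $v(t)=|z(t)|-\delta/\lambda$ is a mildly different packaging of the same ingredients. It is arguably cleaner, since the offset reduces the inhomogeneous recursion to a pure contraction $v(t+1)\le(1-\lambda b(t))v(t)$ and one avoids manipulating the explicit sum; the paper's approach, on the other hand, makes the role of each piece (initial data, early noise, late noise) visible term by term. Both routes rest on precisely the same two facts---$\sum b(t)=\infty$ for the contraction and $\epsilon(t)\to 0$ for the tail---and your remark that one must start estimates past an index where $1-\lambda b(t)\ge 0$ is the only genuine subtlety, which you handle correctly.
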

\begin{proof}
  By induction, one can deduce from the previous expression the identity
  $$
  z(t+1)=z(1)\prod_{s=1}^t(1-\lambda b(s))+\sum_{s=1}^t \epsilon(s) b(s) \prod_{\sigma=s+1}^t (1 - \lambda b(\sigma)). 
  $$
  Elementary analysis can then be used to deduce from this last display, assumptions~\eqref{ii},\eqref{iii} on gains $b(t)$, and convergence of $\epsilon(t)$ to 0 that $z(t)$ also converges to 0. 

  Indeed, take any fixed $\varepsilon > 0$. Since $\epsilon(t) \rightarrow 0$, there exists some $t_0$ such that $\epsilon(t) < \frac {\varepsilon \lambda} 3$, for all $t \ge t_0$. Then, we can write:
  \begin{eqnarray}
    \label{eq:lpart1}
    z(t+1) & \le & z(1)\prod_{s=1}^t(1-\lambda b(s)) \\
    \label{eq:lpart2}                          
    & &  + \sum_{s=1}^{t_0} \epsilon(s) b(s) \prod_{\sigma=s+1}^t (1 - \lambda b(\sigma)) \\
    \label{eq:lpart3}
    & & + \frac {\varepsilon \lambda} 3 \sum_{s=1}^t b(s) \prod_{\sigma=s+1}^t (1 - \lambda b(\sigma)).
  \end{eqnarray}
  Term~(\ref{eq:lpart1}) develops as
  $$
  z(1)\prod_{s=1}^t(1-\lambda b(s)) = z(1)e^{\sum_{s=1}^t \log (1-\lambda b(s))} \le z(1)e^{-\lambda\sum_{s=1}^t b(s)} \rightarrow 0,
  $$
  by assumption~\eqref{iii}. Term~(\ref{eq:lpart2}) consists of $t_0$ terms (a finite number), each of which converges to 0 by the same argument as above. Hence, there exists $t_1$ such that for all $t\ge t_1$, we have 
  $$
  z(1)\prod_{s=1}^t(1-\lambda b(s)) + \sum_{s=1}^{t_0} \epsilon(s) b(s) \prod_{\sigma=s+1}^t (1 - \lambda b(\sigma)) \le \frac {2\varepsilon} 3.
  $$

  It can be shown that term~(\ref{eq:lpart3}) can be written as:
  $$
  \frac {\varepsilon \lambda} 3 \sum_{s=1}^t b(s) \prod_{\sigma=s+1}^t (1 - \lambda b(\sigma)) = \frac {\varepsilon \lambda} 3 \frac 1 \lambda \left(1 - \prod_{s=1}^t(1-\lambda b(s))\right) \le \frac \varepsilon 3.
  $$
  We have shown that for all $\varepsilon > 0$, there exists some $t_2 = t_0 \vee t_1$ such that for all $t>t_2$, we have $z(t+1) \le \varepsilon$.
\end{proof}

The previous lemmas are now used to establish the following result:
\begin{lemma}
  The auxiliary variables $\Phi_u(t)$, $\Psi_u(t)$ verify
  \begin{equation}
    \begin{array}{l}
      \lim_{t\to\infty}|N \Phi_u(t)-\sum_v f_v(t)|=0,\\
      \lim_{t\to\infty}|N\Psi_u(t)-\sum_v g_v(t)|=0,
    \end{array}
  \end{equation}
  i.e. asymptotically, these quantities do track accurately their intended targets.
\end{lemma}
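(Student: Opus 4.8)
The idea is to recognise \eqref{dist3} as a gossip (averaging) iteration driven by a vanishing term, and to show that the strong time-scale separation \eqref{iv} makes the driving term decay fast enough for the classical consensus argument to apply. Stack the per-node matrices into $\Phi(t)=(\Phi_u(t))_{u\in\mathcal U}$ and argue entrywise (each of the $L^2$ entries being a vector in $\mathbb{R}^N$); \eqref{dist3} then reads
$$
\Phi(t+1)=(I-b(t)\mathcal L)\,\Phi(t)+\big(f(t+1)-f(t)\big),
$$
where $\mathcal L$ is the combinatorial Laplacian of the communication graph $\mathcal G$ (its degree matrix minus its $0/1$ adjacency matrix --- \emph{not} the weighted matrix $A$): it is symmetric positive semidefinite with kernel $\mathbb{R}e$, bounded largest eigenvalue $\lambda_{\max}$, and, since $\mathcal G$ is connected, smallest nonzero eigenvalue $\lambda_2>0$. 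Summing over $u$ annihilates the Laplacian term, so $\tfrac1N\sum_u\Phi_u(t)-\tfrac1N\sum_u f_u(t)$ does not depend on $t$; with the natural initialisation (which makes this conserved quantity equal to $0$) the average $\bar\Phi(t):=\tfrac1N\sum_v\Phi_v(t)$ coincides with $\bar f(t):=\tfrac1N\sum_v f_v(t)$ at every time. Writing $\Phi_u(t)=\bar\Phi(t)+\tilde\Phi_u(t)$ with $\sum_u\tilde\Phi_u(t)=0$, it remains to show that the disagreement vector $\tilde\Phi(t)$ tends to $0$, since then $|N\Phi_u(t)-\sum_v f_v(t)|=N|\tilde\Phi_u(t)|\le N\|\tilde\Phi(t)\|\to 0$.

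\textbf{Reduction to a decay estimate.} Projecting onto $e^{\perp}$, the centred increments satisfy $\tilde\Phi(t+1)=(I-b(t)\mathcal L)\tilde\Phi(t)+\widetilde{\Delta f}(t+1)$, where $\Delta f(t):=f(t)-f(t-1)$ and $\widetilde{\Delta f}$ denotes the projection of $\Delta f$ onto $e^{\perp}$. Since $b(t)\to 0$ by \eqref{iii}, for all large $t$ we have $b(t)\lambda_{\max}\le 1$, hence $\|(I-b(t)\mathcal L)v\|\le(1-b(t)\lambda_2)\|v\|$ for every $v\in e^{\perp}$, and therefore, with $z(t):=\|\tilde\Phi(t)\|$ and using $\|\widetilde{\Delta f}\|\le\|\Delta f\|$,
$$
z(t+1)\le(1-\lambda_2 b(t))\,z(t)+\|\Delta f(t+1)\|
$$
for $t$ large. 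If $\|\Delta f(t+1)\|=o(b(t))$ almost surely, we rewrite the right-hand side as $(1-\lambda_2 b(t))z(t)+b(t)\epsilon(t+1)$ with $\epsilon(t)\to 0$, dominate $z(t)$ by the solution of the corresponding \emph{equality} recursion (monotone comparison, valid once $1-\lambda_2 b(t)\ge 0$), and conclude $z(t)\to 0$ by Lemma~\ref{lemma_cv0}, which finishes the proof.

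\textbf{The decay estimate.} Since $f$ is a fixed bilinear form in $X$, namely $f_u(t)=X'_u(t)\sum_{v\sim u}A_{uv}X_v(t)$, we have $\|\Delta f(t+1)\|\le C_A\big(\|X(t)\|+\|X(t+1)\|\big)\|X(t+1)-X(t)\|$ for a constant $C_A$ depending only on $A$. From \eqref{dist1}--\eqref{dist2}, and because $Y_u(t)\ge 1$, the increment obeys $\|X(t+1)-X(t)\|\le a(t)\big(K_1\|X(t)\|+\|\xi(t+1)\|\big)$ exactly as in the proof of Lemma~\ref{lemme_dist1}, while that same lemma gives $\|X(t)\|,\|X(t+1)\|\le e^{K\sum_{s=1}^{t}a(s)}(\|X(1)\|+M)$. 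Hence
$$
\|\Delta f(t+1)\|\le C'a(t)e^{2K\sum_{s=1}^{t}a(s)}(\|X(1)\|+M)^2+C'a(t)e^{K\sum_{s=1}^{t}a(s)}(\|X(1)\|+M)\|\xi(t+1)\|,
$$
so $\|\Delta f(t+1)\|/b(t)$ is at most $\tfrac{a(t)}{b(t)}e^{2K\sum a(s)}$ times an almost surely finite random variable, plus $\tfrac{a(t)}{b(t)}e^{K\sum a(s)}\|\xi(t+1)\|$ times an almost surely finite random variable. The first term goes to $0$ by \eqref{iv} (applied with $2K$ in place of $K$); the second goes to $0$ almost surely because $\tfrac{a(t)}{b(t)}e^{K\sum a(s)}\to 0$ deterministically while the i.i.d.\ Gaussian vectors $\xi(t+1)$ grow only subpolynomially --- concretely, \eqref{iii}--\eqref{iv} yield $\sum_t\big(\tfrac{a(t)}{b(t)}e^{K\sum a(s)}\big)^2<\infty$ (for the admissible gains exhibited after \eqref{cond1} this quantity is $\lesssim(\log t)^{c}/t^{2/3}$), so a Borel--Cantelli argument using the tails of $\xi$ settles it. Thus $\|\Delta f(t+1)\|=o(b(t))$ almost surely, as required. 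The claim for $\Psi_u$ is obtained identically, with the scalar $g_u(t)=X_u(t)X'_u(t)=\|X_u(t)\|^2$ --- again quadratic in $X$ --- in place of $f_u(t)$.

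The step I expect to be the real obstacle is this last decay estimate: Lemma~\ref{lemme_dist1} controls $\|X(t)\|$ only by the \emph{growing} envelope $e^{K\sum a(s)}$, so the driving increments $\Delta f(t)$ are a priori neither summable nor even bounded, and it is precisely the strong form of time-scale separation in \eqref{iv} --- introduced for exactly this reason --- that lets the $o(b(t))$ bound, and hence the whole tracking property, go through. Everything else (the consensus decomposition, the contraction on $e^{\perp}$, the comparison against Lemma~\ref{lemma_cv0}, and the passage from $f$ to $g$) is routine.
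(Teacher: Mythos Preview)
Your overall architecture matches the paper's proof closely: write the gossip update with the graph Laplacian, observe that the component along $e$ is conserved (the paper uses this too, implicitly assuming the same initialisation), project onto $e^{\perp}$, bound the driving increments $f(t{+}1)-f(t)$ via the envelope from Lemma~\ref{lemme_dist1}, and conclude with Lemma~\ref{lemma_cv0}. So far the two arguments are essentially identical.

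The one substantive difference is how the noise is handled. The paper does not try to show that the full increment is $o(b(t))$ almost surely. Instead it splits $g(t{+}1)-g(t)$ into a term $r+w$ satisfying $\tfrac{a(t)}{b(t)}z'(r+w)\to 0$ a.s.\ (handled pathwise by Lemma~\ref{lemma_cv0}) and a genuinely random term $s$, linear in $\xi(t{+}1)$, with $\mathbb E[s(t{+}1)\mid\mathcal F_t]=0$; the contribution of $s$ is then treated by the Robbins--Monro stochastic-approximation theorem. The summability hypothesis needed there reduces to $\sum_t a(t)^2 e^{2K\sum_{s\le t}a(s)}<\infty$, which \emph{does} follow from the abstract conditions: write it as $\sum_t b(t)^2\cdot\big(\tfrac{a(t)}{b(t)}e^{K\sum a(s)}\big)^2$ and use \eqref{iv} (bounded factor) together with \eqref{iii}.

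Your route instead asserts $\sum_t\big(\tfrac{a(t)}{b(t)}e^{K\sum a(s)}\big)^2<\infty$ and invokes Borel--Cantelli. This is precisely where the argument is thin: that summability is \emph{not} a consequence of \eqref{i}--\eqref{iv}; from \eqref{iv} you only get that the summand tends to zero, and the rewriting above gives boundedness, not summability. You check it for the exhibited gains $a(t)=1/(t\log t)$, $b(t)=t^{-2/3}$, which is fine, but the lemma is stated under the abstract conditions. The paper's martingale splitting is exactly what sidesteps this: by isolating the mean-zero part, one only needs the weaker (and provable) summability $\sum_t b(t)^2\cdot(\text{bounded})<\infty$. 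Either adopt that splitting, or add as an explicit extra assumption the summability of $\big(\tfrac{a(t)}{b(t)}e^{K\sum a(s)}\big)^2$; otherwise the Borel--Cantelli step, and with it your $o(b(t))$ bound on the noise contribution, does not go through in the stated generality.
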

\begin{proof}
  We shall only consider the case of $\Psi_u(t)$, the other one being entirely similar. Rewrite the update rule (\ref{dist3}) in matrix form as
  \begin{equation}\label{eq_lemme_dist2}
    \Psi(t+1)=(I-b(t)\Lambda)\Psi(t)+g(t+1)-g(t),
  \end{equation}
  where $\Lambda$ is the so-called Laplacian matrix of the overlay graph: $\Lambda_{uu}=|{\mathcal N}_u|$, $\Lambda_{uv}=-\mathbbm 1_{u\sim v}$ for $u\ne v$. Recall that the Laplacian $\Lambda$ is positive semi-definite, with eigenvector $e=(1,\ldots,1)'$ associated to the eigenvalue 0. Also, when the overlay graph is connected, all other eigenvectors are associated with strictly positive eigenvalues $\lambda>0$. 

  We have by definition of $g$:
  \begin{align*}
    g(t+1) - g(t) &= \{X_u(t+1)X_u(t+1)' - X_u(t)X_u(t)'\}_u \\
    &=  \{X_u(t+1)(X_u(t+1)' - X_u(t)')\}_u +  \{(X_u(t+1) - X_u(t)) X_u(t)')\}_u \\
    &= a(t) \{X_u(t+1)[F_u((X,\Phi,\Psi)(t))']\}_u +  \{[F_u((X,\Phi,\Psi)(t))] X_u(t)')\}_u \\
    &+ a(t) \{X_u(t+1) \frac {\xi_u(t+1)'} {Y_u(t)} + \frac {\xi_u(t+1)} {Y_u(t)} X_u(t)'\}_u.
  \end{align*}

  We can rewrite this as
  $
  g(t+1) - g(t) = a(t) (r(t+1)+s(t+1)+w(t+1)),
  $
  where
  \begin{align*}
    r(t+1) &= \{(X_u(t+1) + X_u(t))[F_u((X,\Phi,\Psi)(t))']\}_u, \\
    s(t+1) &= \{[2X_u(t) + a(t)F_u((X,\Phi,\Psi)(t))]\frac {\xi_u(t+1)'} {Y_u(t)}\}_u, \\
    w(t+1) &= a(t) \{ \frac {\|\xi_u(t+1)\|^2} {Y_u^2(t)} \}_u
  \end{align*}

  By Lemma~\ref{lemme_dist1} and the bound on function $F$ therein we have that 
  $$
  \|r(t+1)\| \le K' e^{2K\sum_{s=1}^t a(s)}.
  $$
  Additionally,
  $$
  \mathds E[ s(t+1) | {\cal F}_t ] = 0.
  $$

  Consider $z$ an eigenvector of $\Lambda$ corresponding to a non-zero
  eigenvalue $\lambda > 0$ and define
  \begin{align*}
    \epsilon(t+1) &:= \frac {a(t)} {b(t)} z'[r(t+1)+w(t+1)], \\
    \epsilon'(t+1) &:= \frac {a(t)} {b(t)} z's(t+1).
  \end{align*}
  Then $\lim_{t\to\infty}\epsilon(t) = 0$ almost surely. For the term
  $\frac {a(t)} {b(t)} z'r(t+1)$ the convergence follows from
  condition~\eqref{iv}, while for $\frac {a(t)} {b(t)} z'w(t+1)$ it
  follows from the fact that the $\|\xi_u\|$ have finite variance and
  from condition~\eqref{iii}.

  Denote $\hat z(t)$ the scalar product $z' \Psi(t)$. One then deduces
  from (\ref{eq_lemme_dist2}) the equation
  $$
  \hat z(t+1)=(1-\lambda b(t))\hat z(t)+b(t)( \epsilon(t+1) + \epsilon'(t+1) ).
  $$
  We wish to show that $\hat z(t)$ converges to $0$.  We know that
  the sequence $z(t)$ defined in Lemma~\ref{lemma_cv0} converges to $0$.  
  Consider $\Delta(t) := \hat z(t) - z(t)$. It verifies
  $$
  \Delta(t+1) = (1-\lambda b(t))\Delta(t) + b(t)\epsilon'(t+1),
  $$ 
  with $\epsilon'$ such that $\mathds E[ \epsilon'(t+1) | {\cal F}_t ] = 0$.

  If we manage to show that $\Delta(t)$ converges to $0$, then the
  same follows for $\hat z(t)$. The convergence of $\Delta(t)$ follows
  from Theorem 1.IV.26. (Robbins-Monro) of~\cite{duflo}. We need to
  check that the hypothesis of the latter are verified. We only give
  details for the following condition $\sum_t b(t)^2 \mathds
  E[\|\epsilon'(t+1)\|^2] \le +\infty$, as the others are immediately
  verified.

  A sufficient condition is:
  $$
  \sum_t b(t)^2 \mathds E \left[\underbrace{\left(\frac {a(t)^2}
      {b(t)^2} e^{2K\sum_{s=1}^t a(s)}\right)}_{\to 0}(K''(1+\hat
    M))^2\|\xi(t+1)\|^2\right] < \infty,
  $$ where $\hat M$ is the supremum of the martingale $M(t) =
  \sum_{s=0}^t a(s) (\eta(s)-\bar \eta)$ from Lemma~\ref{lemme_dist1},
  with $\eta(s) = \|\xi(t+1)\|$, and $\bar \eta = \mathds E
  \eta(t+1)$.

  We need that $\mathds E(\hat M^2\|\xi(t+1)\|^2)<\infty$ and that
  $\sum_tb(t)^2 <\infty$ to conclude.

  $$ 
  \mathds E(\hat M^2\|\xi(t+1)\|^2) \le \frac 1 2 [\mathds E \hat
  M^4 + \underbrace{\mathds E \|\xi(t+1)\|^4}_{<\infty}].
  $$

  By Doob's inequality
  $$
  \mathds P(\sup_{s\le t}M(s) \ge x) \le \frac {\mathds E M(t)} {x}.
  $$
  %  Define $\mu(t) := e^{\theta M(t)}$, for some fixed $\theta>0$.
  Then
  \begin{align*}
    \mathds P(\sup_{s\le t}M(s) \ge x) = \mathds P(\sup_{s\le t}e^{\theta M(s)}
    \ge e^{\theta x}) \le  e^{-\theta x} \mathds E [e^{\theta M(t) }] = e^{-\theta x} e^{\sum_{s=0}^t \varphi(\theta a(s))},
  \end{align*}
  where $\varphi(y) := \log \mathds E e^{y(\eta-\bar\eta)}$ (by
  independence). Moreover, $\varphi(y) \approx C y^2$, for a small
  enough $y$ and some constant $C>0$.  Thus, there exists a large
  enough $s^*$ such that $\varphi(\theta a(s)) \le C (\theta a(s))^2$
  for all $s \ge s^*$
  \begin{align*}
    \mathds P(\sup_{s\ge 0}M(s) \ge x) \le e^{-\theta x}
    e^{\sum_{s=0}^{s^*} \varphi(\theta a(s)) + \sum_{s>s^*} C (\theta a(s))^2} = C'e^{-\theta x},
  \end{align*}
  since $\sum_s a(s)^2 < \infty$. Hence,
  \begin{align*}
    \mathds E \hat M^4 = \mathds E(\sup_{s\ge 0} M(s)^4) =
    \int_{0}^\infty \mathds P(\hat M^4 > t ) \ dt \le A
    \int_{0}^\infty e^{-\theta t^{1/4}} = \frac {24A} {\theta^4} <
    \infty,
  \end{align*}
  for some constant $A \ge 0$.

  We thus obtain that when decomposing vector $\Psi(t)$ according to
  the eigenbasis of matrix $\Lambda$, one finds vanishing coordinates
  except along eigenvector $e$. Since the scalar product $e'\Psi(t)$
  is always equal to $\sum_u g_u(t)$, by (\ref{eq_lemme_dist2}) the
  announced result follows.
\end{proof}

To conclude the proof of the theorem, note now that, by the previous
lemma, and our specific choice of gain parameters $Y_u(t)$ in
(\ref{dist2}), for large enough $t$, Equation~(\ref{dist1}) reads in
vector form
\begin{multline}
  %\begin{array}{ll}
  \label{perturbed} X(t+1)-X(t) =\frac{a(t)}{\max(1,\sum_{k,v}X_{vk}^2(t)+o(1))}[AX(t)
    -X(t)X'(t)AX(t) \\
    +o(\|X(t)\|)+\xi(t+1)].
  %\end{array}
\end{multline}
As is readily seen, this coincides with the update rule
(\ref{borkar-meyn}), except for the $o(\cdot)$ terms. The analysis of
\cite{bmoja} for establishing convergence of (\ref{borkar-meyn}) also
applies in fact to its perturbed version (\ref{perturbed}), and
Theorem~\ref{thm11} follows.  \hfill$\Box$

\end{document}